\documentclass[11pt]{article}

\usepackage[margin=1in]{geometry}

\usepackage{amsmath,amssymb,amsfonts,amsthm}
\usepackage{bm}
\usepackage{bbm}
\usepackage{mathrsfs}

\usepackage{graphicx}
\usepackage{grffile}
\usepackage{booktabs}
\usepackage{subcaption}

\usepackage{authblk}

\usepackage{algorithm}
\usepackage{algpseudocode}

\usepackage[dvipsnames]{xcolor}
\usepackage{enumerate}

\usepackage[round,authoryear]{natbib}

\setcitestyle{aysep={,},yysep={;},notesep={, }}

\usepackage[colorlinks=true,
            linkcolor=MidnightBlue,
            citecolor=MidnightBlue,
            urlcolor=MidnightBlue]{hyperref}

\theoremstyle{plain}

\newtheorem{theorem}{Theorem}[section]
\newtheorem{lemma}[theorem]{Lemma}
\newtheorem{proposition}[theorem]{Proposition}
\newtheorem{corollary}[theorem]{Corollary}
\theoremstyle{definition}
\newtheorem{definition}[theorem]{Definition}
\newtheorem{example}[theorem]{Example}
\newtheorem*{remark}{Remark}

\newcommand{\be}{\begin{equation}}
\newcommand{\ee}{\end{equation}}
\newcommand{\ben}{\begin{enumerate}}
\newcommand{\een}{\end{enumerate}}
\newcommand{\bea}{\begin{eqnarray}}
\newcommand{\eea}{\end{eqnarray}}
\newcommand{\bean}{\begin{eqnarray*}}
\newcommand{\eean}{\end{eqnarray*}}

\newcommand{\one}[1]{\mathbf{1}_{\{#1\}}}

\newcommand{\E}{\mathbb{E}}

\DeclareMathOperator{\diag}{diag}

\begin{document}

\title{\textbf{Exact Recovery in the Data Block Model}}

\author[1]{Amir R.~Asadi\thanks{asadi@statslab.cam.ac.uk}}
\author[2]{Akbar Davoodi}
\author[3]{Ramin Javadi\thanks{rjavadi@iut.ac.ir}}
\author[4]{Farzad Parvaresh\thanks{f.parvaresh@eng.ui.ac.ir}}

\affil[1]{Statistical Laboratory, Centre for Mathematical Sciences, University of Cambridge, Cambridge, United Kingdom}
\affil[3]{Department of Mathematical Sciences, Isfahan University of Technology, Isfahan, Iran}
\affil[4]{Department of Electrical Engineering, Faculty of Engineering, University of Isfahan, Isfahan, Iran}

\date{February 5, 2026}

\maketitle

\begin{abstract}
Community detection in networks is a fundamental problem in machine learning and statistical inference, with applications in social networks, biological systems, and communication networks. The stochastic block model (SBM) serves as a canonical framework for studying community structure, and {exact recovery}, identifying the true communities with high probability, is a central theoretical question. While classical results characterize the phase transition for exact recovery based solely on graph connectivity, many real-world networks contain additional data, such as node attributes or labels. In this work, we study exact recovery in the Data Block Model (DBM), an SBM augmented with node-associated data, as formalized by Asadi, Abbe, and Verd\'{u} (2017). We introduce the Chernoff--TV divergence and use it to characterize a sharp exact recovery threshold for the DBM. We further provide an efficient algorithm that achieves this threshold, along with a matching converse result showing impossibility below the threshold. 
Finally, simulations validate our findings and demonstrate the benefits of incorporating vertex data as side information in community detection.
\end{abstract}

\section{Introduction}
\subsection{Motivation and Context}
Community detection and clustering are fundamental problems in machine learning with applications in various domains. The goal is to uncover hidden community structures within a network by analyzing patterns of interactions. Many real-world datasets can be represented as networks with community structures, including social networks, biological networks, and collaboration networks \citep{girvan2002community, backstrom2006group}. In these settings, communities naturally partition data into groups of nodes that share common properties, where nodes with similar characteristics are more likely to belong to the same group.

Community detection supports numerous tasks, including recommendation systems \citep{gasparetti2021community}, fraud detection \citep{sarma2020bank}, and biological network analysis \citep{rahiminejad2019topological}, and is often treated as an unsupervised learning problem \citep{abbe017}.
Since the 1980s, research in this area has expanded significantly, leading to a wide range of models and algorithms developed within machine learning, computer science, network science, social science, and statistical physics.

Block models are a family of random graphs with planted clusters \citep{goldenberg2010survey}. The main canonical model is the stochastic block model (SBM), a generative model for random graphs with planted community structure and a widely studied framework for community detection. SBMs provide a theoretical foundation for studying community detection by modeling graphs where nodes belong to latent communities and edges are generated based on intra- and inter-community connection probabilities. Typically, the probability of an edge existing between two nodes depends on their community membership, with edges often being denser within communities than between them. First introduced by \cite{holland1983stochastic} in the context of social network analysis, the SBM has since become a fundamental tool in statistics, machine learning, and network science. It serves as an important benchmark for evaluating community detection algorithms, as it provides a ground truth for the community structure, enabling rigorous theoretical analysis. Despite its simplicity, the SBM captures key bottlenecks in community detection and has inspired powerful algorithms. Similar to the discrete memoryless channel in information theory, the SBM serves as a tractable yet insightful model, guiding the development of methods that extend to more complex real-world graphs. Its widespread adoption in different fields is evidenced by the variety of names and interpretations it has acquired: the planted partition model in theoretical computer science \citep{mcsherry2001spectral}, the inhomogeneous random graph in mathematics \citep{bollobas2007phase}, the planted spin-glass model in physics \citep{decelle2011asymptotic}, and the low-rank random matrix model in random matrix theory \citep{mcsherry2001spectral}.
These diverse perspectives highlight the versatility of the model and its central role in community detection research.

Traditional SBMs primarily focus on network connectivity, yet real-world networks often come enriched with additional information, such as node attributes, labels, or features. For instance, on social media networks, individuals not only form friendships but also generate posts and display characteristics like location, profession, or interests. Similarly, citation networks classify papers by research topics, and biological networks group proteins based on functional properties. In these examples, community structure emerges not only from connectivity but also from these shared external features. Thus, community effects influence both the network's structure and the underlying data distributions associated with its nodes. This interplay between network topology and node-specific data suggests that incorporating vertex data can significantly enhance community detection. 

A natural way to model this setting is through the \emph{Data Block Model} (DBM), an extension of the SBM that integrates additional data as node attributes. First introduced by \citet{abbe2016graph} and later formalized by \citet{asadi2017compressing}, the DBM assumes that nodes belonging to the same community share a common prior distribution for their attributes. \citet{asadi2017compressing} initiated the study of how such vertex data influences the exact-recovery threshold by extending the CH-divergence framework of \citet{abbe015} using Chernoff-information-based arguments. The DBM provides a principled foundation for analyzing the role of side information in exact recovery, connecting network-based and data-driven approaches to clustering. Leveraging node-specific data as side information can be essential for exact recovery, particularly in regimes where the graph structure alone is insufficient. By incorporating both connectivity patterns and node features, the DBM enables a more refined understanding of the conditions under which communities can be correctly identified. We next review further related work along these lines.

\subsection{Further Related Work}
A substantial body of literature studies the fundamental limits and algorithms for community detection in SBMs. For the two-community symmetric SBM, exact recovery is impossible in the constant-degree regime but exhibits a sharp phase transition in the logarithmic-degree regime; moreover, this threshold is efficiently achievable \citep{ABH16,mossel2016consistency}. For the general SBM, Abbe and Sandon introduced the CH-divergence and established a necessary and sufficient threshold for exact recovery together with near-linear-time algorithms that attain it \citep{abbe015,abbe2015recovering}; see also the survey \citep{abbe017}.

Side information attached to vertices has been explored in several complementary models. For the balanced binary SBM, \citet{SN18} analyzed noisy labels (error probability $\alpha$) and partially revealed labels (known on a fraction $1-\epsilon$ of nodes), identifying when and by how much side information tightens the exact-recovery threshold. This line was extended to continuous-valued side information in \citet{SN19}, closing the gap between necessary and sufficient conditions in \citet{SN18}. Sharp thresholds have also been obtained when each node observes $O(\log n)$ i.i.d.\ attribute samples \citep{SZH21}, along with an efficient semidefinite programming (SDP) algorithm that achieves the limit. Consistently, \citet{ESN21} showed that an SDP algorithm can match the maximum-likelihood exact-recovery threshold across several side-information models (noisy labels, partially revealed labels, and multiple features per node).

A closely related line of work studies \emph{contextual stochastic block models}, in which node covariates are observed alongside the graph and jointly inform community detection. From a modeling viewpoint, contextual SBMs can be regarded as special cases of the data block model, obtained by specifying particular families of node-attribute distributions and their dependence on the latent community labels. \cite{deshpande2018contextual} introduced a contextual SBM in which both edge formation and node features are driven by the underlying communities, and identified regimes where combining graph and covariate information enables recovery beyond what is possible from the graph alone. Building on this framework, \cite{braun2022iterative} proposed an iterative clustering algorithm for contextual SBMs and established optimality guarantees under suitable assumptions on the graph sparsity and feature distributions. Together, these works highlight that side information in the form of node features can substantially affect both recovery thresholds and algorithmic performance.

Related node-attributed SBM models have also been studied from an information-theoretic perspective. In particular, \citet{dreveton2023exactAttributedSBM} investigate exact recovery for node-attributed SBMs and characterize a phase transition based on a divergence criterion, together with a Bregman hard clustering viewpoint. In this paper, we revisit this regime in the logarithmic-degree setting and provide a refined characterization based on the Chernoff--TV divergence. In this restricted setting, made precise in Subsection~\ref{subsec:DBM_threshold}, we show that the condition in \citep{dreveton2023exactAttributedSBM} is not necessary for exact recovery as stated, and we derive a corrected threshold.

Beyond these settings, other works examine different forms of auxiliary information and inference goals. Label propagation under partially revealed labels has been analyzed in a general SBM \citep{SSDGMP20}. In \citet{EN20a}, a binary SBM with two latent label variables is considered, where edge probabilities depend on both latent factors; SDP-based guarantees for recovery are provided. In \citet{YLS18}, the symmetric SBM with $k$ communities is considered in which each node is endowed with a binary $m$-dimensional vector. Here, the side information is a noisy and partially observed version of the corresponding vectors. The task is to recover $k$ communities as well as the feature vectors of the nodes, and the paper characterizes the information-theoretic and computational limits of the problem. Hypergraph analogues of these problems have also been developed. For uniform hypergraph SBMs, \citet{ZT21} established exact-recovery thresholds (via a generalized CH-type criterion), while \citet{ZZY21} derived sharp bounds when node labels are noisy or only partially revealed.

\subsection{Our Contributions}
The main contributions of this work are as follows:

\begin{enumerate}
    \item We introduce the Chernoff--TV divergence, using it to give a sharp characterization of the exact recovery threshold in the Data Block Model (DBM). The criterion applies to an arbitrary number of communities, recovers the classical SBM threshold when attributes are uninformative, and reduces to a data-only limit when the graph is uninformative.
    \item We design a polynomial-time algorithm that jointly leverages edge and node data to achieve this Chernoff--TV threshold, establishing computational feasibility at the information-theoretic limit.
    \item We prove a matching converse: below the Chernoff--TV threshold, exact recovery is information-theoretically impossible, even with unbounded computation.
    \item We clarify the relationship between our threshold and existing criteria for exact recovery in node-attributed SBMs, and in the setting of Subsection~\ref{subsec:DBM_threshold} we provide a corrected statement of a previously claimed necessity direction in \citep{dreveton2023exactAttributedSBM}.
    \item We validate the theory with experiments on synthetic datasets, which exhibit sharp empirical phase transitions and consistent gains from incorporating node attributes.
\end{enumerate}

\subsection{Paper Organization}
This paper is structured as follows: In Section~\ref{sec:model} we introduce the Data Block Model and the basic definitions. Section~\ref{sec:prelim} collects preliminary results, including the CH-divergence and related tools. In Section~\ref{sec:main} we define the Chernoff--TV divergence, state our sharp threshold for exact recovery in the Data Block Model, and present a two-stage recovery algorithm together with its achievability and converse analysis. In Section~\ref{sec:simulation}, we explore the experimental setup, present the results, and discuss their implications. Finally, Section~\ref{sec:conclusion} summarizes the key findings of the paper and suggests avenues for future research.
\subsection{Notation} 
We use the following notation throughout the paper. For any positive integer $k$, we write $[k]$ to denote the set $\{1, 2, \ldots, k\}$. Given a matrix $A$, we denote its $r$-th column by $A_r$. The notation $\mathcal{P}_{\bm{\mu}}$ refers to the multivariate Poisson distribution with independent entries and mean vector $\bm{\mu}$. For a vector $v \in \mathbb{R}^n$, we write $\diag(v)$ to denote the $n \times n$ diagonal matrix with entries of $v$ on its diagonal. We use $\mathbb{R}_+$ and $\mathbb{Z}_+$ to denote the set of nonnegative real numbers and nonnegative integers, respectively. $P\ll Q$ indicates that the probability distribution $P$ is absolutely continuous with respect to $Q$.

\section{Models and Definitions}\label{sec:model} In this section, we introduce the models and provide some basic definitions.
\begin{definition}
    Let $n\in\mathbb{N}$ be the number of vertices and $k\in\mathbb{N}$ the number of communities.  Fix
\(
P= (p_1,\dots,p_k),
\) {a probability mass function on } $[k]$ called the prior distribution
and let 
\[
\mathbf{W} = [W_{ab}]_{1\le a,b\le k}\in[0,1]^{k\times k}
\]
be a symmetric matrix of edge-probabilities.

{The stochastic block model} $(G_n,X^n)$ is defined as follows. The community labels
  $X^n = (X_1,\dots,X_n)$ are i.i.d.\ random variables such that 
  \[
    \mathbb{P}(X_i = a) = p_a,\quad a\in[k].
  \]
  The random graph
  $G_n = (Y_{ij})_{1\le i<j\le n}$ is defined on the vertex set 
  $\mathcal V_n = \{v_1,\dots,v_n\}$, where each
  $Y_{ij}\in\{0,1\}$ indicates the absence ($0$) or presence ($1$) of an edge between vertices $v_i$ and $v_j$.
  Conditioned on $X^n$, the edge variables $\{Y_{ij}\}$ are independent and satisfy
  \[
    \mathbb{P}\bigl(Y_{ij}=1 \mid X_i = a, X_j = b\bigr)
    =
    W_{ab}, 
    \quad
    1\le i<j\le n,\ \ a,b\in[k].
  \]
  We write 
\[(G_n,X^n)\sim \mathrm{SBM}(n,k,P,\mathbf{W})\] 
to indicate that $(G_n,X^n)$ follows an SBM with $n$ vertices, $k$ communities, prior distribution $P$, and edge-probability matrix $\mathbf{W}$.
\end{definition}
In this paper, we focus on the logarithmic degree regime, in which the expected degree is $\Theta(\log n)$. Accordingly, we assume that for each $n$ the edge-probability matrix satisfies $\mathbf{W}^{(n)}=({\log n}/{n})\,\mathbf{Q}$ for some fixed symmetric matrix $\mathbf{Q}\in \mathbb{R}_+^{k\times k}$.

\begin{definition}
{The data block model (DBM)} is defined by the triple $(G_n,X^n,U^n)$. Here, $(G_n, X^n)$ is a stochastic block model, referred to as the \emph{underlying SBM}, and  \[U^n:=(U^{(1)},U^{(2)},...,U^{(n)})\] denotes the collection of data files attached to the vertices, where $U^{(i)}$ is the data file located at vertex $i$, for $i=1,2,...,n$. The distribution of $U^{(i)}$ depends on the community to which vertex $i$ belongs. 

We assume that $(G_n,X^n,U^n)$ satisfies the Markov chain $G_n\leftrightarrow X^n\leftrightarrow U^n$, meaning that, conditional on the community labels $X^n$, the graph $G_n$ and the data $U^n$ are independent. Moreover, conditional on $X^n$, the data files $\{U^{(i)}\}_{i=1}^n$ are independent and satisfy $U^{(i)}\mid\{X_i=x\}\sim P_{U|X}(\cdot\mid x)$ for $x\in[k]$. The joint distribution of $(G_n,X^n,U^n)$ is therefore given by  

\begin{align*}
	&\mathbb{P}( G_n=g_n, X^n=x^n, U^n=u^n)\\
    &=\mathbb{P}\bigl(X^n=x^n\bigr) \mathbb{P}\bigl(G_n=g_n\mid X^n=x^n\bigr) \mathbb{P}\bigl(U^n=u^n\mid X^n=x^n\bigr)\\
	&=\prod_{i=1}^nP(x_i)\prod_{1\leq i<j\leq n}\bigl((W_{x_ix_j})^{y_{ij}}(1-W_{x_ix_j})^{1-y_{ij}}\bigr) \prod_{i=1}^nP_{U|X}\bigl(u^{(i)}\mid x_i\bigr).
\end{align*}
We refer to $P_{U\mid X}$ as the \emph{data--community random transformation}. We write
\[(G_n,X^n, U^n)\sim \mathrm{DBM}(n,k, P ,\mathbf{W},P_{U|X})\]
to indicate that $(G_n,X^n,U^n)$ follows a DBM with $n$ vertices, $k$ communities, prior distribution $P$, edge--probability matrix $\mathbf{W}$ and data--community random transformation $P_{U|X}$.
\end{definition}

Consider the following example of a DBM:
\begin{example}\normalfont
	Figure~\ref{fig:DBM} illustrates a realization of a data block model with two communities, corresponding to individuals living in North America and Europe.
Each vertex represents a person, each edge indicates a friendship between two individuals, and each vertex is associated with a data file recording that person's favorite sport.

The model exhibits structure in both the graph and the data.
The distribution of sports differs across the two communities, and the graph displays a higher density of edges within communities than between them.
This example highlights how community membership influences both connectivity patterns and node-level attributes in a DBM.
\begin{figure}
	\centering
	\includegraphics[width=1\textwidth]{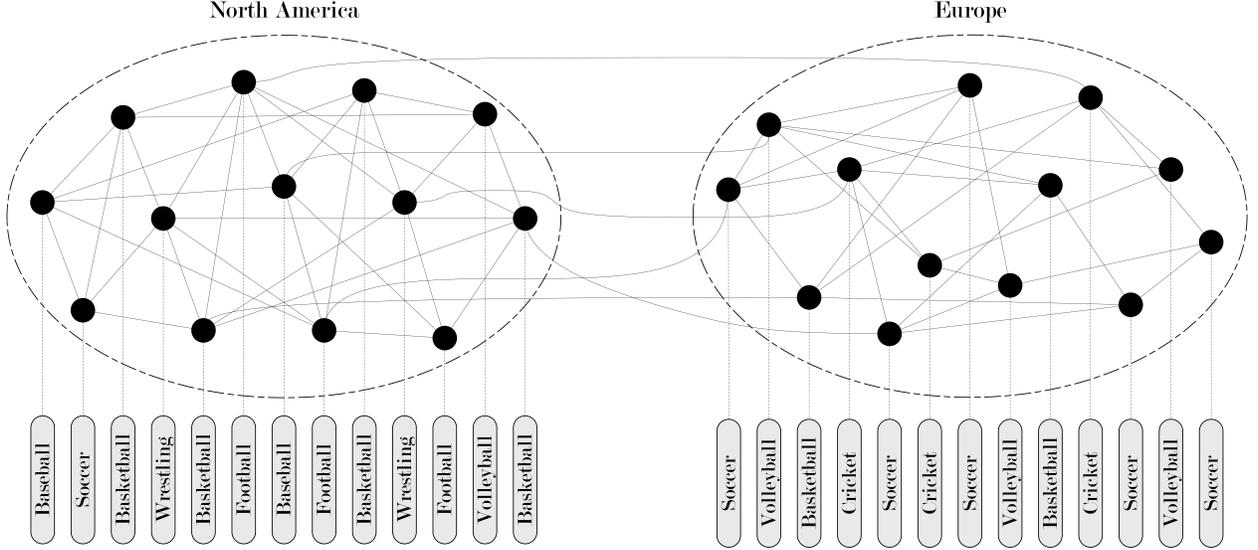}
	\caption{Example of a DBM}\label{fig:DBM}
\end{figure}

\end{example}

We define the solvability of almost exact recovery and exact recovery for a sequence of SBMs as follows:

\begin{definition}
	Let $P$ be a prior distribution on $[k]$ and $\{\mathbf{W}^{(n)}\}_{n=1}^{\infty}$ be a sequence of $k\times k$ edge probability matrices. For each $n\geq 1$, let $(G_n, X^n)\sim \mathrm{SBM}(n,k,P,\mathbf{W}^{(n)})$. The following recoveries are solvable for the sequence $\{(G_n, X^n)\}_{n=1}^{\infty}$ if for each $n\geq 1$ there exists an algorithm which takes $G_n$ as input and outputs $\hat{X}^n=\hat{X}^n(G_n)$ such that
	\begin{itemize}
		\item Almost exact recovery: 
		\[
		\mathbb{P}\left[\max_{\pi\in \text{Sym}(k)}\frac{1}{n}\sum_{i=1}^n \one{\hat{X}_i=\pi(X_i)} \ge 1-o(1)\right]=1-o(1).
		\]
		\item Exact recovery:  
		\[
		\mathbb{P}\left[\exists\,\pi\in \text{Sym}(k):\ \hat{X}^n=\pi(X^n)\right]=1-o(1).
		\]
	\end{itemize}
	Here, $\text{Sym}(k)$ denotes the symmetric group on $[k]$ and $\pi(X^n):=(\pi(X_1),\dots,\pi(X_n))$.
\end{definition}

Similarly, we define solvability of almost exact recovery and exact recovery for a sequence of DBMs as follows:

\begin{definition}\label{def:ER_almostER}
	Let $P$ be a prior distribution on $[k]$, $\{\mathbf{W}^{(n)}\}_{n=1}^{\infty}$ be a sequence of $k\times k$ edge probability matrices, and let $\{P^{(n)}_{U|X}\}_{n=1}^{\infty}$ be a sequence of data--community random transformations. For each $n\geq 1$, let $(G_n, X^n, U^n)\sim \mathrm{DBM}(n,k,P,\mathbf{W}^{(n)}, P^{(n)}_{U|X})$. The following recoveries are solvable for the sequence $\{(G_n, X^n, U^n)\}_{n=1}^{\infty}$ if for each $n\geq 1$ there exists an algorithm which takes $G_n$ and $U^n$ as input and outputs $\hat{X}^n=\hat{X}^n(G_n, U^n)$ such that
	\begin{itemize}
		\item Almost exact recovery: 
		\[
		\mathbb{P}\left[\max_{\pi\in \text{Sym}(k)}\frac{1}{n}\sum_{i=1}^n \one{\hat{X}_i=\pi(X_i)} \ge 1-o(1)\right]=1-o(1).
		\]
		\item Exact recovery:  
		\[
		\mathbb{P}\left[\exists\,\pi\in \text{Sym}(k):\ \hat{X}^n=\pi(X^n)\right]=1-o(1).
		\]
	\end{itemize}
	Here, $\text{Sym}(k)$ denotes the symmetric group on $[k]$ and $\pi(X^n):=(\pi(X_1),\dots,\pi(X_n))$.
\end{definition}

\section{Preliminaries}\label{sec:prelim}

This section introduces the main information-theoretic quantities that underpin our analysis.
We begin with the Chernoff--Hellinger (CH) divergence, which plays a central role in characterizing exact recovery thresholds for stochastic block models in the logarithmic-degree regime.

\begin{definition}[Chernoff--Hellinger divergence]
Let $R$ and $S$ be positive measures defined on a finite set $\mathcal{X}$ (not necessarily probability measures).
The Chernoff--Hellinger (CH) divergence between $R$ and $S$ is defined as
\begin{equation*}\label{CH}
	D_+(R\|S)
	:= \max_{t\in[0,1]} \sum_{x\in \mathcal{X}} f_t\!\left(\frac{R(x)}{S(x)}\right) S(x),
\end{equation*}
where $f_t(y) := 1 - t + ty - y^t$.
\end{definition}

The terminology \emph{Chernoff--Hellinger divergence} was introduced in \citet{abbe015} to emphasize its connection to both the Chernoff information and the Hellinger divergence.
Its importance stems from the fact that it provides an explicit and sharp characterization of the exact recovery threshold for SBMs in the logarithmic-degree regime, as formalized by the following theorem.

\begin{theorem}[\cite{abbe015}]\label{AS15}
Let $P$ be a prior distribution on $[k]$, and let $\mathbf{Q}\in \mathbb{R}_+^{k\times k}$ be a symmetric matrix.
For each $n\ge 1$, let
\[
(G_n,X^n)\sim \mathrm{SBM}\!\left(n,k,P,\tfrac{\log n}{n}\mathbf{Q}\right).
\]
Exact recovery for the sequence $\{(G_n,X^n)\}_{n\ge1}$ is achievable (and efficiently so) if
\begin{equation*}
	D_+\!\left((\diag(P)\mathbf{Q})_i \,\middle\|\, (\diag(P)\mathbf{Q})_j\right) > 1,
\end{equation*}
for all $i,j\in[k]$ with $i\neq j$.
Conversely, exact recovery is impossible if there exist $i\neq j$ such that
\[
	D_+\!\left((\diag(P)\mathbf{Q})_i \,\middle\|\, (\diag(P)\mathbf{Q})_j\right) < 1,
\]
where $(\diag(P)\mathbf{Q})_i$ denotes the $i$-th column of $\diag(P)\mathbf{Q}$.
\end{theorem}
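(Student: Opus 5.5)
We prove both directions of Theorem~\ref{AS15} by reducing exact recovery to a collection of local hypothesis tests. Each test asks whether a single vertex belongs to community $i$ or $j$, given the labels of all other vertices. In the logarithmic-degree regime, the number of neighbours of a vertex $v$ in each community $c$ is asymptotically Poisson with mean $\log n\,(\diag(P)\mathbf{Q})_{c x_v}$. Here $x_v$ denotes the community of $v$. So, conditionally on $X_v=i$, the neighbourhood profile $d(v)\in\mathbb{Z}_+^k$ is approximately distributed as $\mathcal{P}_{\log n\,(\diag(P)\mathbf{Q})_i}$.

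\textbf{Key lemma.} The core step is a Poisson testing lemma: for fixed $\mu,\nu\in\mathbb{R}_+^k$, the minimal error of testing $\mathcal{P}_{\log n\,\mu}$ against $\mathcal{P}_{\log n\,\nu}$ with constant priors is
\[
n^{-D_+(\mu\|\nu)+o(1)}.
\]
To prove it, note that for the product Poisson measures the Chernoff exponent is
\[
-\log\sum_d \mathcal{P}_{\log n\,\mu}(d)^t\,\mathcal{P}_{\log n\,\nu}(d)^{1-t}
=\log n\sum_x\bigl(t\mu_x+(1-t)\nu_x-\mu_x^t\nu_x^{1-t}\bigr),
\]
which equals $\log n\sum_x f_t(\mu_x/\nu_x)\nu_x$. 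The Chernoff bound gives the upper bound after optimizing over $t$. The matching lower bound comes from a tilting or local-CLT argument: we restrict attention to profiles $d$ near the tilted mean $\log n\,\mu^t\nu^{1-t}$, where the two likelihoods are comparable.

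\textbf{Achievability.} We use a two-stage algorithm.
\begin{enumerate}
\item Run almost exact recovery on a sparsified subgraph. For example, split the edges randomly into a graph with $\log\log n$-scale degree and the remainder; then use spectral or degree-profiling methods to obtain a labelling with $o(n)$ errors.
\item For each vertex, compute its neighbour counts in the remaining graph with respect to the stage-one labels. Reassign the vertex to the class $c$ maximizing the Poisson likelihood $\mathcal{P}_{\log n(\diag(P)\mathbf{Q})_c}(d)$.
\end{enumerate}
Because stage one misclassifies only $o(n)$ vertices, the perturbation to each neighbour count is $o(\log n)$ with high probability. This only shifts exponents by $o(1)$, which a robustness version of the lemma absorbs. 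A union bound then gives total error at most
\[
n\cdot\sum_{i\ne j} n^{-D_+((\diag(P)\mathbf{Q})_i\|(\diag(P)\mathbf{Q})_j)+o(1)}=o(1),
\]
since every divergence is strictly larger than $1$.

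\textbf{Converse.} Suppose $D_+<1$ for some pair $i\neq j$. We show that, with high probability, there exist a vertex of community $i$ and a vertex of community $j$ whose profiles look more like the other community. Swapping these two vertices would then increase or tie the posterior, so even MAP fails. Each vertex in community $i$ is "bad" with probability $n^{-D_++o(1)}\gg 1/n$. We take a set $S$ of $n/\log^3 n$ vertices, which with high probability has few internal edges, and condition on everything outside $S$; the bad events inside $S$ are then nearly independent. A second-moment argument shows that some bad vertex exists in each of the two communities.

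\textbf{Main obstacle.} The hardest step is the lower bound in the Poisson lemma together with its robustness to $o(\log n)$ perturbations. This is needed for the near-optimality of the converse and for absorbing stage-one errors, and I would handle it via an explicit tilted-measure local estimate.
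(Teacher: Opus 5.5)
The paper does not prove this theorem; it quotes it from Abbe--Sandon. Your outline nonetheless follows the same route the paper uses for its DBM generalization in Theorems~\ref{thm:main_ach} and~\ref{thm:main_conv}, which collapse to this statement when the data are uninformative: edge splitting, almost exact recovery on one part, per-vertex Poisson MAP on the other bounded via $\min\{x,y\}\le x^{\lambda}y^{1-\lambda}$, and a converse built on the tilted profile $\log n\,\mu^{t}\nu^{1-t}$ plus a decoupled random set of about $n/\log^3 n$ vertices. There are minor variations: the paper phrases the last step as a genie-aided per-vertex Bayes error rather than your swap, and it uses a constant split fraction $\gamma$, paying $\gamma\max D_+$ in the exponent, rather than your vanishing one.

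The proposal is correct in outline, with two details to make explicit. First, stage two must align the stage-one labels with the indexing of $\mathbf{Q}$, for example by estimating $P,\mathbf{Q}$ from the stage-one partition, since almost exact recovery is only up to a permutation. Second, your tilting estimate must lower-bound \emph{both} one-sided errors, because the swap needs bad vertices in community $i$ and in community $j$.
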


Theorem~\ref{AS15} shows that, in the logarithmic-degree regime, SBMs exhibit a sharp phase transition for exact recovery.
This transition is governed by the quantity
\[
\min_{i\neq j} D_+\!\left((\diag(P)\mathbf{Q})_i \,\middle\|\, (\diag(P)\mathbf{Q})_j\right).
\]

We next recall an alternative representation of the CH-divergence, derived in \citet{asadi2017compressing}, which connects it to the Chernoff information between multivariate Poisson distributions.
To this end, we first review two standard divergence measures.

\begin{definition}[Relative entropy]
Let $P_X$ and $Q_X$ be discrete probability distributions defined on the same set $\mathcal{A}$.
The relative entropy (or Kullback--Leibler divergence) between $P_X$ and $Q_X$ is defined as
\[
D(P_X\|Q_X)
= \E\!\left[\log\!\left(\frac{P_X(X)}{Q_X(X)}\right)\right],
\]
where $X\sim P_X$, provided that $P_X\ll Q_X$; otherwise, $D(P_X\|Q_X)=\infty$.
\end{definition}

\begin{definition}[Chernoff information {\citep{chernoff1952measure}}]
Let $P_1$ and $P_2$ be discrete probability distributions on a finite or countable set $\mathcal{X}$.
The Chernoff information between $P_1$ and $P_2$ is defined as
\begin{equation}\label{Chernoff}
	C(P_1\|P_2)
	:= -\log\Biggl(\min_{\lambda\in[0,1]}
	\sum_{x\in\mathcal{X}} P_1(x)^{\lambda} P_2(x)^{1-\lambda}\Biggr).
\end{equation}
\end{definition}

The following proposition establishes a key identity between the CH-divergence and Chernoff information for multivariate Poisson distributions with independent coordinates.

\begin{proposition}[CH-divergence as Chernoff information \citep{asadi2017compressing}]\label{CHChernoff}
Let $\mathbf{a},\mathbf{b}\in\mathbb{R}_+^n$.
Then
\[
D_+(\mathbf{a}\|\mathbf{b})
= C\!\left(\mathcal{P}_{\mathbf{a}} \middle\| \mathcal{P}_{\mathbf{b}}\right),
\]
where $\mathcal{P}_{\mathbf{a}}$ denotes the product Poisson distribution with mean vector $\mathbf{a}$.
\end{proposition}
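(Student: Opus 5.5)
The plan is a direct computation. We write out the Chernoff coefficient of the two product Poisson laws, evaluate the sum in closed form, and compare the result with the definition of $D_+$. Fix $\lambda\in[0,1]$. Because $\mathcal{P}_{\mathbf a}$ and $\mathcal{P}_{\mathbf b}$ have independent coordinates, the sum $\sum_{x\in\mathbb{Z}_+^n}\mathcal{P}_{\mathbf a}(x)^{\lambda}\mathcal{P}_{\mathbf b}(x)^{1-\lambda}$ factorizes into a product over coordinates $r\in[n]$ of one-dimensional sums. For a single coordinate with means $a,b>0$ we have
\[
\sum_{m\ge0}\Bigl(e^{-a}\tfrac{a^m}{m!}\Bigr)^{\lambda}\Bigl(e^{-b}\tfrac{b^m}{m!}\Bigr)^{1-\lambda}
= e^{-\lambda a-(1-\lambda)b}\sum_{m\ge0}\frac{(a^{\lambda}b^{1-\lambda})^m}{m!}
= \exp\bigl(-\lambda a-(1-\lambda)b+a^{\lambda}b^{1-\lambda}\bigr).
\]
Taking the product over coordinates and then $-\log$ gives, for each fixed $\lambda$,
\[
-\log\sum_x \mathcal{P}_{\mathbf a}(x)^{\lambda}\mathcal{P}_{\mathbf b}(x)^{1-\lambda}
=\sum_{r=1}^n\bigl(\lambda a_r+(1-\lambda)b_r-a_r^{\lambda}b_r^{1-\lambda}\bigr).
\]

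Next we match this with the definition of $D_+$. With $t=\lambda$, the summand in $D_+$ is
\[
f_t(a_r/b_r)\,b_r=(1-t)b_r+t a_r-a_r^{t}b_r^{1-t},
\]
which is exactly the summand above. Since $-\log$ is decreasing, minimizing the Chernoff coefficient over $\lambda\in[0,1]$ is the same as maximizing the exponent over $t\in[0,1]$. Hence $C(\mathcal{P}_{\mathbf a}\|\mathcal{P}_{\mathbf b})=\max_{t\in[0,1]}\sum_r f_t(a_r/b_r)b_r=D_+(\mathbf a\|\mathbf b)$. The minimum is attained because the function of $\lambda$ is continuous on a compact interval; in fact the exponent is concave in $\lambda$, since $a^\lambda b^{1-\lambda}$ is convex in $\lambda$.

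The only real obstacle is the boundary bookkeeping when some coordinate has $a_r=0$ or $b_r=0$. There the ratio $a_r/b_r$ in $f_t$ is undefined, and the Poisson law degenerates to a point mass at $0$. I would handle this with the convention that $f_t(a/b)\,b$ means $(1-t)b+ta-a^tb^{1-t}$, taking $0^0=1$ at the endpoints $t\in\{0,1\}$. I would then check directly that the one-coordinate sum still equals $\exp(-\lambda a-(1-\lambda)b+a^\lambda b^{1-\lambda})$. For example, if $b=0$ and $0<\lambda<1$, only $m=0$ survives and the sum is $e^{-\lambda a}$, which agrees with the formula since $a^\lambda 0^{1-\lambda}=0$. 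If $\mathcal{X}$ in the definition of $D_+$ is restricted to positive measures, these cases do not arise. Either way the identity holds coordinatewise, so the proposition follows.
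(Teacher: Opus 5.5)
Your proof is correct and uses the same direct computation that underlies this cited result, which the paper itself repeats in the achievability proof: factorize the Chernoff coefficient over coordinates, sum each Poisson series to $\exp\bigl(-\lambda a_r-(1-\lambda)b_r+a_r^{\lambda}b_r^{1-\lambda}\bigr)$, and identify the exponent with $f_\lambda(a_r/b_r)\,b_r$. One small correction to your attainment remark: with a zero coordinate and the convention $0^0=1$, the exponent can jump at an endpoint (e.g.\ $a_r=0<b_r$ gives $(1-\lambda)b_r$ for $\lambda>0$ but $0$ at $\lambda=0$), so the maximum should be read as a supremum; this does not affect the identity, since both sides are the same function of $\lambda$.
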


Finally, we recall the definition of total variation distance.

\begin{definition}[Total variation]
Let $P$ and $Q$ be probability distributions on a measurable space $\Omega$.
Their total variation distance is defined as
\[
\mathrm{TV}(P,Q)
:= \sup_{A\subseteq \Omega} |P(A)-Q(A)|.
\]
\end{definition}

\section{Main theoretical results}\label{sec:main}

\subsection{Chernoff--TV divergence}

We begin by introducing a new divergence measure, which we term the \emph{Chernoff--TV divergence}.
This quantity captures, in a unified manner, the joint contribution of graph structure and node attributes to the exact recovery threshold in the data block model.

\begin{definition}[Chernoff--TV divergence]
Let $P_X^{(1)}$ and $P_X^{(2)}$ be discrete probability measures on a finite set $\mathcal{X}$, and let
$Q_U^{(1)}$ and $Q_U^{(2)}$ be discrete probability measures on a finite set $\mathcal{U}$.
The Chernoff--TV divergence between the pairs $\bigl(P_X^{(1)},Q_U^{(1)}\bigr)$ and
$\bigl(P_X^{(2)},Q_U^{(2)}\bigr)$ is defined as
\begin{align*}
&D_{\rm{CT}}\Bigl(P^{(1)}_X,Q^{(1)}_U\Big\|P^{(2)}_X,Q^{(2)}_U\Bigr):=\\&-\log \Biggl(\sum_{u \in \mathcal{U}}
		\min_{\lambda_u \in [0,1]}
    \sum_{x\in \mathcal{X}} 
    \Bigl(P_X^{(1)}(x)Q_U^{(1)}(u)\Bigr)^{\lambda_u} \Bigl(P_X^{(2)}(x)Q_U^{(2)}(u)\Bigr)^{1-\lambda_u}\Biggr)
    . 
\end{align*}
\end{definition}
The next proposition shows that $D_{\mathrm{CT}}$ recovers classical information measures in special cases.

\begin{proposition}\label{prop:CT_special_cases}
Let $P_X^{(1)},P_X^{(2)}$ be pmfs on $\mathcal{X}$ and $Q_U^{(1)},Q_U^{(2)}$ be pmfs on $\mathcal{U}$.

\begin{enumerate}[(i)]
\item If $Q_U^{(1)}=Q_U^{(2)}$, then
\[
D_{\mathrm{CT}}\Bigl(P_X^{(1)},Q_U^{(1)} \,\Big\|\, P_X^{(2)},Q_U^{(2)}\Bigr)
= C\Bigl(P_X^{(1)} \,\Big\|\, P_X^{(2)}\Bigr).
\]

\item If $P_X^{(1)}=P_X^{(2)}$, then
\[
D_{\mathrm{CT}}\Bigl(P_X^{(1)},Q_U^{(1)} \,\Big\|\, P_X^{(2)},Q_U^{(2)}\Bigr)
= -\log \Bigl(1-\mathrm{TV}\Bigl(Q_U^{(1)},Q_U^{(2)}\Bigr)\Bigr).
\]
\end{enumerate}
\end{proposition}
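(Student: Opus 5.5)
Both parts follow by direct substitution into the definition of $D_{\mathrm{CT}}$. The idea is to factor the $u$-dependent terms out of the inner sum over $x$, so that each inner minimization over $\lambda_u$ becomes something recognizable. Throughout, write $Q^{(i)}$ for $Q^{(i)}_U$ and $P^{(i)}$ for $P^{(i)}_X$. Also use the convention $0^0=1$ (or equivalently restrict to supports), as is implicit in the definitions of $C$ and $D_{\mathrm{CT}}$.

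For (i), let $Q:=Q_U^{(1)}=Q_U^{(2)}$. For each $u$ the summand becomes
\[
\bigl(P^{(1)}(x)Q(u)\bigr)^{\lambda_u}\bigl(P^{(2)}(x)Q(u)\bigr)^{1-\lambda_u}=Q(u)\,P^{(1)}(x)^{\lambda_u}P^{(2)}(x)^{1-\lambda_u}.
\]
Hence the inner quantity is $Q(u)\min_{\lambda\in[0,1]}\sum_x P^{(1)}(x)^{\lambda}P^{(2)}(x)^{1-\lambda}$. This holds even when $Q(u)=0$, since then both sides vanish. The minimum no longer depends on $u$, so summing over $u$ and using $\sum_u Q(u)=1$ gives exactly the argument of the logarithm in \eqref{Chernoff}. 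Taking $-\log$ yields $C(P^{(1)}\|P^{(2)})$.

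For (ii), let $P:=P_X^{(1)}=P_X^{(2)}$. Now the $x$-part factors out instead:
\[
\sum_x P(x)\,Q^{(1)}(u)^{\lambda_u}Q^{(2)}(u)^{1-\lambda_u}=Q^{(1)}(u)^{\lambda_u}Q^{(2)}(u)^{1-\lambda_u}.
\]
The key step is the elementary fact that for $a,b\ge 0$,
\[
\min_{\lambda\in[0,1]}a^{\lambda}b^{1-\lambda}=\min(a,b).
\]
This holds because $\lambda\mapsto a^\lambda b^{1-\lambda}$ is log-linear, hence monotone on $[0,1]$, so its minimum is attained at an endpoint. When $a$ or $b$ is $0$, the value $0$ is attained at the corresponding endpoint under the $0^0=1$ convention. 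Summing gives $\sum_u\min(Q^{(1)}(u),Q^{(2)}(u))$. The standard identity $\sum_u\min(Q^{(1)}(u),Q^{(2)}(u))=1-\mathrm{TV}(Q^{(1)},Q^{(2)})$ then finishes the proof. It follows from $\min(a,b)=\tfrac{a+b-|a-b|}{2}$ together with $\mathrm{TV}=\tfrac12\sum_u|Q^{(1)}(u)-Q^{(2)}(u)|$ on a finite set, the supremum in the definition being attained at $A=\{u:Q^{(1)}(u)>Q^{(2)}(u)\}$.

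There is no real obstacle here. The only points needing care are the zero-mass conventions, namely that terms with $Q(u)=0$ or $\min(a,b)=0$ are handled consistently, and the short verification of the $\ell_1$ form of total variation.
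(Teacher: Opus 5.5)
Your proposal is correct and follows essentially the same route as the paper: in (i) factor out $Q(u)$ so the inner minimum no longer depends on $u$; in (ii) factor out $\sum_x P(x)=1$, reduce each inner minimum to $\min\{Q^{(1)}(u),Q^{(2)}(u)\}$, and finish with $\sum_u \min\{Q^{(1)}(u),Q^{(2)}(u)\}=1-\mathrm{TV}(Q^{(1)},Q^{(2)})$. Your treatment of (i) is slightly more careful than the paper's write-up, which states that the inner minimum and the sum over $u$ equal $C(P_X^{(1)}\|P_X^{(2)})$ itself rather than $\exp(-C(P_X^{(1)}\|P_X^{(2)}))$.
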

\begin{proof}
(i) If $Q_U^{(1)}=Q_U^{(2)}=:Q_U$, then for any choice of $\{\lambda_u\}_{u\in\mathcal{U}}$,
\[
\sum_{u\in\mathcal{U}}\sum_{x\in\mathcal{X}}
\bigl(P_X^{(1)}(x)Q_U(u)\bigr)^{\lambda_u}
\bigl(P_X^{(2)}(x)Q_U(u)\bigr)^{1-\lambda_u}
=
\sum_{u\in\mathcal{U}} Q_U(u)\sum_{x\in\mathcal{X}} (P_X^{(1)}(x))^{\lambda_u}(P_X^{(2)}(x))^{1-\lambda_u}.
\]
The minimum over $\lambda_u\in[0,1]$ can be taken separately for each $u$, yielding
\[
\min_{\lambda_u\in[0,1]}\sum_{x\in\mathcal{X}} (P_X^{(1)}(x))^{\lambda_u}(P_X^{(2)}(x))^{1-\lambda_u}
=\min_{\lambda\in[0,1]}\sum_{x\in\mathcal{X}} (P_X^{(1)}(x))^{\lambda}(P_X^{(2)}(x))^{1-\lambda}
= C\bigl(P_X^{(1)}\|P_X^{(2)}\bigr),
\]
and therefore the sum over $u$ equals $C(P_X^{(1)}\|P_X^{(2)})$.

(ii) If $P_X^{(1)}=P_X^{(2)}=:P_X$, then for each $u$ and any $\lambda_u\in[0,1]$,
\begin{align*}
	\sum_{x\in\mathcal{X}}
\bigl(P_X(x)Q_U^{(1)}(u)\bigr)^{\lambda_u}
\bigl(P_X(x)Q_U^{(2)}(u)\bigr)^{1-\lambda_u}
&=
\Bigl(Q_U^{(1)}(u)\Bigr)^{\lambda_u}\Bigl(Q_U^{(2)}(u)\Bigr)^{1-\lambda_u}\sum_{x\in\mathcal{X}} P_X(x)\\
&=
\Bigl(Q_U^{(1)}(u)\Bigr)^{\lambda_u}\Bigl(Q_U^{(2)}(u)\Bigr)^{1-\lambda_u}.
\end{align*}
Minimizing over $\lambda_u\in[0,1]$ yields $\min\{Q_U^{(1)}(u),Q_U^{(2)}(u)\}$, hence
\[
\sum_{u\in\mathcal{U}} \min_{\lambda_u\in[0,1]}
\Bigl(Q_U^{(1)}(u)\Bigr)^{\lambda_u}\Bigl(Q_U^{(2)}(u)\Bigr)^{1-\lambda_u}
=
\sum_{u\in\mathcal{U}} \min\{Q_U^{(1)}(u),Q_U^{(2)}(u)\}.
\]
Finally, for discrete distributions,
$\sum_{u}\min\{Q^{(1)}(u),Q^{(2)}(u)\}=1-\mathrm{TV}(Q^{(1)},Q^{(2)})$, yielding the result.
\end{proof}
Proposition~\ref{prop:CT_special_cases} shows that the Chernoff--TV divergence interpolates naturally between Chernoff information and a function of total variation. In our applications, the distribution $P_X$ will correspond to the \emph{graph-based} contribution (arising from the neighborhood statistics induced by the SBM), while $Q_U$ will correspond to the \emph{data} (side-information) contribution carried by node attributes; see the next Subsection for the precise identification.

\begin{lemma}\label{prop:CT_upper_bound}
Let $P_X^{(1)},P_X^{(2)}$ be pmfs on $\mathcal X$ and $Q_U^{(1)},Q_U^{(2)}$ be pmfs on $\mathcal U$.
Then
\[
D_{\mathrm{CT}}\Bigl(P_X^{(1)},Q_U^{(1)}\Big\|P_X^{(2)},Q_U^{(2)}\Bigr)
\le
C\Bigl(P_X^{(1)}\Big\|P_X^{(2)}\Bigr)
-\log\!\Bigl(1-\mathrm{TV}\bigl(Q_U^{(1)},Q_U^{(2)}\bigr)\Bigr).
\]
\end{lemma}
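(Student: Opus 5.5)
The plan is to turn the inequality into a lower bound on the argument of the logarithm. Set
\[
S:=\sum_{u\in\mathcal U}\min_{\lambda_u\in[0,1]}\sum_{x\in\mathcal X}\Bigl(P_X^{(1)}(x)Q_U^{(1)}(u)\Bigr)^{\lambda_u}\Bigl(P_X^{(2)}(x)Q_U^{(2)}(u)\Bigr)^{1-\lambda_u}.
\]
Since $-\log$ is decreasing, the claim is equivalent to
\[
S\ \ge\ e^{-C(P_X^{(1)}\|P_X^{(2)})}\bigl(1-\mathrm{TV}(Q_U^{(1)},Q_U^{(2)})\bigr).
\]
By the definition of Chernoff information, $e^{-C(P_X^{(1)}\|P_X^{(2)})}=\min_{\lambda\in[0,1]}g(\lambda)$ with $g(\lambda):=\sum_x P_X^{(1)}(x)^{\lambda}P_X^{(2)}(x)^{1-\lambda}$. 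As in the proof of Proposition~\ref{prop:CT_special_cases}(ii), $1-\mathrm{TV}(Q_U^{(1)},Q_U^{(2)})=\sum_u\min\{Q_U^{(1)}(u),Q_U^{(2)}(u)\}$.

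Work one $u$ at a time and factor the $u$-terms out of each summand:
\[
\sum_x\bigl(P^{(1)}_X(x)Q^{(1)}_U(u)\bigr)^{\lambda_u}\bigl(P^{(2)}_X(x)Q^{(2)}_U(u)\bigr)^{1-\lambda_u}=Q_U^{(1)}(u)^{\lambda_u}Q_U^{(2)}(u)^{1-\lambda_u}\,g(\lambda_u).
\]
Two elementary pointwise bounds hold for every $\lambda_u\in[0,1]$:
\[
Q_U^{(1)}(u)^{\lambda_u}Q_U^{(2)}(u)^{1-\lambda_u}\ge\min\{Q_U^{(1)}(u),Q_U^{(2)}(u)\},\qquad g(\lambda_u)\ge\min_{\lambda}g(\lambda).
\]
The first is a weighted geometric mean of two numbers, so it is at least their minimum. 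Both factors are nonnegative, so multiplying the two bounds gives a lower bound on the $u$-th summand that does not depend on $\lambda_u$. Hence it also bounds the minimum over $\lambda_u$.

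Summing over $u\in\mathcal U$ then gives
\[
S\ge\Bigl(\min_\lambda g(\lambda)\Bigr)\sum_u\min\{Q_U^{(1)}(u),Q_U^{(2)}(u)\},
\]
and taking $-\log$ of both sides yields the statement.

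There is no real obstacle. The only care needed is the degenerate case where some $Q_U^{(i)}(u)=0$ or $\min_\lambda g=0$. Then the right-hand side is $+\infty$ or the corresponding terms vanish, and the inequality holds trivially under the conventions $0^0=1$ and $-\log 0=\infty$.
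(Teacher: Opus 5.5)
Your proposal is correct and follows the paper's proof essentially step for step. For each $u$ you factor out $Q_U^{(1)}(u)^{\lambda_u}Q_U^{(2)}(u)^{1-\lambda_u}$, bound it below by $\min\{Q_U^{(1)}(u),Q_U^{(2)}(u)\}$ and the remaining sum by $e^{-C(P_X^{(1)}\|P_X^{(2)})}$, then sum over $u$ using $\sum_u\min\{Q_U^{(1)}(u),Q_U^{(2)}(u)\}=1-\mathrm{TV}(Q_U^{(1)},Q_U^{(2)})$ and take $-\log$.
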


\begin{proof}
Fix $u\in\mathcal U$. For any $\lambda\in[0,1]$,
\begin{align*}
\sum_{x\in\mathcal X}
\bigl(P_X^{(1)}(x)Q_U^{(1)}(u)\bigr)^{\lambda}
\bigl(P_X^{(2)}(x)Q_U^{(2)}(u)\bigr)^{1-\lambda}
&=
\bigl(Q_U^{(1)}(u)\bigr)^{\lambda}\bigl(Q_U^{(2)}(u)\bigr)^{1-\lambda}
\sum_{x\in\mathcal X} P_X^{(1)}(x)^\lambda P_X^{(2)}(x)^{1-\lambda}.
\end{align*}
Taking the minimum over $\lambda$ on both sides yields
\begin{align*}
&\min_{\lambda\in[0,1]}\sum_{x\in\mathcal X}
\bigl(P_X^{(1)}(x)Q_U^{(1)}(u)\bigr)^{\lambda}
\bigl(P_X^{(2)}(x)Q_U^{(2)}(u)\bigr)^{1-\lambda}
\\
&\ge
\Bigl(\min_{\lambda\in[0,1]}\sum_{x\in\mathcal X} P_X^{(1)}(x)^\lambda P_X^{(2)}(x)^{1-\lambda}\Bigr)
\min_{\lambda\in[0,1]}\bigl(Q_U^{(1)}(u)\bigr)^{\lambda}\bigl(Q_U^{(2)}(u)\bigr)^{1-\lambda}\\
&=
\Bigl(\min_{\lambda\in[0,1]}\sum_{x\in\mathcal X} P_X^{(1)}(x)^\lambda P_X^{(2)}(x)^{1-\lambda}\Bigr)
\min \Bigl\{Q_U^{(1)}(u),Q_U^{(2)}(u) \Bigr\}.
\end{align*}
Summing over $u$ gives
\[
\sum_{u\in\mathcal U}\min_{\lambda_u\in[0,1]}\sum_{x\in\mathcal X}
\bigl(P_X^{(1)}(x)Q_U^{(1)}(u)\bigr)^{\lambda_u}
\bigl(P_X^{(2)}(x)Q_U^{(2)}(u)\bigr)^{1-\lambda_u}
\ge
A\sum_{u\in\mathcal{U}} \min\{Q_U^{(1)}(u),Q_U^{(2)}(u)\},
\]
where $A:=\min_{\lambda\in[0,1]}\sum_{x\in\mathcal X} P_X^{(1)}(x)^\lambda P_X^{(2)}(x)^{1-\lambda}=\exp(-C(P_X^{(1)}\|P_X^{(2)}))$. Recall that for discrete distributions,
$\sum_{u}\min\{Q^{(1)}(u),Q^{(2)}(u)\}=1-\mathrm{TV}(Q^{(1)},Q^{(2)}).$
Taking logarithms completes the proof.
\end{proof}
\subsection{Exact recovery threshold for the DBM}\label{subsec:DBM_threshold}

Throughout this section, fix $k\in\mathbb{N}$ and a prior distribution $P=(p_1,\ldots,p_k)$ on $[k]$. 
Let $\mathbf{Q}\in\mathbb{R}_+^{k\times k}$ be symmetric and consider the logarithmic-degree regime
\[
\mathbf{W}^{(n)}=\frac{\log n}{n}\mathbf{Q}.
\]
Let $\{\mathcal{U}^{(n)}\}_{n\ge1}$ be a sequence of finite alphabets and let $\{P^{(n)}_{U|X}\}_{n\ge1}$ be a sequence of conditional pmfs such that,
for each $n$ and each $x\in[k]$, $P^{(n)}_{U|X}(\cdot|x)$ is a probability mass function on $\mathcal{U}^{(n)}$.
For each $r\in[k]$, define
\[
\boldsymbol{\mu}_r := (\diag(P)\mathbf{Q})_r \in \mathbb{R}_+^k,
\qquad 
\boldsymbol{\mu}_r^{(n)} := \boldsymbol{\mu}_r \log n,
\]
where $(\diag(P)\mathbf{Q})_r$ denotes the $r$-th column of $\diag(P)\mathbf{Q}$.
For $s,t\in[k]$, $s\neq t$, define the (normalized) Chernoff--TV separation
\begin{equation}
D_{s,t}
:= \liminf_{n\to\infty}\frac{1}{\log n}\,
D_{\mathrm{CT}}\Bigl(\mathcal{P}_{\boldsymbol{\mu}_s^{(n)}},\, P^{(n)}_{U|X}(\cdot|s)\,\Big\|\, 
\mathcal{P}_{\boldsymbol{\mu}_t^{(n)}},\, P^{(n)}_{U|X}(\cdot|t)\Bigr),
\label{eq:Dst_main}
\end{equation}
where $\mathcal{P}_{\boldsymbol{\mu}_r^{(n)}}$ denotes the product Poisson distribution on $\mathbb{Z}_+^k$ with independent coordinates and mean vector $\boldsymbol{\mu}_r^{(n)}$.

\begin{theorem}[Achievability]\label{thm:main_ach}
Let $(G_n,X^n,U^n)\sim \mathrm{DBM}\!\left(n,k,P,\mathbf{W}^{(n)},P^{(n)}_{U|X}\right)$.
If for all $s,t\in[k],\ s\neq t,$
\begin{equation}
D_{s,t}>1,
\label{eq:ach_condition}
\end{equation}
then exact recovery is achievable, and there exists a polynomial-time (in fact, near-linear-time) algorithm that achieves exact recovery.
\end{theorem}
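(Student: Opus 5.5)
We follow the two-stage strategy of \citet{abbe015}, augmented with vertex data in the second stage. \textbf{Stage 1 (almost exact recovery).} We split the graph: each edge of $G_n$ is independently kept in a graph $G'$ with probability $\gamma_n = 1/\log\log n$, and the remaining edges form $G''$. We run the Sphere-comparison / agnostic almost-exact-recovery algorithm of \citet{abbe2015recovering} on $G'$. Its expected degrees are $\Theta(\log n/\log\log n)\to\infty$, and $\mathbf{Q}$ has distinct columns of $\diag(P)\mathbf{Q}$ whenever the graph is informative. This yields a labeling $\hat X'$ that agrees with $X^n$ (up to a global permutation) on all but $o(n)$ vertices. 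If some pair $s,t$ has identical columns, the graph cannot separate them. In that case we instead refine the initial labeling using the data: by the condition $D_{s,t}>1$ together with Lemma~\ref{prop:CT_upper_bound}, we get $-\log(1-\mathrm{TV}) \ge (1-o(1))\log n$ for those pairs. The data MAP classifier therefore already separates $s$ and $t$ with vanishing error. We also estimate $P$ and $\mathbf{Q}$ from $\hat X'$ if they are unknown.

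\textbf{Stage 2 (local cleanup).} For each vertex $v$, we form the neighbor-count vector $N_v=(N_v(1),\dots,N_v(k))$, where $N_v(r)$ is the number of $G''$-neighbors of $v$ with $\hat X'=r$. We then reassign
\[
\hat X_v=\arg\max_{r}\; p_r\,\mathcal{P}_{\boldsymbol\mu_r^{(n)}}(N_v)\,P^{(n)}_{U|X}(U^{(v)}\mid r).
\]
This is the MAP test between the $k$ hypotheses, with the Poisson approximation for the degree profile and the exact data likelihood. The per-vertex error is controlled by a union bound over pairs $s\ne t$. For the pairwise MAP test between $(\mathcal P_{\boldsymbol\mu_s^{(n)}}\otimes P_{U|X}(\cdot|s))$ and $(\mathcal P_{\boldsymbol\mu_t^{(n)}}\otimes P_{U|X}(\cdot|t))$, the error probability is at most
\[
\sum_u\sum_x \min\{\cdot,\cdot\}\le \sum_u\min_{\lambda_u}\sum_x(\cdot)^{\lambda_u}(\cdot)^{1-\lambda_u}.
\]
The key point is that conditioning on $u$ lets us pick the Chernoff exponent $\lambda_u$ separately for each data value, and this is exactly $\exp(-D_{\mathrm{CT}})$. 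The priors contribute only constant factors. So each vertex is misclassified with probability at most $n^{-D_{s,t}+o(1)}$ summed over the $O(1)$ pairs. Since $\min D_{s,t}>1$, a union bound over $n$ vertices gives $o(1)$.

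\textbf{Main obstacle.} The hardest step is justifying that the idealized test is valid given the errors from Stage 1. Three issues arise: the $o(n)$ mislabeled vertices perturb $N_v$; dependence between $N_v$ and $\hat X'$ must be avoided; and the exact binomial counts must be replaced by Poisson counts. The splitting handles the dependence, since $G''$ is independent of $G'$ given $X^n$, so $\hat X'$ is independent of $N_v$'s edge randomness. For the perturbation, we follow \citet{abbe015}. We show that w.h.p.\ every vertex has at most $O(\log n/\log\log n)$, i.e.\ $o(\log n)$, neighbors among the misclassified set; this requires an argument robust to adversarial placement of errors. We then show the Chernoff bound on the log-likelihood ratio degrades only by a factor $n^{o(1)}$ under such perturbations, because likelihood ratios of Poisson vectors change by $e^{O(\text{shift}\cdot\log\log n)}$. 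The Poisson--binomial comparison costs only a $1+o(1)$ factor in this regime. The loss of the $\gamma_n$ fraction of edges changes $\boldsymbol\mu$ by a $1-o(1)$ factor, which is absorbed by the strict inequality via the $\liminf$ in \eqref{eq:Dst_main}. A final point to check is that this absorption is uniform in the data part, since $P^{(n)}_{U|X}$ varies with $n$. I would handle this by noting that $\lambda_u\mapsto$ the inner sum is log-convex, so scaling $\boldsymbol\mu$ by $1-o(1)$ shifts $D_{\mathrm{CT}}$ by at most $o(\log n)$ uniformly in $u$. Overall runtime is near-linear, dominated by Stage 1.
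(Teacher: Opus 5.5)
Your architecture matches the paper's. You split the edges, get an almost-exact labeling $\hat X'$ from $G'$, and run a per-vertex MAP on the $G''$ degree profile together with $U^{(v)}$. You bound each pairwise error by $\sum_u\min_{\lambda_u}\sum_{\mathbf x}(\cdot)^{\lambda_u}(\cdot)^{1-\lambda_u}=e^{-D_{\mathrm{CT}}}$. You absorb the splitting loss (a factor $e^{\gamma D_+(\boldsymbol\mu_s\|\boldsymbol\mu_t)\log n}$, uniform in $u$) and the mislabeling perturbation into the margin in $D_{s,t}>1$. Taking $\gamma_n\to0$ instead of a fixed small $\gamma$, and splitting graph-identical communities with the data, are harmless variations.

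The genuine gap is that you never align the classes of $\hat X'$ with the community indices of $P^{(n)}_{U|X}$. You note that $\hat X'$ agrees with $X^n$ only up to a global permutation. You then feed it into $\arg\max_r p_r\mathcal{P}_{\boldsymbol\mu_r^{(n)}}(N_v)P^{(n)}_{U|X}(U^{(v)}\mid r)$ as if that permutation were the identity. The graph can detect permutations that are not automorphisms of $(P,\mathbf Q)$, but not those that are. If $\hat X'\approx\pi(X^n)$ for such a $\pi$, then $N_v$ is approximately $\mathcal{P}_{\boldsymbol\mu^{(n)}_{\pi(X_v)}}$-distributed. So the graph term votes for $\pi(X_v)$ while the data term votes for $X_v$. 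This mismatch is harmless for graph-only degree profiling but fatal here.

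A concrete failure: take the setting of Example~\ref{exm:erase} with $(\sqrt a-\sqrt b)^2/2<1<D_{1,2}$ and $\alpha<1$. The graph law is invariant under $1\leftrightarrow 2$, so Stage~1 returns the swapped labeling with probability about $1/2$. In that case every unerased vertex is forced to its true label, because $P^{(n)}_{U|X}(\bar x\mid x)=0$. Meanwhile almost all of the roughly $n^{1-\alpha}$ erased vertices follow the graph evidence to the swapped label. The output is then neither $X^n$ nor its swap.

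This is why the paper inserts the Symmetry Identification step and proves Lemma~\ref{thm:scheffe}. Each Stage-1 class contains $\Theta(n)$ data samples, of which at most $m=\delta n/\log n$ are mislabeled. These samples are used to decide which of two graph-symmetric communities the class actually is. This works once the total variation between their data laws sufficiently exceeds $2m/n$; near-indistinguishable pairs are set aside in $\mathcal S$. You need such an alignment step before Stage~2 is valid. It must include an argument for pairs whose data laws are too close to separate at your contamination level.

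A secondary slip: your perturbation estimate does not close as written. $O(\log n/\log\log n)$ mislabeled neighbours at a cost of $e^{O(\log\log n)}$ each gives $e^{O(\log n)}=n^{O(1)}$, not $n^{o(1)}$. The fix is that the Poisson log-likelihood ratio between $s$ and $t$ is affine in the profile, $\sum_r d_r\log(Q_{rs}/Q_{rt})+\mathrm{const}$, when the entries of $\mathbf Q$ involved are positive. A shift of total size $K$ therefore moves it by $O(K)$ and costs only $e^{O(K)}$, so any $K=o(\log n)$ suffices. Also, the bound on $K$ comes from the near-independence of $G''$ and $\hat X'$ given $X^n$, not from robustness to adversarial placement: an adversarial set of $o(n)$ errors can contain entire neighbourhoods.
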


\begin{theorem}[Converse]\label{thm:main_conv}
Let $(G_n,X^n,U^n)\sim \mathrm{DBM}\!\left(n,k,P,\mathbf{W}^{(n)},P^{(n)}_{U|X}\right)$.
If there exist $s,t\in[k]$ with $s\neq t$ such that
\begin{equation}
D_{s,t}<1,
\label{eq:conv_condition}
\end{equation}
then exact recovery is impossible.
\end{theorem}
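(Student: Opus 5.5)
We prove Theorem~\ref{thm:main_conv} by the standard genie-aided reduction used for the SBM converse in \citet{abbe015}: a single vertex is isolated, the labels of all other vertices are revealed, and we show that even then many vertices cannot be classified correctly. Fix $s\neq t$ with $D_{s,t}<1$. Pass to a subsequence of $n$ along which
\[
\frac{1}{\log n}D_{\mathrm{CT}}\bigl(\mathcal{P}_{\boldsymbol{\mu}_s^{(n)}},P^{(n)}_{U|X}(\cdot|s)\,\|\,\mathcal{P}_{\boldsymbol{\mu}_t^{(n)}},P^{(n)}_{U|X}(\cdot|t)\bigr)\le 1-\epsilon
\]
for some $\epsilon>0$. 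It suffices to show that the MAP estimator fails with probability bounded away from zero along this subsequence.

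\textbf{Step 1 (genie reduction).} Choose a set $S$ of about $n/\log^3 n$ vertices with label $s$. With high probability no two vertices of $S$ are adjacent and no two have a common neighbor. Reveal to the estimator every label outside $S$, the whole graph, and all data. Exact recovery up to permutation forces correct recovery of the labels in $S$, since the revealed labels fix the permutation. Conditionally, each $v\in S$ then faces an independent binary test between $s$ and $t$. Its observation is the vector of edge counts from $v$ to each community (outside $S$) together with its own data $U^{(v)}$.

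\textbf{Step 2 (per-vertex error).} Under hypothesis $r$, the edge-count vector is approximately distributed as $\mathcal{P}_{\boldsymbol{\mu}_r^{(n)}}$; the binomial-to-Poisson error is a factor $1+o(1)$ on the relevant range. This vector is independent of $U^{(v)}\sim P^{(n)}_{U|X}(\cdot|r)$. Write $P^{(r)}_X$ for the Poisson law and $Q^{(r)}_U$ for the data law. The Bayes error of this test is at least a constant multiple of
\[
\sum_{u}\sum_x\min\{P^{(s)}_X(x)Q^{(s)}_U(u),\,P^{(t)}_X(x)Q^{(t)}_U(u)\}.
\]
This is the key point, and the main obstacle. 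For each fixed $u$ we need the lower bound
\[
\sum_x\min\{a_x,b_x\}\ge \frac{1}{\mathrm{poly}\log n}\,\min_\lambda\sum_x a_x^\lambda b_x^{1-\lambda},
\]
uniformly in $u$, where $a_x=P^{(s)}_X(x)Q^{(s)}_U(u)$ and $b_x=P^{(t)}_X(x)Q^{(t)}_U(u)$. The Poisson part supplies this. Multiplying by the data factors rescales $a$ relative to $b$ by the ratio $Q^{(s)}_U(u)/Q^{(t)}_U(u)$, which shifts the optimal tilt $\lambda_u$. At the tilted point the lattice Poisson mass concentrates on $O(\sqrt{\log n})$ cells around the $\lambda_u$-tilted mean, where $a_x\approx b_x$. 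Summing over these cells and using local CLT estimates for product Poisson laws gives the bound with only polylogarithmic loss.

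The delicate cases are $\lambda_u\in\{0,1\}$ and extreme ratios, where $\log\bigl(Q^{(s)}_U(u)/Q^{(t)}_U(u)\bigr)$ exceeds order $\log n$. In those cases $\min_\lambda$ is attained at an endpoint and equals $\min\{Q^{(s)}_U(u),Q^{(t)}_U(u)\}$ times a quantity that is nearly 1. The bound then holds directly because $\min\{a_x,b_x\}$ is comparable to the smaller side. I would handle the interior and endpoint cases separately. Summing over $u$ gives a per-vertex error of at least $n^{-(1-\epsilon)-o(1)}$.

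\textbf{Step 3 (conclusion).} The tests at the vertices of $S$ are conditionally independent, so the probability that all $|S|\approx n/\log^3 n$ of them succeed is at most
\[
\bigl(1-n^{-1+\epsilon-o(1)}\bigr)^{n/\log^3 n}\le \exp\bigl(-n^{\epsilon-o(1)}\bigr)\to0.
\]
Hence exact recovery fails with probability tending to 1 along the subsequence, which contradicts exact recovery on the full sequence.
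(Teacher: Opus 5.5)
Your architecture is the paper's. A genie reveals the labels outside a random set of about $n/\log^3 n$ vertices (Lemma~\ref{thm:TV}). The per-vertex error is then at least a constant times $1-\mathrm{TV}$ of the product laws $\mathcal{P}_{\boldsymbol{\mu}^{(n)}_r}\times P^{(n)}_{U|X}(\cdot|r)$. That overlap is compared with $e^{-D_{\mathrm{CT}}}$, up to polylogarithmic factors, by looking near the $\lambda_u$-tilted mean.

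Your Step~2 is sound and slightly more careful than the paper. The paper evaluates the minimum at the single point $d_r=\lfloor(\mu^{(n)}_{s_r})^{\lambda_u^*}(\mu^{(n)}_{t_r})^{1-\lambda_u^*}\rfloor$ using the stationarity identity \eqref{eq:eq_prob}, which holds only for interior minimizers. Your separate endpoint case fills that hole. At $\lambda_u=1$ the minimum is exactly $\min\{Q^{(s)}_U(u),Q^{(t)}_U(u)\}$, and $\mathbb{E}_{P^{(s)}_X}[\log(a_x/b_x)]\le 0$ plus a CLT gives a constant fraction, uniformly in $u$.

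The genuine gap is in Step~1: the claim that w.h.p.\ no two vertices of $S$ are adjacent is false.
\begin{itemize}
\item With $|S|\approx n/\log^3 n$ and edge probabilities $\Theta(\log n/n)$, the expected number of edges inside $S$ is $\Theta(|S|^2\log n/n)=\Theta(n/\log^5 n)\to\infty$.
\item You cannot shrink $S$ for small $\epsilon$, because Step~3 needs $|S|\,n^{-1+\epsilon}\to\infty$.
\item So the labels in $S$ are coupled through edges and non-edges inside $S$, and the exact product over $S$ in Step~3 is unjustified.
\end{itemize}

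The standard repair, which the paper gestures at in Lemma~\ref{thm:TV}, works vertex by vertex. A fixed $v\in S$ has a neighbour in $S$ with probability only $O(|S|\log n/n)=O(1/\log^2 n)$. The statistic of $v$ (its edges to $S^c$ plus $U^{(v)}$) uses edge sets that are disjoint across $v$ and disjoint from the edges inside $S$. These statistics are therefore independent of each other and of the internal edges, so no common-neighbour condition is needed (that condition also fails w.h.p.). Hence w.h.p.\ some $v$ has a statistic favouring the wrong label by a constant factor and has no neighbour in $S$. Its non-edges to $S$ change the likelihood ratio by only $1+O(1/\log^2 n)$, so flipping $v$ increases the posterior and the true labelling is not the MAP.

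A second, smaller mismatch: if $S$ consists of label-$s$ vertices, as you wrote, the needed per-vertex quantity is the error under hypothesis $s$, i.e.\ the $a$-mass of $\{p_t b_x>p_s a_x\}$. The prior-averaged $\sum\min\{a_x,b_x\}$ does not give this. Your window argument does, since a constant fraction of the tilted mass near the tilted mean lies on each side of the decision boundary. Alternatively, draw $S$ at random from $\mathcal{C}_s\cup\mathcal{C}_t$, as the paper does.
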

The proof of Theorem~\ref{thm:main_ach} is given in Subsection~\ref{sub:suff},
and the proof of Theorem~\ref{thm:main_conv} is given in Subsection~\ref{sub:nec}.
\begin{remark}[Reduction to the classical SBM threshold]\label{rem:reduction_to_SBM}
Theorems \ref{thm:main_ach} and \ref{thm:main_conv} recover the sharp exact-recovery threshold of the standard SBM when no vertex data are available (or when the data carry no information about the labels).
Indeed, suppose the attribute distributions are identical across communities, i.e.,
\[
P^{(n)}_{U|X}(\cdot|s)=P^{(n)}_{U|X}(\cdot|t)\qquad \text{for all } s,t\in[k].
\]
Then, by Proposition~\ref{prop:CT_special_cases}(i),
\[
D_{\mathrm{CT}}\Bigl(\mathcal{P}_{\bm{\mu}_s^{(n)}},\,P^{(n)}_{U|X}(\cdot|s)\,\Big\|\,\mathcal{P}_{\bm{\mu}_t^{(n)}},\,P^{(n)}_{U|X}(\cdot|t)\Bigr)
=
C\Bigl(\mathcal{P}_{\bm{\mu}_s^{(n)}}\Big\| \mathcal{P}_{\bm{\mu}_t^{(n)}}\Bigr).
\]
Moreover, by the Chernoff information representation of the CH-divergence (Proposition \ref{CHChernoff}),
\[
C\Bigl(\mathcal{P}_{\bm{\mu}_s^{(n)}}\Big\| \mathcal{P}_{\bm{\mu}_t^{(n)}}\Bigr)
=
D_+\!\left(\bm{\mu}_s\middle\|\bm{\mu}_t\right).
\]
Therefore the criterion in Theorems~\ref{thm:main_ach}--\ref{thm:main_conv} reduces to
\[
D_+\!\left(\bm{\mu}_s\middle\|\bm{\mu}_t\right) > 1 \quad \text{for all } s\neq t,
\]
which is exactly the Abbe--Sandon sharp threshold for exact recovery in the logarithmic-degree SBM.
\end{remark}

\begin{remark}[Comparison with the sufficient condition of \cite{asadi2017compressing}]\label{rem:CT_vs_productChernoff}
A natural approach used in \citep{asadi2017compressing} is to view the graph-based statistic and the vertex attribute as jointly observed under each community and to compare the resulting \emph{product} distributions $\mathcal{P}_{\bm{\mu}_s^{(n)}}\times\,P^{(n)}_{U|X}(\cdot|s)$
and $\mathcal{P}_{\bm{\mu}_t^{(n)}}\times\,P^{(n)}_{U|X}(\cdot|t)$
via the standard Chernoff information. This yields an explicit \emph{sufficient} condition for exact recovery: if the induced Chernoff separation between these product distributions is large enough (under the appropriate $\log n$ normalization in the logarithmic-degree regime), then exact recovery is achievable.

However, this product-Chernoff criterion is generally \emph{not necessary}.
The reason is that the Chernoff information for product distributions optimizes over a \emph{single} exponent parameter $\lambda\in[0,1]$ shared across all attribute outcomes, whereas the Chernoff--TV divergence $D_{\mathrm{CT}}$ optimizes over a \emph{collection} $\{\lambda_u\}_{u\in\mathcal{U}}$ that may depend on the observed attribute value $u$.
Consequently, $D_{\mathrm{CT}}$ can be strictly larger than the corresponding product-Chernoff exponent, and exact recovery may still be possible even when the sufficient product-Chernoff condition fails.
\end{remark}

A natural question is when node data can compensate for a graph that is below the exact-recovery threshold.
The next theorem shows that if the graph-based separation $D_+(\boldsymbol{\mu}_s\|\boldsymbol{\mu}_t)$ is below $1$ and the data distributions are not sufficiently separated in total variation, then exact recovery remains impossible.

\begin{corollary}[When data cannot help]\label{thm:TV_D+}
Let $(G_n,X^n,U^n)\sim \mathrm{DBM}\!\left(n,k,P,\mathbf{W}^{(n)},P^{(n)}_{U|X}\right)$.
Suppose there exist $s,t\in[k]$, $s\neq t$, such that
\[
D_+\!\left(\boldsymbol{\mu}_s \,\middle\|\, \boldsymbol{\mu}_t\right)<1-\varepsilon
\qquad\text{and}\qquad
\mathrm{TV}\!\left(P^{(n)}_{U|X}(\cdot|s),\,P^{(n)}_{U|X}(\cdot|t)\right)
\le 1-n^{-\varepsilon}
\]
for some $\varepsilon>0$ and all sufficiently large $n$.
Then exact recovery is impossible.
\end{corollary}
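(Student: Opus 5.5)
The corollary follows by combining the upper bound of Lemma~\ref{prop:CT_upper_bound} with the converse, Theorem~\ref{thm:main_conv}. The plan is to show that under the two hypotheses the normalized Chernoff--TV separation satisfies $D_{s,t}<1$ for the given pair $s\neq t$. Theorem~\ref{thm:main_conv} then shows exact recovery is impossible.

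First apply Lemma~\ref{prop:CT_upper_bound} with $P_X^{(1)}=\mathcal{P}_{\bm{\mu}_s^{(n)}}$, $P_X^{(2)}=\mathcal{P}_{\bm{\mu}_t^{(n)}}$, $Q_U^{(1)}=P^{(n)}_{U|X}(\cdot|s)$ and $Q_U^{(2)}=P^{(n)}_{U|X}(\cdot|t)$. This gives, for each $n$,
\[
D_{\mathrm{CT}}\Bigl(\mathcal{P}_{\bm{\mu}_s^{(n)}},P^{(n)}_{U|X}(\cdot|s)\Big\|\mathcal{P}_{\bm{\mu}_t^{(n)}},P^{(n)}_{U|X}(\cdot|t)\Bigr)
\le C\Bigl(\mathcal{P}_{\bm{\mu}_s^{(n)}}\Big\|\mathcal{P}_{\bm{\mu}_t^{(n)}}\Bigr)-\log\Bigl(1-\mathrm{TV}\bigl(P^{(n)}_{U|X}(\cdot|s),P^{(n)}_{U|X}(\cdot|t)\bigr)\Bigr).
\]
One technical point: the lemma is stated for finite $\mathcal{X}$, while the Poisson laws live on $\mathbb{Z}_+^k$. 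The lemma's proof only uses the factorization over $u$ and termwise minimization, so it extends verbatim to countable alphabets, where all sums converge. I will note this rather than reprove it.

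Next bound the two terms separately. By Proposition~\ref{CHChernoff}, the Chernoff term equals $D_+(\bm{\mu}_s\log n\,\|\,\bm{\mu}_t\log n)$. From the definition, $f_t(R/S)\,S$ is homogeneous of degree one in the pair $(R,S)$: indeed $S\bigl(1-t+tR/S-(R/S)^t\bigr)=(1-t)S+tR-R^tS^{1-t}$. Hence the Chernoff term equals $\log n\cdot D_+(\bm{\mu}_s\|\bm{\mu}_t)<(1-\varepsilon)\log n$. For the TV term, the hypothesis $\mathrm{TV}\le 1-n^{-\varepsilon}$ gives $1-\mathrm{TV}\ge n^{-\varepsilon}$, so $-\log(1-\mathrm{TV})\le\varepsilon\log n$ for all sufficiently large $n$.

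Dividing by $\log n$ and taking $\liminf$ gives $D_{s,t}\le D_+(\bm{\mu}_s\|\bm{\mu}_t)+\varepsilon<1$. The inequality is strict because $D_+(\bm{\mu}_s\|\bm{\mu}_t)$ is a fixed constant strictly below $1-\varepsilon$, and no limiting argument can erode that strictness. Theorem~\ref{thm:main_conv} then applies. The only real care needed is in the homogeneity and countable-alphabet remarks; everything else is direct substitution.
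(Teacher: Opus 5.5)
Your proof is correct and follows the paper's argument: you apply Lemma~\ref{prop:CT_upper_bound} to the Poisson/data pair, use the Poisson--Chernoff identity with $\log n$-scaling to put the graph term below $(1-\varepsilon)\log n$, bound $-\log(1-\mathrm{TV})\le\varepsilon\log n$ from the TV hypothesis, and conclude with Theorem~\ref{thm:main_conv}. Your explicit bound $D_{s,t}\le D_+(\bm{\mu}_s\|\bm{\mu}_t)+\varepsilon<1$ is slightly more careful than the paper's write-up, which records only $D_{\mathrm{CT}}<\log n$ for each $n$ (by itself this gives just $\liminf\le 1$), and your remarks on homogeneity and the countable Poisson alphabet fill small gaps that the paper leaves implicit.
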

\begin{proof}
Let $s\neq t$ satisfy the assumptions. In the logarithmic-degree regime, $\bm{\mu}_s^{(n)}=(\log n)\bm{\mu}_s$
and $\bm{\mu}_t^{(n)}=(\log n)\bm{\mu}_t$, and by the identity
between CH-divergence and Chernoff information for multivariate Poisson
(Proposition~\ref{CHChernoff} together with the scaling
$D_+(c\bm{a}\|c\bm{b})=c\,D_+(\bm{a}\|\bm{b})$), we have
\[
C\Bigl(\mathcal{P}_{\bm{\mu}_s^{(n)}}\Big\|\mathcal{P}_{\bm{\mu}_t^{(n)}}\Bigr)
=
D_+\!\left(\bm{\mu}_s^{(n)}\middle\|\bm{\mu}_t^{(n)}\right)
=
(\log n)\,D_+\!\left(\bm{\mu}_s\middle\|\bm{\mu}_t\right)< (\log n)(1-\varepsilon).
\]
Applying Lemma~\ref{prop:CT_upper_bound} with
\[
P_X^{(1)}=\mathcal{P}_{\bm{\mu}_s^{(n)}},\quad
P_X^{(2)}=\mathcal{P}_{\bm{\mu}_t^{(n)}},\quad
Q_U^{(1)}=P^{(n)}_{U|X}(\cdot|s),\quad
Q_U^{(2)}=P^{(n)}_{U|X}(\cdot|t),
\]
gives, for every $n$,
\begin{align*}
&D_{\mathrm{CT}}\Bigl(\mathcal{P}_{\bm{\mu}_s^{(n)}},\,P^{(n)}_{U|X}(\cdot|s)\,\Big\|\,\mathcal{P}_{\bm{\mu}_t^{(n)}},\,P^{(n)}_{U|X}(\cdot|t)\Bigr) \\
&\qquad\le
C\Bigl(\mathcal{P}_{\bm{\mu}_s^{(n)}}\Big\|\mathcal{P}_{\bm{\mu}_t^{(n)}}\Bigr)
-\log\!\Bigl(1-\mathrm{TV}\bigl(P^{(n)}_{U|X}(\cdot|s),P^{(n)}_{U|X}(\cdot|t)\bigr)\Bigr)\\
&\qquad=(\log n)\,D_+\!\left(\bm{\mu}_s\middle\|\bm{\mu}_t\right)-\log\!\Bigl(1-\mathrm{TV}\bigl(P^{(n)}_{U|X}(\cdot|s),P^{(n)}_{U|X}(\cdot|t)\bigr)\Bigr)\\
&\qquad< (\log n)(1-\varepsilon)+\varepsilon\log n\\
&\qquad = \log n.
\end{align*}
Thus, 
\[
D_{s,t}
:=\liminf_{n\to\infty}\frac{1}{\log n}\,
D_{\mathrm{CT}}\Bigl(\mathcal{P}_{\bm{\mu}_s^{(n)}},\,P^{(n)}_{U|X}(\cdot|s)\,\Big\|\,\mathcal{P}_{\bm{\mu}_t^{(n)}},\,P^{(n)}_{U|X}(\cdot|t)\Bigr)<1.
\]
Therefore, by Theorem \ref{thm:main_conv}, exact recovery is impossible.
\end{proof}

A well-studied form of vertex side information is the \emph{noisy--erasure label} model, where the node attribute takes values in the community alphabet but is corrupted.
Concretely, conditional on $X_i=x$, the observed label $U^{(i)}$ equals $x$ with high probability, and otherwise it is either erased or replaced by another community label.
The binary symmetric case ($k=2$) was analyzed in \citep{SN18}.
As an application of Theorems~\ref{thm:main_ach} and~\ref{thm:main_conv}, we derive a sharp necessary and sufficient condition for exact recovery for general $k\ge 2$ under a polynomially-vanishing noise model.

\begin{corollary}[Noisy-erasure labels]\label{cor:noisy_erasure_general_k}
Consider the data block model
\[
(G_n,X^n,U^n)\sim \mathrm{DBM}\!\left(n,k,P,\frac{\log n}{n}\mathbf{Q},P^{(n)}_{U|X}\right),
\]
where $P=(p_1,\dots,p_k)$ is a prior on $[k]$ and $\mathbf{Q}\in\mathbb{R}_+^{k\times k}$ is symmetric.
Let $\bm{\mu}_s := (\diag(P)\mathbf{Q})_s\in\mathbb{R}_+^k$ and write $\mu_{s_\ell}$ for its $\ell$-th component.

Assume the label alphabet is $\mathcal{U}=\{1,\dots,k,\varepsilon\}$ where $\varepsilon$ denotes an erasure and, for each $x\in[k]$, the data--community channel satisfies
\[
P_{U|X}^{(n)}(u|x)=
\begin{cases}
n^{-d_{u,x}}, & u\in[k],\ u\neq x,\\
n^{-d_{\varepsilon,x}}, & u=\varepsilon,\\
1-\displaystyle\sum_{\substack{u\in[k]\\u\neq x}} n^{-d_{u,x}}-n^{-d_{\varepsilon,x}}, & u=x,
\end{cases}
\]
for some constants $d_{u,x}>0$ (for $u\neq x$) and $d_{\varepsilon,x}>0$.

Then, for any $s\neq t$, the pairwise divergence rate $D_{s,t}$ appearing in Theorems~\ref{thm:main_ach} and~\ref{thm:main_conv}
coincides with the explicit variational expression
\[
D_{s,t}
=
\min_{u\in\mathcal{U}}\ \max_{\lambda_u\in[0,1]}
\Biggl[
d_{u,s}\lambda_u+d_{u,t}(1-\lambda_u)
+\sum_{\ell=1}^k\Bigl(\mu_{s_\ell}\lambda_u+\mu_{t_\ell}(1-\lambda_u)-\mu_{s_\ell}^{\lambda_u}\mu_{t_\ell}^{1-\lambda_u}\Bigr)
\Biggr],
\]
with the convention $d_{x,x}=0$ for $x\in[k]$.

Consequently, exact recovery is solvable if $D_{s,t}>1$ for all $s\neq t$, and it is not solvable if there exist $s\neq t$ such that $D_{s,t}<1$.
\end{corollary}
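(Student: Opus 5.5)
The plan is to compute the quantity inside the logarithm in the definition of $D_{\mathrm{CT}}$ exactly, one attribute value $u$ at a time, and then take $-\frac{1}{\log n}\log$ of the resulting finite sum. The final claim about solvability then follows at once from Theorems~\ref{thm:main_ach} and~\ref{thm:main_conv}.

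\textbf{Step 1: the Poisson factor for fixed $u$.} Fix $u\in\mathcal U$ and $\lambda\in[0,1]$. The factor $P^{(n)}_{U|X}(u|s)^{\lambda}P^{(n)}_{U|X}(u|t)^{1-\lambda}$ comes out of the sum over $x\in\mathbb{Z}_+^k$. For product Poisson laws the remaining sum is
\[
\sum_{x}\mathcal{P}_{\bm{\mu}_s^{(n)}}(x)^{\lambda}\mathcal{P}_{\bm{\mu}_t^{(n)}}(x)^{1-\lambda}
=\exp\Bigl(-\log n\sum_{\ell=1}^k\bigl(\mu_{s_\ell}\lambda+\mu_{t_\ell}(1-\lambda)-\mu_{s_\ell}^{\lambda}\mu_{t_\ell}^{1-\lambda}\bigr)\Bigr).
\]
This is the standard computation behind Proposition~\ref{CHChernoff}; the sum over each coordinate is $e^{-\lambda a-(1-\lambda)b}\sum_m (a^\lambda b^{1-\lambda})^m/m!$. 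Call the exponent $g(\lambda)\log n$.

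\textbf{Step 2: the attribute factor.} Write $P^{(n)}_{U|X}(u|x)=c_{u,x}(n)\,n^{-d_{u,x}}$ with $d_{x,x}=0$. Here $c_{u,x}(n)=1$ except when $u=x$, where $c_{x,x}(n)=1-o(1)$ because all $d>0$. Hence the $u$-th term equals
\[
\exp\bigl(-\log n\,h_u(\lambda)+O(\epsilon_n)\bigr),\qquad h_u(\lambda)=d_{u,s}\lambda+d_{u,t}(1-\lambda)+g(\lambda),
\]
where $\epsilon_n\to0$ uniformly in $\lambda$, since $\log c^\lambda$ is bounded by $|\log c|$. Minimizing over $\lambda_u$ therefore gives $\exp(-\log n\,\max_{\lambda}h_u(\lambda)+o(1))$.

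\textbf{Step 3: sum over $u$.} The alphabet $\mathcal U$ has $k+1$ elements, a fixed number. So
\[
\sum_u e^{-\log n\,H_u+o(1)}\in\bigl[e^{-\log n\min_u H_u+o(1)},\,(k+1)e^{-\log n\min_u H_u+o(1)}\bigr],
\]
where $H_u=\max_\lambda h_u(\lambda)$. Taking $-\frac1{\log n}\log$ shows that the liminf is in fact a limit and equals $\min_u\max_{\lambda_u}h_u(\lambda_u)$, which is the claimed expression. Solvability then follows directly from Theorems~\ref{thm:main_ach} and~\ref{thm:main_conv}.

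\textbf{Main obstacle.} The only delicate point is controlling the correction factor $c_{x,x}(n)$ and the interchange of the $n$-limit with the maximization over $\lambda$. Both are handled by uniformity in $\lambda\in[0,1]$. I also need $\mu$ entries that vanish to cause no trouble, using the convention $0^0=1$ for the zero-mean Poisson coordinates. This should be routine.
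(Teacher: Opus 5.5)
Your proposal is correct and follows essentially the same route as the paper. You factor each $u$-term into the attribute part times the product-Poisson Chernoff coefficient $\exp(-g(\lambda)\log n)$, then sandwich the $(k+1)$-term sum between its largest term and $(k+1)$ times it. Your explicit handling of the $1-o(1)$ factor $c_{x,x}(n)$ in $P^{(n)}_{U|X}(x|x)$, uniformly in $\lambda$, is slightly more careful than the paper, which effectively treats that probability as exactly $n^{0}$ and has a sign slip in its displayed exponent.
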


\begin{proof}
To find the phase transition threshold $D_{s,t}$, we find the Chernoff--TV divergence
\begin{align*}
& D_{\mathrm{CT}}\left(\mathcal{P}_{\bm{\mu}_s^{(n)}} , P^{(n)}_{U|X}(\cdot|s) \middle\|  \mathcal{P}_{\bm{\mu}_t^{(n)}} , P^{(n)}_{U|X}(\cdot|t) \right)= \nonumber \\
&-\log\Biggl(\sum_{u \in \mathcal{U}} 
\exp\Biggl(\log n  \Biggl( \max_{\lambda_u \in [0,1]} 
d_{u,s} \lambda_u + d_{u,t} (1-\lambda_u) +
\sum_{\ell=1}^k \mu_{s_\ell} \lambda_u + \mu_{t_\ell}(1-\lambda_u) 
-\mu_{s_\ell}^{\lambda_u} \mu_{t_\ell}^{1-\lambda_u} \Biggr)\Biggr) \Biggr).
\end{align*} 
Define
\[
\Delta_{s,t} = \min_{u\in\mathcal{U}} \max_{\lambda_u \in [0,1]} \Biggl[
d_{u,s} \lambda_u + d_{u,t} (1-\lambda_u) +
\sum_{\ell=1}^k \mu_{s_\ell} \lambda_u + \mu_{t_\ell}(1-\lambda_u) 
-\mu_{s_\ell}^{\lambda_u} \mu_{t_\ell}^{1-\lambda_u}\Biggr].
\]
Then,
\begin{align*}
 \exp(-\Delta_{s,t}\log n) 
  &\le \exp\left(-D_{\mathrm{CT}}\left(\mathcal{P}_{\bm{\mu}_s^{(n)}} , P^{(n)}_{U|X}(\cdot|s) \middle\|  \mathcal{P}_{\bm{\mu}_t^{(n)}} , P^{(n)}_{U|X}(\cdot|t) \right)\right) \\
  &\le (k+1)\exp(-\Delta_{s,t} \log n).
\end{align*}
Therefore, the phase transition threshold for the noisy label problem
is given by
\[
 D_{s,t}=\liminf_{n \to \infty}  - \frac{1}{\log n}
    \log(\exp(-\Delta_{s,t} \log n)) = \Delta_{s,t}.
\]
\end{proof}

\begin{example}[Erased labels]\label{exm:erase}
Consider the balanced two-community symmetric model with prior $P=(1/2,1/2)$ and logarithmic degree edge probabilities with
\[
\mathbf{Q}=
\begin{pmatrix}
a & b\\
b & a
\end{pmatrix}.
\]
The side-information alphabet is $\mathcal{U}=\{1,2,\varepsilon\}$, where $\varepsilon$ denotes an erasure. Conditioned on $X\in\{1,2\}$, the observed label is correct with probability $1-n^{-\alpha}$ and erased with probability $n^{-\alpha}$:
\[
P_{U|X}^{(n)}(x|x)=1-n^{-\alpha},\qquad
P_{U|X}^{(n)}(\varepsilon|x)=n^{-\alpha},\qquad
P_{U|X}^{(n)}(\bar x|x)=0,\qquad x\in\{1,2\}.
\]
Applying Corollary~\ref{cor:noisy_erasure_general_k} yields
\begin{align*}
D_{1,2}
&=\max_{\lambda\in[0,1]}
\Biggl[
\alpha\lambda+\alpha(1-\lambda)
+\frac{a}{2}\lambda+\frac{b}{2}(1-\lambda)-\Bigl(\frac{a}{2}\Bigr)^\lambda\Bigl(\frac{b}{2}\Bigr)^{1-\lambda}\\
&\hspace{3.1cm}
+\frac{b}{2}\lambda+\frac{a}{2}(1-\lambda)-\Bigl(\frac{b}{2}\Bigr)^\lambda\Bigl(\frac{a}{2}\Bigr)^{1-\lambda}
\Biggr].
\end{align*}
Choosing $\lambda=\tfrac12$ gives
\[
D_{1,2}
=\alpha+\frac{a+b}{2}-\sqrt{ab}
=\alpha+\frac{(\sqrt a-\sqrt b)^2}{2}.
\]
Hence the exact-recovery phase transition coincides with \citep{SN18}.
\end{example}

\begin{example}[Noisy labels]\label{eq:noisy_labels}
Consider the balanced two-community symmetric model with prior $P=(1/2,1/2)$ and logarithmic-degree edge probabilities with
\[
\mathbf{Q}=
\begin{pmatrix}
a & b\\
b & a
\end{pmatrix}.
\]
The label alphabet is $\mathcal{U}=\{1,2\}$. Conditionally on the true community label $X\in\{1,2\}$, the observed label is flipped with probability $n^{-\alpha}$:
\[
P_{U|X}^{(n)}(x|x)=1-n^{-\alpha},\qquad
P_{U|X}^{(n)}(\bar x|x)=n^{-\alpha},\qquad x\in\{1,2\}.
\]
Applying Corollary~\ref{cor:noisy_erasure_general_k} yields
\begin{align*}
D_{1,2}
&=\max_{\lambda\in[0,1]}
\Biggl[
\alpha\lambda
+\frac{a}{2}\lambda+\frac{b}{2}(1-\lambda)-\Bigl(\frac{a}{2}\Bigr)^\lambda\Bigl(\frac{b}{2}\Bigr)^{1-\lambda}
+\frac{b}{2}\lambda+\frac{a}{2}(1-\lambda)-\Bigl(\frac{b}{2}\Bigr)^\lambda\Bigl(\frac{a}{2}\Bigr)^{1-\lambda}
\Biggr]\\
&=\frac{a+b}{2}
+\max_{\lambda\in[0,1]}
\Biggl[
\alpha\lambda
-\Bigl(\frac{a}{2}\Bigr)^\lambda\Bigl(\frac{b}{2}\Bigr)^{1-\lambda}
-\Bigl(\frac{b}{2}\Bigr)^\lambda\Bigl(\frac{a}{2}\Bigr)^{1-\lambda}
\Biggr].
\end{align*}
Let $T:=\log(a/b)$ and $\eta:=\sqrt{\alpha^2+abT^2}$. The maximizer equals
\[
\lambda^\star=
\frac{1}{2}+\frac{1}{2T}\log\!\left(\frac{\eta+\alpha}{\eta-\alpha}\right)
\qquad\text{when }\ \alpha<\frac{T(a-b)}{2},
\]
and otherwise the optimum is attained at the boundary $\lambda^\star=1$. Substituting gives
\[
D_{1,2}=
\begin{cases}
\displaystyle
\frac{\alpha}{2}+\frac{a+b}{2}
-\frac{\eta}{T}
+\frac{\alpha}{2T}\log\!\left(\frac{\eta+\alpha}{\eta-\alpha}\right),
& \alpha<\frac{T(a-b)}{2},\\[1.2ex]
\displaystyle
\alpha,
& \alpha>\frac{T(a-b)}{2}.
\end{cases}
\]
Hence the exact-recovery threshold coincides with \citep{SN18}.
\end{example}

\begin{remark}
In Example~3 of \citep{dreveton2023exactAttributedSBM}, the authors study a semi-supervised variant of the symmetric two-community SBM. 
In our notation, this corresponds to a DBM with $k=2$, no erasures, and a noisy-label channel satisfying
\[
\mathbb{P}\bigl(U_i \neq X_i \mid X_i\bigr)=n^{-\alpha}.
\]
Based on their analysis, \cite{dreveton2023exactAttributedSBM} state that exact recovery is possible only if
\[
\lim_{n\to\infty}(\sqrt{a}-\sqrt{b})^2
-\frac{2}{\log n}\log\!\left(2\sqrt{n^{-\alpha}(1-n^{-\alpha})}\right) > 2,
\]
which simplifies to
\[
\frac{\alpha}{2}+\frac{a+b}{2}-\sqrt{ab} > 1.
\]
In contrast, the sharp threshold obtained in Example~\ref{eq:noisy_labels}, which matches the characterization in \citep{SN18}, 
does not agree with this condition. 
\end{remark}

\subsection{Two-Stage Algorithm for Exact Recovery}
In this section, we present a two-stage procedure that achieves exact recovery in the DBM with high probability and in polynomial time. The core idea is to split the observed graph into two independent subgraphs. We then use one subgraph for fast, approximate clustering and refine the clustering on the second subgraph by performing a local, per-node maximum a posteriori (MAP) estimation, which naturally incorporates both the node degree profiles and the auxiliary side information. 
Specifically, assuming that $\sigma'$ is the result of the approximate
clustering of the first stage of the algorithm, we set the degree profile
of each node
\[
\mathbf{d}(v) = \{d_1(v), d_2(v), \ldots , d_k(v) \}
\]
where $d_r(v)$ is the number of edges between node $v$ and nodes assigned by $\sigma'$ to community $r$ in the second subgraph.
We write $\mu_{s_r}^{(n)}$ for the $r$-th coordinate of the mean degree-profile vector
$\bm{\mu}_s^{(n)} = (\diag(P)\mathbf{Q})_s \log n$, i.e.,
\[
\mu_{s_r}^{(n)} = (\bm{\mu}_s^{(n)})_r = p_r Q_{rs}\log n,\qquad s,r\in[k].
\]

To execute the algorithm successfully, it is essential to resolve the inherent symmetries in the model using the observed graph structure and associated data labels. The following lemma establishes a sufficient condition under which the symmetry between any two communities can be broken with high probability, given access to $\Theta(n)$ data samples, of which $m$ may be contaminated.

\begin{lemma}[Scheff\'e test under $m$ corruptions]\label{thm:scheffe}
Let $P_X^{(1)}$ and $P_X^{(2)}$ be probability mass functions on a discrete alphabet $\mathcal{X}$.
Let $X_1,\dots,X_n$ be observations such that there exists an (unknown) index set
$\mathcal{I}\subseteq[n]$ with $|\mathcal{I}|\ge n-m$ for which the random variables
$\{X_i:i\in\mathcal{I}\}$ are i.i.d.\ with common law $P_X$, where
\[
P_X\in\{P_X^{(1)},P_X^{(2)}\},
\]
while the remaining $m$ observations $\{X_i:i\notin\mathcal{I}\}$ are arbitrary.

Consider testing
\[
H_1:\ P_X=P_X^{(1)}
\qquad\text{vs.}\qquad
H_2:\ P_X=P_X^{(2)}.
\]
Let the empirical measure be
\[
P_n(A):=\frac{1}{n}\sum_{i=1}^n \mathbbm{1}\{X_i\in A\},\qquad A\subseteq\mathcal{X},
\]
and define
\[
A^*=\arg\max_{A\subseteq\mathcal{X}} \bigl|P_X^{(1)}(A)-P_X^{(2)}(A)\bigr|.
\]
Consider the Scheff\'e decision rule
\[
\bigl|P_n(A^*)-P_X^{(1)}(A^*)\bigr|
\underset{H_1}{\overset{H_2}{\gtrless}}
\bigl|P_n(A^*)-P_X^{(2)}(A^*)\bigr|.
\]
If
\[
\mathrm{TV}\!\left(P_X^{(1)},P_X^{(2)}\right)>\frac{2m}{n},
\]
then the (prior-averaged) Bayes error probability $P_e$ of this test satisfies
\[
P_e \le 4\exp\!\left(-\frac{n}{2}\Bigl(\mathrm{TV}\!\left(P_X^{(1)},P_X^{(2)}\right)-\frac{2m}{n}\Bigr)^2\right).
\]
\end{lemma}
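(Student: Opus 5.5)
Write $\delta:=\mathrm{TV}(P_X^{(1)},P_X^{(2)})=|P_X^{(1)}(A^*)-P_X^{(2)}(A^*)|$. The plan is to bound the error under each hypothesis separately by the same quantity; the prior-averaged Bayes error is then a convex combination of the two conditional errors, so it obeys the same bound.

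The first step reduces the Scheffé rule to a deviation event. Fix $H_1$, so the clean samples have law $P_X^{(1)}$. The two targets $P_X^{(1)}(A^*)$ and $P_X^{(2)}(A^*)$ are at distance $\delta$. The rule errs only if $P_n(A^*)$ is at least as close to $P_X^{(2)}(A^*)$ as to $P_X^{(1)}(A^*)$. Since the midpoint separates the two targets, this forces $|P_n(A^*)-P_X^{(1)}(A^*)|\ge \delta/2$. The case $H_2$ is symmetric.

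The second step isolates the corruptions. Let $\tilde P(A^*):=\frac{1}{|\mathcal I|}\sum_{i\in\mathcal I}\mathbbm 1\{X_i\in A^*\}$ be the clean empirical frequency. Since $|\mathcal I|\ge n-m$ and each indicator lies in $[0,1]$, we have
\[
\Bigl|P_n(A^*)-\tfrac{|\mathcal I|}{n}\tilde P(A^*)\Bigr|\le \tfrac{m}{n}
\quad\text{and}\quad
\Bigl|\tfrac{|\mathcal I|}{n}\tilde P(A^*)-\tilde P(A^*)\Bigr|\le \tfrac{m}{n}.
\]
Together these give $|P_n(A^*)-\tilde P(A^*)|\le 2m/n$. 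A cleaner route is to compare $\frac1n\sum_{i\in\mathcal I}$ with its mean $\frac{|\mathcal I|}{n}P_X^{(1)}(A^*)$, which also yields a $2m/n$ slack. Combining with the first step, an error implies
\[
\Bigl|\tilde P(A^*)-P_X^{(1)}(A^*)\Bigr|\ge \frac{\delta}{2}-\frac{2m}{n}.
\]
The hypothesis $\delta>2m/n$ is meant to make this threshold positive.

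The third step applies Hoeffding's inequality to the $|\mathcal I|$ i.i.d.\ Bernoulli indicators, giving probability at most $2\exp(-2|\mathcal I|\,t^2)$ with $t$ the threshold above.

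The main obstacle is constant bookkeeping. To land exactly on $4\exp\bigl(-\frac n2(\delta-\frac{2m}{n})^2\bigr)$, the slack from the corruptions must be $m/n$ rather than $2m/n$. The threshold should then be $\frac12(\delta-\frac{2m}{n})$, so that $2t^2\cdot n=\frac n2(\delta-\frac{2m}{n})^2$. I would achieve this by working with the unnormalized sum $\frac1n\sum_{i=1}^n$ directly and noting that the corrupted terms shift it by at most $m/n$. I would center the clean part at its exact mean $\frac{|\mathcal I|}{n}P^{(1)}(A^*)$, whose distance from $P^{(1)}(A^*)$ can be absorbed by an analogous $m/n$ term. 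One could also handle the deviation of the clean sum directly via Hoeffding with $n$ replaced by $|\mathcal I|\ge n-m$, absorbing the difference into the factor $4$ or assuming $m=o(n)$. The factor $4$ comes from the two-sided Hoeffding bound, which gives $2$ per hypothesis; bounding the Bayes error by the sum of the two conditional errors gives the extra factor of $2$.
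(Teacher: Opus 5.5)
Your outline coincides with the paper's proof. Both use the midpoint (triangle-inequality) reduction to $|P_n(A^*)-P_X^{(1)}(A^*)|\ge\delta/2$, separate off the corrupted indices, apply a two-sided Hoeffding bound to the clean indicators, and use $P_e\le P_e(1)+P_e(2)$ to get the factor $4$.

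However, the one step that produces the stated constants is not actually established. Your explicit computation gives a corruption slack of $2m/n$, hence the threshold $\delta/2-2m/n$. That threshold is positive only when $\delta>4m/n$, so the hypothesis $\delta>2m/n$ does \emph{not} make it positive, contrary to your remark. Hoeffding then yields only $2\exp\bigl(-2|\mathcal I|(\delta/2-2m/n)^2\bigr)$.

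Your proposed repair bounds the shift from the corrupted indicators by $m/n$ and the centering error $\frac{|\mathcal I^c|}{n}P_X^{(1)}(A^*)$ by ``an analogous $m/n$ term.'' This is the same route you had already found gives $2m/n$: bounded separately, the two pieces always add to $2m/n$. The missing observation, which applies to both of your decompositions, is that the two pieces enter with opposite signs and must be bounded together. Write
\[
P_n(A^*)-P_X^{(1)}(A^*)=\frac1n\sum_{i\in\mathcal I}\bigl(\mathbbm{1}\{X_i\in A^*\}-P_X^{(1)}(A^*)\bigr)+\frac1n\sum_{i\notin\mathcal I}\bigl(\mathbbm{1}\{X_i\in A^*\}-P_X^{(1)}(A^*)\bigr).
\]
Each summand of the second sum equals $-P_X^{(1)}(A^*)$ or $1-P_X^{(1)}(A^*)$, so it lies in $[-1,1]$. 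Hence the second sum is at most $|\mathcal I^c|/n\le m/n$ in absolute value. An error therefore forces the clean sum to deviate from its mean $|\mathcal I|P_X^{(1)}(A^*)$ by at least $n\tau$, with $\tau=\frac12\bigl(\delta-\frac{2m}{n}\bigr)$, which is positive exactly under the stated hypothesis. This is the inequality $|S_n-nP_X^{(1)}(A^*)|\le|S_{\mathcal I}-(n-m)P_X^{(1)}(A^*)|+m$ that the paper relies on, with $|\mathcal I|=n-m$. The combined-sum observation is also what justifies the paper's terse passage from $|S_n-(n-m)P_X^{(1)}(A^*)|$ to $|S_n-nP_X^{(1)}(A^*)|$.

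With this, the Hoeffding step needs none of your fallbacks. Hoeffding over the $|\mathcal I|$ clean indicators at deviation level $n\tau$ gives $2\exp\bigl(-2n^2\tau^2/|\mathcal I|\bigr)\le 2\exp(-2n\tau^2)=2\exp\bigl(-\frac n2(\delta-\frac{2m}{n})^2\bigr)$, since $|\mathcal I|\le n$; this is the paper's remark that $n-m\le n$. The apparent loss in your Step~3 comes only from renormalizing by $|\mathcal I|$ while keeping the threshold fixed.

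The alternatives you mention would not prove the lemma as stated. The discrepancy between $\exp(-2|\mathcal I|\tau^2)$ and $\exp(-2n\tau^2)$ is a factor of up to $\exp(2m\tau^2)$. That factor is unbounded when $m\tau^2\to\infty$ (e.g.\ $m$ a constant fraction of $n$), so it cannot be absorbed into the constant $4$. And $m=o(n)$ is not among the hypotheses.
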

\begin{proof}
Let $A^*\in\arg\max_{A\subseteq\mathcal X}\bigl|P_X^{(1)}(A)-P_X^{(2)}(A)\bigr|$, so that
\[
\bigl|P_X^{(1)}(A^*)-P_X^{(2)}(A^*)\bigr|=\mathrm{TV}\!\left(P_X^{(1)},P_X^{(2)}\right).
\]
Define $S_n:=\sum_{i=1}^n \mathbbm{1}\{X_i\in A^*\}$, so that $P_n(A^*)=S_n/n$.

We first bound the type-I error under $H_1$ (i.e., $P_X=P_X^{(1)}$ on the uncorrupted samples).
An error occurs only if the empirical mass $P_n(A^*)$ is closer to $P_X^{(2)}(A^*)$ than to $P_X^{(1)}(A^*)$, i.e.
\[
\bigl|P_n(A^*)-P_X^{(1)}(A^*)\bigr|>\bigl|P_n(A^*)-P_X^{(2)}(A^*)\bigr|.
\]
By the triangle inequality,
\[
\bigl|P_X^{(2)}(A^*)-P_X^{(1)}(A^*)\bigr|
\le
\bigl|P_n(A^*)-P_X^{(1)}(A^*)\bigr|
+
\bigl|P_n(A^*)-P_X^{(2)}(A^*)\bigr|,
\]
thus on the error event we must have
\[
2\bigl|P_n(A^*)-P_X^{(1)}(A^*)\bigr|
>
\bigl|P_X^{(2)}(A^*)-P_X^{(1)}(A^*)\bigr|
=
\mathrm{TV}\!\left(P_X^{(1)},P_X^{(2)}\right).
\]
Hence
\begin{equation}\label{eq:typeI_reduce}
P_e(1)
\le
\Pr_{H_1}\!\left\{\,2\bigl|P_n(A^*)-P_X^{(1)}(A^*)\bigr|
>
\mathrm{TV}\!\left(P_X^{(1)},P_X^{(2)}\right)\right\}.
\end{equation}

Let $\mathcal I\subseteq[n]$ be the (unknown) index set of uncorrupted samples, with $|\mathcal I|\ge n-m$.
Write
\[
S_n = \sum_{i\in\mathcal I}\mathbbm{1}\{X_i\in A^*\} \;+\; \sum_{i\notin\mathcal I}\mathbbm{1}\{X_i\in A^*\}
=: S_{\mathcal I}+S_{\mathcal I^c}.
\]
Under $H_1$, the variables $\{\mathbbm{1}\{X_i\in A^*\}: i\in\mathcal I\}$ are i.i.d.\ Bernoulli with mean
$P_X^{(1)}(A^*)$, while $S_{\mathcal I^c}\in[0,m]$ is arbitrary.
Therefore,
\[
\bigl|S_n-(n-m)P_X^{(1)}(A^*)\bigr|
\le
\bigl|S_{\mathcal I}-(n-m)P_X^{(1)}(A^*)\bigr| + m,
\]
and dividing by $n$ gives
\[
\bigl|P_n(A^*)-P_X^{(1)}(A^*)\bigr|
=
\frac{1}{n}\bigl|S_n-nP_X^{(1)}(A^*)\bigr|
\le
\frac{1}{n}\bigl|S_{\mathcal I}-(n-m)P_X^{(1)}(A^*)\bigr|+\frac{m}{n}.
\]
Plugging this into \eqref{eq:typeI_reduce} yields
\[
P_e(1)
\le
\Pr_{H_1}\!\left\{
\frac{2}{n}\bigl|S_{\mathcal I}-(n-m)P_X^{(1)}(A^*)\bigr|
>
\mathrm{TV}\!\left(P_X^{(1)},P_X^{(2)}\right)-\frac{2m}{n}
\right\}.
\]
By Hoeffding's inequality for sums of $(n-m)$ i.i.d.\ Bernoulli variables,
\begin{align*}
	&\Pr_{H_1}\!\left\{
\bigl|S_{\mathcal I}-(n-m)P_X^{(1)}(A^*)\bigr|
>
\frac{n}{2}\Bigl(\mathrm{TV}(P_X^{(1)},P_X^{(2)})-\frac{2m}{n}\Bigr)
\right\}\\
&\quad\quad\quad\le
2\exp\!\left(
-\frac{1}{2}n\Bigl(\mathrm{TV}(P_X^{(1)},P_X^{(2)})-\frac{2m}{n}\Bigr)^2
\right),
\end{align*}
where we used $n-m\le n$ to simplify the exponent.
Hence,
\[
P_e(1)\le
2\exp\!\left(
-\frac{1}{2}n\Bigl(\mathrm{TV}(P_X^{(1)},P_X^{(2)})-\frac{2m}{n}\Bigr)^2
\right).
\]
The same argument under $H_2$ gives
\[
P_e(2)\le
2\exp\!\left(
-\frac{1}{2}n\Bigl(\mathrm{TV}(P_X^{(1)},P_X^{(2)})-\frac{2m}{n}\Bigr)^2
\right).
\]
Finally, for any prior on $\{H_1,H_2\}$, the Bayes error satisfies
\(
P_e \le P_e(1)+P_e(2),
\)
and therefore
\[
P_e \le
4\exp\!\left(
-\frac{1}{2}n\Bigl(\mathrm{TV}(P_X^{(1)},P_X^{(2)})-\frac{2m}{n}\Bigr)^2
\right).
\]
\end{proof}

The proposed two-stage recovery algorithm is as follows:

\vspace{1ex}
\noindent
{\sc Exact Recovery Algorithm for Data Block Model  (DBM)}\\
\noindent\textbf{Inputs:}
\begin{itemize}
    \item A graph \( g([n], \mathcal{E}) \) with vertex set \([n]\) and edge set \( \mathcal{E} \),
    \item Data labels \( L(v) \in \mathcal{U} \) for all \( v \in [n] \),
    \item DBM parameters:
    \begin{itemize}
        \item A discrete distribution \( P \) over \([k] \),
        \item An edge probability matrix \( \mathbf{Q} \in \mathbb{R}_{+}^{k \times k} \),
        \item Conditional data label distributions \( P_{U|X}(u|s) \) for \( u \in \mathcal{U}, s \in [k] \),
        \item A splitting parameter \( \gamma \in (0,1) \) and \(\delta \in (0,1)\) error parameter for {\sc Sphere-comparison} algorithm.
    \end{itemize}
\end{itemize}

\noindent\textbf{Output:} A community assignment \( \sigma'' \in [k]^n \) classifying each node \( v \in [n] \) into a community \( c \in [k] \).

\begin{enumerate}
    \item \textbf{Graph Splitting:} \\
    Independently include each edge of \( g \) in a new edge set \( \mathcal{E}' \) with probability \( \gamma \). Let \( \mathcal{E}'' = \mathcal{E} \setminus \mathcal{E}' \). Define subgraphs \( g'([n], \mathcal{E}') \) and \( g''([n], \mathcal{E}'') \).
    
    \item \textbf{Initial Community Estimation:} \label{alg:SC} \\
    Apply the \textsc{Sphere-comparison} algorithm from~\citep{abbe015} to \( g' \), yielding an approximate community assignment \( \sigma' \in [k]^n \).

        \item \textbf{Symmetry Identification:} \\
    Define the set of (approximately) indistinguishable community pairs:
    \[
    \mathcal{S} = \left\{ (s,t) \in [k] \times [k]~\middle|~
    \begin{aligned}
        & p_s = p_t, \\
        & \bm{\mu}_s = \bm{\mu}_t, \\
        & \mathrm{TV}\left(P^{(n)}_{U|X}(\cdot|s), P^{(n)}_{U|X}(\cdot|t)\right) \le \frac{2\delta}{\log n}
    \end{aligned}
    \right\}.
    \]
    Pairs in $\mathcal{S}$ correspond to communities with identical connectivity profiles, for which the attribute distributions are too close (relative to the contamination level from Step~\ref{alg:SC}) to be reliably separated in the permutation-resolution step. Accordingly, we only aim to recover labels up to relabeling within the equivalence classes induced by $\mathcal{S}$. For $(s,t)\notin \mathcal{S}$, Lemma~\ref{thm:scheffe} implies that the symmetry between $s$ and $t$ can be broken with high probability.

    \item \textbf{Refined Classification via MAP Estimation:} \\
    For each node \( v \in [n] \), compute its degree profile \( \mathbf{d}(v) \) in \( g'' \) based on the preliminary classification \( \sigma' \). Determine the final assignment \( \sigma''(v) \) using a sequence of \( k-1 \) pairwise MAP hypothesis tests:
    
    \begin{itemize}
        \item If \( (s,t) \in \mathcal{S} \), use:
        \[
        \Pr\{ \mathbf{d}(v) \mid H = s \} \cdot p_s > \Pr\{ \mathbf{d}(v) \mid H = t \} \cdot p_t \quad \Rightarrow \quad H \neq t
        \]
        
        \item If \( (s,t) \notin \mathcal{S} \), use:
        \[
        \Pr\{ \mathbf{d}(v), L(v) \mid H = s \} \cdot p_s > \Pr\{ \mathbf{d}(v), L(v) \mid H = t \} \cdot p_t \quad \Rightarrow \quad H \neq t
        \]
    \end{itemize}
    
    \item \textbf{Return:} Output the final community assignment \( \sigma'' \in [k]^n \).
\end{enumerate}

In Subsection~\ref{sub:suff} we first show that in Theorem~\ref{thm:main_ach} if $D_{s,t} > 1$ for all $s,t \in [k], s \neq t$ then {\sc Exact Recovery Algorithm for DBM} finds the correct communities (up to symmetry) with a probability of error that tends to zero as $n$ tends to infinity. Later, in subsection~\ref{sub:nec} we show that if for any $s,t \in [k], s \neq t$ we have $D_{s,t} < 1$,  then any algorithm that tries to find the communities of nodes from $(G_n,U^n)$ will have a probability of error that does not vanish.

\subsection{Proof of Sufficiency}
\label{sub:suff}
\begin{proof}[Proof of Theorem \ref{thm:main_ach}]
There is an $N_0 \in \mathbb{N}$ and  $\epsilon > 0$ 
such that
for any $n \ge N_0$ and $s,t \in [k], s \neq t$ we have
\begin{equation}
	D_{\mathrm{CT}}\left(\mathcal{P}_{\bm{\mu}_s^{(n)}}, P^{(n)}_{U|X}(\cdot|s)  \middle\|  \mathcal{P}_{\bm{\mu}_t^{(n)}}, P^{(n)}_{U|X}(\cdot|t) \right)
\ge (1 + \epsilon) \log n. \label{eq: first_ineq}
\end{equation}
For the CT-Divergence of the  distributions in Theorem~\ref{thm:main_ach}, we have
\begin{align}
& \exp\left(-D_{\mathrm{CT}}\left( \mathcal{P}_{\bm{\mu}_s^{(n)}}, P^{(n)}_{U|X}(\cdot|s)   \middle\|  \mathcal{P}_{\bm{\mu}_t^{(n)}}, P^{(n)}_{U|X}(\cdot|t) \right) \right) \nonumber \\
& \quad \quad = 
\sum_{u \in \mathcal{U}}
 \min_{\lambda_u \in [0,1]} \Biggl\{(P^{(n)}_{U|X}(u|s))^{\lambda_u}
 (P^{(n)}_{U|X}(u|t))^{1- \lambda_u} 
 \nonumber\\
& \quad\quad\quad\quad\quad \times \exp\left(
 \sum_{r=1}^k  -\lambda_u \mu_{s_r}^{(n)} - 
(1-\lambda_u) \mu_{t_r}^{(n)}  + (\mu_{s_r}^{(n)})^{\lambda_u} (\mu_{t_r}^{(n)})^{1-\lambda_u}
\right)\Biggr\},
\label{eq:lam_u^*} \\
& \quad \quad =
\sum_{u \in \mathcal{U}}
 (P^{(n)}_{U|X}(u|s))^{\lambda_u^*}  
 (P^{(n)}_{U|X}(u|t))^{1-\lambda_u^*} \nonumber\\
& \quad\quad\quad\quad\quad \times \exp\left(
 \sum_{r=1}^k  -\lambda_u^* \mu_{s_r}^{(n)} - 
(1-\lambda_u^*) \mu_{t_r}^{(n)}  + (\mu_{s_r}^{(n)})^{\lambda_u^*} (\mu_{t_r}^{(n)})^{1-\lambda_u^*}
\right),
\label{eq:lam_u^*-v2} 
\end{align}
where $\lambda_u^*$ is the optimal value that minimizes expression in \eqref{eq:lam_u^*} for each $u \in \mathcal{U}$. 
The value of $\lambda_u^*$ satisfies
\begin{align}
& \log(P^{(n)}_{U|X}(u|s))-\log(P^{(n)}_{U|X}(u|t)) \nonumber \\
& \quad\quad + \sum_{r=1}^k \Bigl(
-\mu_{s_r}^{(n)} + \mu_{t_r}^{(n)}
+ (\log(\mu_{s_r}^{(n)}) - \log(\mu_{t_r}^{(n)}))
(\mu_{s_r}^{(n)})^{\lambda_u^*} (\mu_{t_r}^{(n)})^{1-\lambda_u^*}
\Bigr)=0,
\label{eq:diff_lam}
\end{align}
which implies
\begin{align}
 & P^{(n)}_{U|X}(u|s) \prod_{r=1}^k \exp(-\mu_{s_r}^{(n)}) 
(\mu_{s_r}^{(n)})^{(\mu_{s_r}^{(n)})^{\lambda_u^*} (\mu_{t_r}^{(n)})^{1-\lambda_u^*}}  \nonumber \\
 &\quad\quad  = 
P^{(n)}_{U|X}(u|t) \prod_{r=1}^k \exp(-\mu_{t_r}^{(n)}) 
(\mu_{t_r}^{(n)})^{(\mu_{s_r}^{(n)})^{\lambda_u^*} (\mu_{t_r}^{(n)})^{1-\lambda_u^*}}.
\label{eq:eq_prob}
\end{align}
From \eqref{eq: first_ineq} and \eqref{eq:lam_u^*-v2}, we derive
\begin{align}
& \sum_{u \in \mathcal{U}}
 (P^{(n)}_{U|X}(u|s))^{\lambda_u^*}  
 (P^{(n)}_{U|X}(u|t))^{1-\lambda_u^*} \nonumber \\
&\quad\quad \times\exp\left(
 \sum_{r=1}^k  -\lambda_u^* \mu_{s_r}^{(n)} - 
(1-\lambda_u^*) \mu_{t_r}^{(n)}  + (\mu_{s_r}^{(n)})^{\lambda_u^*} (\mu_{t_r}^{(n)})^{1-\lambda_u^*}
\right)  \le  n^{-1-\epsilon}.
\label{eq:-1-e}
\end{align}
We will provide an upper bound on the error probability in the hypothesis testing problem.
\begin{align*}
   \sum_{u \in \mathcal{U}}  & \sum_{\mathbf{x} \in \mathbb{Z}^k_+} 
 	\min\left\{
	p_s P^{(n)}_{U|X}(u|s) \prod_{r=1}^k \exp(-\mu_{s_r}^{(n)} )\frac{ (\mu_{s_r}^{(n)})^{x_r}}{x_r !} , 
	p_t P^{(n)}_{U|X}(u|t) \prod_{r=1}^k \exp(-\mu_{t_r}^{(n)} )\frac{ (\mu_{t_r}^{(n)})^{x_r}}{x_r !}
	\right\} \\
& \overset{\text{(a)}}{\le}
\max\{p_s,p_t\}\sum_{u \in \mathcal{U}}  \sum_{\mathbf{x} \in \mathbb{Z}^k_+} 
(P^{(n)}_{U|X}(u|s))^{\lambda_u^*}  
 (P^{(n)}_{U|X}(u|t))^{1-\lambda_u^*} \nonumber \\
&\quad\quad \times \prod_{r=1}^k \exp( -\lambda_u^* \mu_{s_r}^{(n)} - 
(1-\lambda_u^*) \mu_{t_r}^{(n)}) 
\frac{  (\mu_{s_r}^{(n)})^{\lambda_u^* x_r} (\mu_{t_r}^{(n)})^{(1-\lambda_u^*) x_r}
 }{x_r!} \\
& =
 \max\{p_s,p_t\} \sum_{u \in \mathcal{U}}
 (P^{(n)}_{U|X}(u|s))^{\lambda_u^*}  
 (P^{(n)}_{U|X}(u|t))^{1-\lambda_u^*} \nonumber \\ 
&\quad\quad \times \exp\left(  \sum_{r=1}^k 
 -\lambda_u^* \mu_{s_r}^{(n)} - 
(1-\lambda_u^*) \mu_{t_r}^{(n)} + (\mu_{s_r}^{(n)})^{\lambda_u^*} (\mu_{t_r}^{(n)})^{1-\lambda_u^*}
 \right) \\
 & \overset{\text{(b)}}{\le}
 \max\{p_s,p_t\} n^{-1 -\epsilon },
\end{align*}
where $(a)$ follows from $\min\{x,y\} \le x^\lambda y ^{1-\lambda} , \lambda \in [0,1]$
and $(b)$ follows from \eqref{eq:-1-e}. 
 
 If the exact recovery algorithm has access to the exact degree profiles of each node, the misclassification error for each node can be bounded from above as follows:
\begin{align*}
P_e(v) & \le \frac{C}{\sqrt{(1-\gamma)\log n}}  
  \sum_{u \in \mathcal{U}}
 (P^{(n)}_{U|X}(u|s))^{\lambda_u^*}  
 (P^{(n)}_{U|X}(u|t))^{1-\lambda_u^*}  \\
&\quad\quad \times \exp\left(  (1{-}\gamma) \sum_{r=1}^k 
 -\lambda_u^* \mu_{s_r}^{(n)} {-} 
(1{-}\lambda_u^*) \mu_{t_r}^{(n)} {+} (\mu_{s_r}^{(n)})^{\lambda_u^*} (\mu_{t_r}^{(n)})^{1-\lambda_u^*}
 \right) \\
 & \le
\frac{C  \exp(  D_+(\bm{\mu}_s \| \bm{\mu}_t )\gamma \log n) }{\sqrt{(1-\gamma)\log n}}
 \sum_{u \in \mathcal{U}}
 (P^{(n)}_{U|X}(u|s))^{\lambda_u^*}  
 (P^{(n)}_{U|X}(u|t))^{1-\lambda_u^*}  \\ 
 &\quad\quad\times \exp\left(  \sum_{r=1}^k 
 -\lambda_u^* \mu_{s_r}^{(n)} - 
(1-\lambda_u^*) \mu_{t_r}^{(n)} + (\mu_{s_r}^{(n)})^{\lambda_u^*} (\mu_{t_r}^{(n)})^{1-\lambda_u^*}
 \right) \\
 & \le
 C n^{-1-\epsilon }\exp( D_+(\bm{\mu}_s \| \bm{\mu}_t) \gamma \log n).
\end{align*} 
However, the {\sc Sphere-Comparison} algorithm in Step \ref{alg:SC} of the {\sc Exact Recovery Algorithm for DBM} has a probability of error of $\delta/\log n$.
Thus, the degree profiles are distorted.
The total number of nodes with misclassified labels is given by
$\delta n (1+o(1))/ \log n  $. The probability of the degree profile of a given node based on the reported labels, divided by the probability of the degree profile based on the actual labels, can be bounded  by
\begin{align*}
  &  \frac{
    \prod_{r=1}^k e^{-\mu_{s_r}^{(n)}} (\mu_{s_r}^{(n)})^{d_r}/d_r!
    }{
    \prod_{r=1}^k e^{-\mu_{s_r}^{(n)}} (\mu_{s_r}^{(n)})^{d_r+\delta_r}/(d_r+\delta_r)!
    }
    \le
    \prod_{r=1}^k (\mu_{s_r}^{(n)})^{-\delta_r}
    \frac{
      (d_r+\delta_r)!
    }{
     d_r!
    }    \\
  &  \overset{(a)}{\le}
    \exp\Big(
    \sum_{r=1}^k -\delta_r \log(\mu_{s_r}^{(n)}) {+} \frac{1}{2}\log(1+\frac{\delta_r}{d_r}) {-} d_r \log(d_r) {-} \delta_r {+} \frac{1}{12}  {+} (d_r+\delta_r)\log(d_r+\delta_r)\Big) \\
    & \overset{(b)}{\le}
    C_1 \exp(C_2 \delta \log\log n).
\end{align*}
Here, $\delta_r$ is the number of misclassified labels in the community $r$ 
by {\sc Sphere Comparison} algorithm,
$(a)$ follows from \cite{robbins1955remark} bounds 
\[
\sqrt{2\pi n} \left(\frac{n}{e}\right)^n e^{\frac{1}{12(n+1)}} \le n! \le \sqrt{2\pi n} \left(\frac{n}{e}\right)^n e^{\frac{1}{12 n}}
\]
and $(b)$ follows from the fact that
$|\delta_r| \le \max_{i,j} Q_{i,j} \delta$ and $c_1 \log n \le d_r \le c_3 \log n$ for a large enough $c_3$ with probability $1-1/n^2$.
Thus, the Bayes hypothesis testing error with mis-classified labels is upper bounded by the Bayes hypothesis testing error with actual labels multiplied by a factor $C_1 \exp(C_2 \delta \log\log n)$. Combining this with the bound for the true degree profiles yields
\begin{align*}
  P_e(v) &\le  
  \sum_{u,\mathbf{d}: P^{(n)}_{U|X}(u|s) \mathcal{P}_{\bm{\mu}_s^{(n)}}(\mathbf{d}+\boldsymbol{\delta}) <
  P^{(n)}_{U|X}(u|t) \mathcal{P}_{\bm{\mu}_t^{(n)}}(\mathbf{d}+\boldsymbol{\delta})}
  P^{(n)}_{U|X}(u|s) \mathcal{P}_{\bm{\mu}_s^{(n)}}(\mathbf{d})
  \\
  &\quad +
  \sum_{u,\mathbf{d}: P^{(n)}_{U|X}(u|s) \mathcal{P}_{\mu_t^{(n)}}(\mathbf{d}+\boldsymbol{\delta}) >
  P^{(n)}_{U|X}(u|t) \mathcal{P}_{\mu_t^{(n)}}(\mathbf{d}+\boldsymbol{\delta})}
  P^{(n)}_{U|X}(u|t) \mathcal{P}_{\mu_t^{(n)}}(\mathbf{d})
  \\
  & \le C_1 \exp(C_2 \delta \log\log n) \hspace{-5ex}
    \sum_{u,\mathbf{d}: P^{(n)}_{U|X}(u|s) \mathcal{P}_{\mu_s^{(n)}}(\mathbf{d}+\boldsymbol{\delta}) <
  P^{(n)}_{U|X}(u|t) \mathcal{P}_{\mu_t^{(n)}}(\mathbf{d}+\boldsymbol{\delta})} \hspace{-5ex}
  P^{(n)}_{U|X}(u|s) \mathcal{P}_{\mu_s^{(n)}}(\mathbf{d}+\boldsymbol{\delta})
  \\
  &\quad +
  \sum_{u,\mathbf{d}: P^{(n)}_{U|X}(u|s) \mathcal{P}_{\mu_t^{(n)}}(\mathbf{d}+\boldsymbol{\delta}) >
  P^{(n)}_{U|X}(u|t) \mathcal{P}_{\mu_t^{(n)}}(\mathbf{d}+\boldsymbol{\delta})}
  P^{(n)}_{U|X}(u|t) \mathcal{P}_{\mu_t^{(n)}}(\mathbf{d}+\boldsymbol{\delta})
  \\
  & = C_1 \exp(C_2 \delta \log\log n)
  \sum_{u,\mathbf{d}} \min\Big\{
 P^{(n)}_{U|X}(u|s) \mathcal{P}_{\mu_s^{(n)}}(\mathbf{d}),
 P^{(n)}_{U|X}(u|t) \mathcal{P}_{\mu_t^{(n)}}(\mathbf{d})
  \Big\}
  \\
  &\le
  C_1 \exp(C_2 \delta \log\log n) C n^{-1-\epsilon}
  \exp(D_+(\boldsymbol{\mu}_s \| \boldsymbol{\mu}_t)\gamma \log n)
  \\
  &\le
  C_1 C n^{-1-\epsilon + c(\delta,\gamma)}
\end{align*}
where the vector $\boldsymbol{\delta}$ represents the changes in
true degree profiles due to misclassified nodes by {\sc Sphere-Comparison} algorithm,
and
\[
c(\delta, \gamma) = C_2 \delta \frac{\log \log n}{\log n }  + \gamma \max_{s,t} D_+(\bm{\mu}_s \| \bm{\mu}_t) .
\]
If we choose $\delta$ and $\gamma$ such that $c(\delta,\gamma) < \epsilon$, then $P_e(v) = o(1/n)$.
This completes the proof.
\end{proof}

\subsection{Converse results}
\label{sub:nec}
First, we prove that exact recovery is not solvable if the total variation between the product distribution
of data labels and degree profiles for a pair of communities is less than a certain threshold.

\begin{lemma}
\label{thm:TV}
Let $P$ be a prior distribution on $[k]$, $\mathbf{Q}\in \mathbb{R}_+^{k\times k}$ be a symmetric matrix, and $\{P^{(n)}_{U|X}\}_{n=1}^{\infty}$ be discrete measures on the set $\{\mathcal{U}^{(n)}\}_{n=1}^{\infty}$. For all integers $n\geq 1$, let \[(G_n,X^n,U^n)\sim \mathrm{DBM}\biggl(n,k,P,\frac{\log n}{n}\mathbf{Q},
	 P^{(n)}_{U|X}\biggr).\] 
Exact recovery for 
$(G_n,X^n,U^n)$
is not solvable whenever there exists  an $\epsilon > 0$
such that
\[
\mathrm{TV}\Bigl(\mathcal{P}_{\boldsymbol{\mu}^{(n)}_s} \times P_{U|X}^{(n)}(\cdot|s)  , 
               \mathcal{P}_{\boldsymbol{\mu}^{(n)}_t}  \times P_{U|X}^{(n)}(\cdot|t) \Bigr)
\le 1 - n^{-1 +\epsilon}.
\]
\end{lemma}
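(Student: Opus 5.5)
We follow the genie-aided converse of Abbe--Sandon for the logarithmic-degree SBM, adapted to node data. Fix the pair $s\neq t$ for which the total-variation condition holds. We will show that, with probability bounded away from zero, some vertex $v$ with $X_v=s$ exists whose label cannot be decided better than by a test between the two product laws $\mathcal{P}_{\bm{\mu}^{(n)}_s}\times P^{(n)}_{U|X}(\cdot|s)$ and $\mathcal{P}_{\bm{\mu}^{(n)}_t}\times P^{(n)}_{U|X}(\cdot|t)$.

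\textbf{Step 1 (genie and reduction to a local test).} Give the decoder, for free, the labels $X_w$ of all vertices outside a set $\mathcal{T}$. Take $\mathcal{T}$ to be a random set of $n/\log^3 n$ vertices, chosen uniformly and independently of everything else. This extra information only helps, so it suffices to show that even the genie-aided MAP estimator fails. With high probability $\mathcal{T}$ spans no edges and no two of its vertices share a neighbor in a way that matters: the expected number of edges inside $\mathcal{T}$ is $O(|\mathcal{T}|^2\log n/n)=o(1)$. Conditioned on the outside labels, the posterior on $(X_v)_{v\in\mathcal{T}}$ therefore factorizes across $v\in\mathcal{T}$. For each $v$, the sufficient statistic is $(\mathbf{d}(v),U^{(v)})$, where $\mathbf{d}(v)$ is the vector of edge counts from $v$ into each (known) community outside $\mathcal{T}$. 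Given $X_v=r$, the data $U^{(v)}\sim P^{(n)}_{U|X}(\cdot|r)$ and, independently, $\mathbf{d}(v)$ is a vector of independent binomials. These binomials are within total variation $o(n^{-1})$-scale relative error of $\mathcal{P}_{\bm{\mu}^{(n)}_r}$; we control this by standard Poisson approximation, using that community sizes concentrate at $p_j n(1\pm n^{-1/3})$.

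\textbf{Step 2 (per-vertex error lower bound).} For the binary test between $s$ and $t$ with priors $p_s,p_t$, the Bayes error equals
\[
\sum\min\{p_s P_s,\,p_t P_t\}\ \ge\ \min\{p_s,p_t\}\bigl(1-\mathrm{TV}(P_s,P_t)\bigr),
\]
where $P_r$ denotes the product law. By hypothesis this is at least $c\,n^{-1+\epsilon}$. We must make sure the Poisson approximation error and the extra hypotheses $r\notin\{s,t\}$ do not destroy this bound. To do so, we restrict to the event that the vertex is in community $s$ or $t$ and consider the MAP decision restricted to $\{s,t\}$. Any global estimator errs on $v$ at least when the pairwise test errs, up to an additive approximation error. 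That error must be $o(n^{-1+\epsilon})$. This is the delicate point: a plain TV bound on the Poisson approximation of a $\mathrm{Bin}(n,\Theta(\log n/n))$ is only $O(\log n/n)$, which is acceptable since it is $o(n^{-1+\epsilon})$. We also need to handle the fact that the outside community counts are random, which we do by conditioning on typical counts.

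\textbf{Step 3 (many independent chances).} The per-vertex failure events for $v\in\mathcal{T}$ are conditionally independent given the outside configuration, since edges from distinct $v$ to the outside are disjoint sets of independent variables. The expected number of $v\in\mathcal{T}$ with $X_v\in\{s,t\}$ that the MAP misclassifies is therefore at least $\Omega(n^{-1+\epsilon}\cdot n/\log^3 n)\to\infty$. Independence then gives that at least one error occurs with probability at least $1-\exp(-n^{\epsilon}/\log^3 n)\to1$. Because the true labels outside $\mathcal{T}$ are known, there is no permutation ambiguity to exploit. An error at a single vertex thus means that exact recovery fails, with probability tending to one.

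The main obstacle is Step 2. It requires making rigorous that the genie-aided likelihood is exactly the product of the data law and a near-Poisson degree law. The error of this approximation must be shown to be negligible relative to $n^{-1+\epsilon}$, uniformly over the data alphabet $\mathcal{U}^{(n)}$, which may grow with $n$. We would first try a coupling of the binomials with Poissons, which costs $O(\log n/n)$ in total variation independently of $\mathcal{U}^{(n)}$, since $U^{(v)}$ is independent of the degrees given $X_v$.
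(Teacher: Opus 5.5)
Your overall plan is the paper's: a genie reveals all labels outside a random set of size about $n/\log^3 n$, each vertex's error is bounded below by $\min\{p_s,p_t\}(1-\mathrm{TV})$, and there are many conditionally independent chances to err. However, two steps fail as written.

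\textbf{Step~1: $\mathcal{T}$ is not edgeless.} The edge count is wrong: $\binom{|\mathcal{T}|}{2}\cdot\Theta(\log n/n)=\Theta(n/\log^5 n)\to\infty$. So $\mathcal{T}$ contains many edges, the genie-aided posterior on $(X_v)_{v\in\mathcal{T}}$ does not factorize, and the independence of per-vertex MAP errors used in Step~3 is unsupported. Shrinking $\mathcal{T}$ does not help. An edgeless $\mathcal{T}$ needs $|\mathcal{T}|=o(\sqrt{n/\log n})$, while Step~3 needs $|\mathcal{T}|\,n^{-1+\epsilon}\to\infty$, and these are incompatible when $\epsilon\le 1/2$. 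The missing ingredient is the one the paper uses through the event $\mathcal{R}$. Each vertex of $\mathcal{T}$ has only $O(\log^{-2}n)$ expected neighbours inside $\mathcal{T}$, so restrict attention to vertices of $\mathcal{T}$ with no neighbour in $\mathcal{T}$. Their statistics $(U^{(v)},\mathbf{d}(v))$ are computable from the revealed labels and are conditionally independent. Instead of a factorized posterior, compare the true labelling with single-vertex flips. Such a flip changes the posterior by the per-vertex likelihood ratio times a factor $1+O(\log^{-2}n)$ from non-edges inside $\mathcal{T}$, which must be absorbed into the decision margin. If, with high probability, the truth is not the unique posterior maximizer, then the genie-aided MAP fails with probability at least $1/2-o(1)$.

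\textbf{Step~2: the additive error budget cannot be met by any coupling.} Given $X_v=r$, the law of $\mathbf{d}(v)$ is $\prod_j\mathrm{Bin}\bigl(|\mathcal{C}_j\setminus\mathcal{T}|,\,Q_{rj}\log n/n\bigr)$. Its means differ from $(\bm{\mu}^{(n)}_r)_j$ by a relative $\Theta(\log^{-3}n)$ because $\mathcal{T}$ has been removed. On top of that come community-size fluctuations of relative order $n^{-1/2}$ up to logarithms; your $n^{-1/3}$ is weaker still. The Binomial--Poisson step at $O(\log n/n)$ is not the bottleneck. The mean mismatch alone puts the actual law at total-variation distance of order $\log^{-5/2}n$ from $\mathcal{P}_{\bm{\mu}^{(n)}_r}$, which dwarfs $n^{-1+\epsilon}$. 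The triangle inequality therefore gives a vacuous lower bound, however you condition on typical counts.

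What works is a pointwise, multiplicative comparison:
\begin{itemize}
\item uniformly over profiles with all $d_j\le C\log n$, the ratio of actual to nominal probability is $1+o(1)$;
\item the data factor is exact, so the growth of $\mathcal{U}^{(n)}$ is irrelevant;
\item the nominal mass of profiles with some $d_j>C\log n$ is $n^{-\Omega(C)}\ll n^{-1+\epsilon}$ for large $C$.
\end{itemize}
Hence the overlap $\sum\min\{p_s\tilde P_s,p_t\tilde P_t\}$ of the true per-vertex laws $\tilde P_r$ is still at least $(1-o(1))\min\{p_s,p_t\}\,n^{-1+\epsilon}$. The paper's own proof simply plugs in the nominal Poisson laws at this point, so your worry is legitimate; it is the coupling fix you propose that fails.
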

\begin{proof}
    Let $\mathcal{C}_r$ denote the set of nodes in the vertex set
$\mathcal V_n$ that belong to the community $r$ for $r \in [k]$.
Denote $\mathcal{H}$ to be a random subset of nodes that
belong to $\mathcal{C}_s \cup \mathcal{C}_t$ defined by choosing
every vertex  in $\mathcal{C}_s \cup \mathcal{C}_t$ independently at random with probability
$\frac{1}{\log^3 n}$. 
We consider a genie-aided problem where the community of all nodes outside the set $\mathcal{H}$ is revealed, and a given algorithm $\mathscr{A}$ determines the community of the nodes in $\mathcal{H}$.

The algorithm $\mathscr{A}$ must assign each node $v\in\mathcal{H}$, based on its observed pair $(u,\mathbf{d})\in\mathcal{U}^{(n)}\times\mathbb{Z}_+^k$, i.e., data label $u$ and degree profile $\mathbf{d}$ to either community $s$ or $t$.

Let $\mathcal{I}_s$ denote the set of data labels and degree profiles that the algorithm $\mathscr{A}$ assigns to the community $s$, and $\mathcal{I}_t$ represents the set of data labels and degree profiles that the algorithm assigns to the community $t$. Thus, the set of data labels and degree profiles partitions into $\mathcal{I}_s \sqcup \mathcal{I}_t$. 

The Bayes error of the binary hypothesis test for any algorithm $\mathscr{A}$ can be lower bounded by
\begin{align*}
    P_e(v) &\ge \min\{p_s,p_t\} \left(\sum_{\langle u, \mathbf{d}(u) \rangle \in \mathcal{I}_s} \hspace{-2ex} P_{U|X}(u|t) \mathcal{P}_{\boldsymbol{\mu}_t^{(n)}} (\mathbf{d}(u)) +  \hspace{-2ex}
                \sum_{\langle u, \mathbf{d}(u) \rangle \in \mathcal{I}_t} \hspace{-2ex} P_{U|X}(u|s) \mathcal{P}_{\boldsymbol{\mu}_s^{(n)}} (\mathbf{d}(u)) \right) \\
            &\ge \min\{p_s,p_t\} \left(\sum_{\langle u, \mathbf{d}(u) \rangle \in \mathcal{I}_s} \hspace{-2ex} P_{U|X}(u|t) \mathcal{P}_{\boldsymbol{\mu}_t^{(n)}} (\mathbf{d}(u)) + 1 {-}  \hspace{-2ex}
            \sum_{\langle u, \mathbf{d}(u) \rangle \in \mathcal{I}_s} \hspace{-3ex} P_{U|X}(u|s) \mathcal{P}_{\boldsymbol{\mu}_s^{(n)}} (\mathbf{d}(u)) \right) \\
            & \ge \min\{p_s, p_t\}(1 - \mathrm{TV}(\mathcal{P}_{\boldsymbol{\mu}_s^{(n)}} \times P_{U|X}(\cdot|s)  ,    \mathcal{P}_{\boldsymbol{\mu}_t^{(n)}} \times P_{U|X}(\cdot|t) )) \\
            & \ge \min\{p_s, p_t\} n^{-1+\epsilon}
\end{align*}
Thus, in the set $\mathcal{H}$ there are  $\Omega(n \cdot n^{-1 + \epsilon})$ vertices that are mis-classified by
the algorithm $\mathscr{A}$. We choose a set $\mathcal{W}$ of size
$\lceil \log n \rceil$ from the set of mis-classified vertices by the algorithm $\mathscr{A}$.

In the sequel we show that these $\lceil \log n \rceil$ nodes in $\mathcal{W}$
have no neighbor in $\mathcal{H}$ and thus their degree profile
can be computed exactly by knowing the community of the
nodes that have been revealed by the genie. 
Let $\hat{\mathcal{H}}_t \subseteq \mathcal{C}_t$ stand for
a fixed subset of $\mathcal{C}_t$ with cardinality $\lceil \log n \rceil$. Let $\mathcal{R}$ denote the event that
$\hat{\mathcal{H}}_t$ has no edge in $\mathcal{H}$. We have
\begin{align*}
\Pr(\mathcal{R}) & \ge 
(1-p_{\text{max}} )^{|\hat{\mathcal{H}}_t| (  |\mathcal{H}_t| - |\hat{\mathcal{H}}_t|) +{|\hat{\mathcal{H}}_t| \choose 2}} \\
&\ge
(1-p_{\text{max}})^{|\hat{\mathcal{H}}_t|   |\mathcal{H}_t| } \\
& \overset{\text{(a)}}{\ge}
\exp\left(- 2 p_{\max}   |\hat{\mathcal{H}}_t|   |\mathcal{H}_t|\right) \\
&\geq 1- 2 p_{\max}   |\hat{\mathcal{H}}_t|   |\mathcal{H}_t|\\
& \overset{\text{(b)}}{\ge}
1 -O (1/\log n),
\end{align*}
where $p_{\max} =\max_{i,j} Q_{ij} \log n /n$, and
$(a)$ follows from 
$\log(1-p_{\max}) \ge -2 p_{\max}$ for $0 \le p_{\max} \le 1/2$,
and $(b)$ follows from the fact that 
$|\mathcal{H}| = \Theta(\frac{n}{\log^3 n} )(1 \pm o_n(1))$,
$p_{\max} = O(\log n /n)$, and $|\hat{\mathcal{H}}_t| = \lceil \log n \rceil$.

Therefore, algorithm
$\mathscr{A}$ misclassifies these $\lceil \log n \rceil$ nodes in $\mathcal{W}$ and this shows the impossibility
of exact recovery for any algorithm.

\end{proof}

\begin{proof}[Proof of Theorem \ref{thm:main_conv}]
Assume that for $s,t \in [k], s \neq t$, we have $D_{s,t} < 1$. Based on \eqref{eq:conv_condition}, there exist
an $\epsilon >0$ and an infinite set of positive integers $\mathcal{N}$ such that for any $n \in \mathcal{N}$ we have
\[
D_{\mathrm{CT}}\Bigl(\mathcal{P}_{\bm{\mu}_s^{(n)}}, P^{(n)}_{U|X}(\cdot|s) \Big\|  \mathcal{P}_{\bm{\mu}_t^{(n)}}, P^{(n)}_{U|X}(\cdot|t) \Bigr)
\le (1 - \epsilon) \log n,
\]
In other words, from \eqref{eq:lam_u^*-v2}, we have
\begin{align}
&\sum_{u \in \mathcal{U}}
 (P^{(n)}_{U|X}(u|s))^{\lambda_u^*}  
 (P^{(n)}_{U|X}(u|t))^{1-\lambda_u^*} \nonumber \\
&\quad\quad \times \exp\left(
 \sum_{r=1}^k  -\lambda_u^* \mu_{s_r}^{(n)} - 
(1-\lambda_u^*) \mu_{t_r}^{(n)}  + (\mu_{s_r}^{(n)})^{\lambda_u^*} (\mu_{t_r}^{(n)})^{1-\lambda_u^*}
\right) \ge n^{-1+\epsilon}.
\label{eq:-1+ep}
\end{align}
Define the set $\mathcal{D}_{s,t}$ which contains pairs of
data labels and degree profiles as follows:
\[
\mathcal{D}_{s,t} = \left\{ \langle u, \mathbf{d}(u) \rangle \middle| 
\begin{aligned}
	&\mathbf{d}(u) = (d_1(u),d_2(u),\ldots, d_k(u)), \\
	  & d_r(u) = \lfloor (\mu_{s_r}^{(n)})^{\lambda_u^*} (\mu_{t_r}^{(n)})^{1-\lambda_u^*} \rfloor \text{ for }
	r \in [k], u \in \mathcal{U}
\end{aligned}
\right\}
\]

The probability that a node in community $s$ has a 
data label and a degree profile $\langle u, \mathbf{d}(u) \rangle$ from the set $\mathcal{D}_{s,t}$
is lower bounded as follows:
\begin{align}
\text{Pr} & (\langle u, \mathbf{d}(u) \rangle)  = 
P^{(n)}_{U|X}(u|s) \prod_{r=1}^k 
\exp(-\mu_{s_r}^{(n)}) 
\frac{(\mu_{s_r}^{(n)})^{d_r(u)}}
	 { d_r(u)!}  \nonumber \\
 & \stackrel{\text{(a)}}{\ge}
P^{(n)}_{U|X}(u|s) \prod_{r=1}^k 
\exp(-\mu_{s_r}^{(n)}) 
\frac{(\mu_{s_r}^{(n)})^{(\mu_{s_r}^{(n)})^{\lambda_u^*} (\mu_{t_r}^{(n)})^{1-\lambda_u^*}}}
{ C_1 (\log n) d_r(u)!}  \nonumber \\
& \stackrel{\text{(b)}}{=}
(P^{(n)}_{U|X}(u|s))^{\lambda_u^*} (P^{(n)}_{U|X}(u|t))^{1-\lambda_u^*} \nonumber \\
&\quad\quad \times \prod_{r=1}^k
\frac{
\exp(-\lambda_u^* \mu_{s_r}^{(n)} -  (1-\lambda_u^*)\mu_{t_r}^{(n)}) ((\mu_{s_r}^{(n)})^{\lambda_u^*} 
(\mu_{t_r}^{(n)}) ^{1-\lambda_u^*})^{(\mu_{s_r}^{(n)})^{\lambda_u^*} 
(\mu_{t_r}^{(n)}) ^{1-\lambda_u^*}}
}
{C_1 (\log n) d_r(u)!} \nonumber \\
& \stackrel{\text{(c)}}{\ge}
(P^{(n)}_{U|X}(u|s))^{\lambda_u^*} (P^{(n)}_{U|X}(u|t))^{1-\lambda_u^*} \nonumber \\
&\quad\quad \times \prod_{r=1}^k
\frac{
	\exp(-\lambda_u^* \mu_{s_r}^{(n)} -  (1-\lambda_u^*) \mu_{t_r}^{(n)}) ((\mu_{s_r}^{(n)})^{\lambda_u^*} 
	(\mu_{t_r}^{(n)})^{1-\lambda_u^*})^{(\mu_{s_r}^{(n)})^{\lambda_u^*} 
		(\mu_{t_r}^{(n)}) ^{1-\lambda_u^*}}
}
{C_2 \log n \sqrt{ (\mu_{s_r}^{(n)})^{\lambda_u^*} 
		(\mu_{t_r}^{(n)}) ^{1-\lambda_u^*} }  ((\mu_{s_r}^{(n)})^{\lambda_u^*} (\mu_{t_r}^{(n)}) ^{1-\lambda_u^*} /e )^{(\mu_{s_r}^{(n)})^{\lambda_u^*} (\mu_{t_r}^{(n)}) ^{1-\lambda_u^*} } }  \nonumber \\
& \stackrel{\text{(d)}}{=}
\frac{1}{C_2 (\log n)^{3 k/2}}
(P^{(n)}_{U|X}(u|s))^{\lambda_u^*} (P^{(n)}_{U|X}(u|t))^{1-\lambda_u^*} \nonumber \\
&\quad\quad \times \prod_{r=1}^k
\exp(-\lambda_u^* \mu_{s_r}^{(n)} -  (1-\lambda_u^*) \mu_{t_r}^{(n)} + (\mu_{s_r}^{(n)})^{\lambda_u^*} (\mu_{t_r}^{(n)}) ^{1-\lambda_u^*}  )  
\label{eq:prob_lowerbound}
\end{align}
where $(a)$ follows from the fact that $\mu_{s_r}^{(n)} = \Theta(\log n), \mu_{t_r}^{(n)} = \Theta(\log n)$ and 
\[(\mu_{s_r}^{(n)})^{\lfloor (\mu_{s_r}^{(n)})^{\lambda_u^*} (\mu_{t_r}^{(n)})^{1-\lambda_u^*} \rfloor}
\ge 
(\mu_{s_r}^{(n)})^{(\mu_{s_r}^{(n)})^{\lambda_u^*} (\mu_{t_r}^{(n)})^{1-\lambda_u^*}} / (C_1\log n),\] $(b)$ follows from \eqref{eq:eq_prob}, $(c)$
follows from Stirling's approximation $\lfloor x \rfloor! \le C \sqrt{x} (x/e)^x$, $(d)$ follows from the fact that 
$\mu_{s_r}^{(n)} = \Theta(\log n), \mu_{t_r}^{(n)} = \Theta(\log n)$.

Notice that using the same argument as above, we can show that 
the probability that a node in community $t$ has a data label and
a degree profile which is in the set $\mathcal{D}_{s,t}$ is lower bounded
by \eqref{eq:prob_lowerbound}.
Then, we can bound the total variation between degree profiles and data label distributions as follows
\begin{align*}
1- & \mathrm{TV}\Bigl(\mathcal{P}_{\boldsymbol{\mu}_s^{(n)}} \times P^{(n)}_{U|X}(\cdot|s)  , \mathcal{P}_{\boldsymbol{\mu}_t^{(n)}}\times P^{(n)}_{U|X}(\cdot |t) \Bigr) \\
& = \sum_{u\in\mathcal{U}, \mathbf{d} \in \mathbb{Z}_+^k} 
\min\Bigl\{
P^{(n)}_{U|X}(u|s) \mathcal{P}_{\boldsymbol{\mu}_s^{(n)}} (\mathbf{d}),
P^{(n)}_{U|X}(u|t) \mathcal{P}_{\boldsymbol{\mu}_t^{(n)}} (\mathbf{d})
\Bigr\} \\
& \ge \sum_{\langle u, \mathbf{d} \rangle \in \mathcal{D}_{s,t}} 
\min\Bigl\{
P^{(n)}_{U|X}(u|s) \mathcal{P}_{\boldsymbol{\mu}_s^{(n)}} (\mathbf{d}),
P^{(n)}_{U|X}(u|t) \mathcal{P}_{\boldsymbol{\mu}_t^{(n)}} (\mathbf{d})
\Bigr\} \\
& \overset{(a)}{\ge} \frac{1}{C_2 (\log n)^{3 k/2}}
\sum_{u \in \mathcal{U}}
(P^{(n)}_{U|X}(u|s))^{\lambda_u^*} (P^{(n)}_{U|X}(u|t))^{1-\lambda_u^*} \nonumber \\
&\quad\quad \times \prod_{r=1}^k
\exp\Bigl(-\lambda_u^* \mu_{s_r}^{(n)} -  (1-\lambda_u^*) \mu_{t_r}^{(n)} + (\mu_{s_r}^{(n)})^{\lambda_u^*} (\mu_{t_r}^{(n)}) ^{1-\lambda_u^*} \Bigr)
\\ & \overset{(b)}{\ge}
C_3 \frac{n^{-1+\epsilon}}{(\log n)^{3k/2}} = C_3 \exp( (-1+\epsilon) \log n - o_n(1)),
\end{align*}
where $(a)$ follows from \eqref{eq:prob_lowerbound} and $(b)$ follows from \eqref{eq:-1+ep}.
Since $(\log n)^{3k/2}=n^{o(1)}$, the bound above implies that for all sufficiently large $n$,
\[
\mathrm{TV}\Bigl(\mathcal{P}_{\boldsymbol{\mu}_s^{(n)}} \times P^{(n)}_{U|X}(\cdot|s),\ \mathcal{P}_{\boldsymbol{\mu}_t^{(n)}} \times P^{(n)}_{U|X}(\cdot|t)\Bigr)
\le 1 - n^{-1+\epsilon/2}.
\]
Therefore, Lemma~\ref{thm:TV} applies (with $\epsilon/2$), and exact recovery is not solvable.

\end{proof}

\section{Simulations}\label{sec:simulation}

In this section, we empirically study exact recovery in the logarithmic-degree regime of the two-community DBM. Latent labels are equiprobable, $P=(0.5,0.5)$, and edges are drawn independently with probabilities $\Pr\{(i,j)\in E \mid X_i=s,X_j=r\}=(\log n/n)\ Q_{sr}$, where $Q=\begin{pmatrix}a&b\\ b&a\end{pmatrix}$ with $a>b$. Side information is provided by an erased-label channel: for each vertex the true label is revealed with probability $1-n^{-\alpha}$ and is otherwise replaced by an erasure symbol (see Example~\ref{exm:erase}). In this symmetric setting the divergence that governs exact recovery is
\[
D_{1,2} \ =\  \alpha \ +\  \frac{(\sqrt a-\sqrt b)^2}{2},
\]
and Theorems~\ref{thm:main_ach} and \ref{thm:main_conv} assert that exact recovery is achievable if and only if $D_{1,2}>1$. We therefore mark on all plots the \emph{DBM threshold} $a^\star_{\rm DBM}(b,\alpha)=\big(\sqrt b+\sqrt{2(1-\alpha)}\big)^2$ and, for comparison, the \emph{SBM threshold} $a^\star_{\rm SBM}(b)=\big(\sqrt b+\sqrt2\big)^2$. 
Performance is quantified by two complementary metrics: the flip-invariant misclassification error (the fraction of misclassified vertices, minimized over global label permutations) and the exact-recovery probability (ERP), which corresponds to the notion of exact recovery in Definition~\ref{def:ER_almostER}.

Graph-based procedures (DBM, Iterative DBM, SBM, Iterative SBM, and Spectral) begin with a spectral partition of the adjacency matrix into two clusters. Except for the Spectral baseline (graph-only), we then apply a per-vertex local MAP update in the log-degree regime. For a candidate class $s$ and node $v$, the score is the sum of the log prior, the log-likelihood of the side label, and a Poisson log-likelihood for the degree profile $d(v)=(d_1(v),d_2(v))$ with means $\mu_{sr}=p_r Q_{sr}\log n$. The DBM method uses both graph and side information in this score; the SBM baseline omits the side-label term. We consider a single-pass refinement (one MAP update after the spectral start) and an iterative refinement (Algorithm~\ref{alg:iterative2}) that repeats local updates until fewer than $10^{-3}$ labels change or a cap of five iterations is reached. In the experiments reported here we use a \emph{no-split} variant, in which the full graph is used both for the spectral initialization and for MAP refinements (so the two stages are not independent). For completeness in the phase diagram, we also display the data-only MAP (side channel alone). All evaluations are flip-invariant, with independent random seeds per replicate.

\subsection{Experiment 1: Phase diagram across $(a,\alpha)$}

The goal is to localize the empirical transition to exact recovery and to visualize how side information shifts it. We fix $n=1000$ and $b=10$, sweep $a\in\{14,15,\dots,23\}$ and $\alpha\in\{0.2,0.4,0.6,0.8\}$, and for each $(a,\alpha)$ run $M=1000$ independent trials of DBM, Iterative DBM, SBM, Iterative SBM, Spectral (Graph-only), and Data-only. We show two families of cross-sections, $1-\mathrm{ERP}$ versus $a$ (linear vertical axis) and mean error versus $a$ (logarithmic vertical axis), consolidated in Figure~\ref{fig:erp-a-all} and Figure~\ref{fig:logerr-a-all}. Heatmaps over the full $(a,\alpha)$ grid appear in Figure~\ref{fig:heatmaps-accuracy} (mean accuracy) and Figure~\ref{fig:heatmaps-erp} (ERP), each with the theoretical thresholds: the DBM threshold curve
$a^\star_{\rm DBM}=(\sqrt{b}+\sqrt{2(1-\alpha)})^2$, which traces the boundary $D_{1,2}=1$ in the $(a,\alpha)$ plane, and the SBM threshold, shown as a vertical line at $a^\star_{\rm SBM}(b) = (\sqrt b+\sqrt 2)^2$.

The empirical transition follows the DBM threshold closely, with a finite-sample rightward offset that grows as the true threshold moves to smaller $a$. Using ERP as criterion, DBM reaches $95\%$ at $a=21,19,18,17$ for $\alpha=0.2,0.4,0.6,0.8$, respectively, and $99\%$ at $a=22,21,20,18$; see Tables~\ref{tab:erp95}-\ref{tab:erp99}. The $99\%$ crossings at $a=22,21,20,18$ lie roughly $+1.4$, $+0.9$, $+1.5$, and $+2.6$ to the right of $a^\star_{\rm DBM}(b,\alpha)$ (about $7\%$, $5\%$, $9\%$, and $18\%$ in relative terms), a natural consequence of $n=1000$ and the smaller expected degrees at lower $a$. By contrast, SBM, Iterative SBM, and Spectral cluster near the SBM threshold: all three require $a=22$ to exceed $95\%$ ERP and (except at $\alpha=0.2$) $a=23$ to exceed $99\%$ ERP, consistent with $a^\star_{\rm SBM}(10)\approx 20.94$ and its finite-$n$ shift.

Close inspection of the cross-sections near $a^\star_{\rm DBM}$ isolates the effect of the side channel. At $\alpha=0.4$ ($a^\star_{\rm DBM}=18.13$) the DBM ERP jumps from $\approx 0.862$ at $a=18$ to $\approx 0.952$ at $a=19$, whereas SBM is only $\approx 0.135$ and $\approx 0.459$ at the same abscissae. At $\alpha=0.8$ ($a^\star_{\rm DBM}=14.40$) DBM increases from $\approx 0.631$ at $a=14$ to $\approx 0.823$ at $a=15$, while SBM and Spectral remain near zero there. Thus, the same graph density that is subcritical for SBM becomes sufficient for exact recovery once side information is integrated into the local MAP score.

\begin{algorithm}[H]
  \caption{Iterative spectral initialization and MAP refinement (no-split variant used in the simulations)}
  \label{alg:iterative2}
  \begin{algorithmic}[1]
    \Require
      Graph $G=(V,E)$ with $|V|=n$;
      side observations $\{U_v\}_{v\in V}$ with channel $P_{U\mid X}$;
      class prior $P=(p_1,\dots,p_k)$;
      affinity matrix $Q\in\mathbb{R}_{>0}^{k\times k}$;
      tolerance $\varepsilon>0$;
      maximum MAP iterations $T_{\max}\in\mathbb{N}$.
    \Ensure Estimated labels $\widehat\sigma\in[k]^n$.

    \State \textbf{Spectral initialization}
    \State $\sigma' \gets \mathrm{SpectralClustering}(G, k)$.

    \State \textbf{Label canonicalization (permutation resolution)}
    \State Define the set of informative anchors
      \[
        S \gets \Bigl\{ v\in V:\ \exists  s\neq t\ \text{with}\ P_{U\mid X}(U_v\mid s)\neq P_{U\mid X}(U_v\mid t)\Bigr\}.
      \]
    \If{$S\neq\varnothing$} \Comment{canonicalize using side information}
       \State Form $C\in\mathbb{R}^{k\times k}$ with
         \[
           C_{i,t} \gets \sum_{\substack{v\in S:\\ \sigma'(v)=i}} \log P_{U\mid X}(U_v \mid t).
         \]
       \State Compute $\pi\in S_k$ maximizing $\sum_{i=1}^k C_{i,\pi(i)}$ (e.g., Hungarian on $-C$).
    \Else \Comment{no informative anchors: canonicalize using $G$}
       \State For $i\in[k]$, let $m_i \gets |\{v:\sigma'(v)=i\}|$;\quad
              for $1\le i\le j\le k$, let
              $E_{ij}\gets |\{(u,v)\in E:\sigma'(u)=i, \sigma'(v)=j, u<v\}|$.
       \State Select any deterministic maximizer
         \[
           \pi \in \arg\max_{\pi\in S_k}
           \Biggl[
             \sum_{i=1}^k m_i \log p_{\pi(i)}
             + \sum_{1\le i\le j\le k} E_{ij} \log Q_{\pi(i), \pi(j)}
           \Biggr].
         \]
    \EndIf
    \State Relabel $\sigma' \gets \pi\circ\sigma'$ and set $\widehat\sigma^{(0)}\gets\sigma'$.

    \State \textbf{MAP refinement on $G$}
    \State Precompute $\mu_{sr} \gets p_r Q_{sr} \log n$ for all $s,r\in[k]$.
    \For{$t=1$ \textbf{to} $T_{\max}$}
      \ForAll{$v\in V$}
        \State For each $r\in[k]$, compute
               $d_r(v) \gets |\{ u:\ (v,u)\in E,\ \widehat\sigma^{(t-1)}(u)=r \}|$.
        \State For each $s\in[k]$, set
          \[
            \mathsf{s}_v(s)
            =
            \log p_s
            + \log P_{U\mid X}(U_v\mid s)
            + \sum_{r=1}^k \log \mathrm{Poisson}\bigl(d_r(v);\mu_{sr}\bigr).
          \]
        \State $\widehat\sigma^{(t)}(v) \gets \arg\max_{s\in[k]} \mathsf{s}_v(s)$.
      \EndFor
      \State $\delta \gets \frac{1}{n}\sum_{v\in V}\mathbf{1}\{\widehat\sigma^{(t)}(v)\neq \widehat\sigma^{(t-1)}(v)\}$.
      \If{$\delta<\varepsilon$} \textbf{break} \EndIf
    \EndFor

    \State \Return $\widehat\sigma^{(t)}$.
  \end{algorithmic}
\end{algorithm}

The log-error panels clarify why mean errors can appear tiny before ERP saturates. At $\alpha=0.8$ and $a=14$ the DBM accuracy already exceeds $0.999$, yet ERP is only $\approx 0.63$. This illustrates the expected gap between vanishing \emph{average} error (a handful of mistakes) and the stricter event of \emph{no} mistakes (exact recovery): at $n=1000$, even a single error contributes only $10^{-3}$ to the mean error but prevents exact recovery. Near the transition, each unit increase in $a$ typically multiplies the DBM mean error by about $2.5$--$5$ (a drop of $0.4$--$0.7$ decades per unit on the log scale), until curves flatten at the Monte Carlo floor.

Iteration via Algorithm~\ref{alg:iterative2} has negligible effect on mean error but yields a modest and meaningful gain in ERP near the transition when side information is abundant. Averaged across the two $a$-ticks bracketing $a^\star_{\rm DBM}(b,\alpha)$, the increase $\mathrm{ERP}_{\rm iter}-\mathrm{ERP}_{\rm one}$ is $\approx 0.0025$ at $\alpha\in\{0.2,0.4\}$, rises to $\approx 0.023$ at $\alpha=0.6$, and reaches $\approx 0.10$ at $\alpha=0.8$, matching the intuition that a second scoring pass with cleaner neighborhood assignments flips the last stubborn vertices. Baselines behave as expected: Data-only ERP is independent of $a$ and matches the erased-label formula $\mathbb{E}[2^{-K}]=(1-n^{-\alpha}/2)^n$ (e.g., $\approx 0.136$ at $n=1000$, $\alpha=0.8$), while Spectral tracks SBM and approaches one around the SBM threshold.

Runtime trends complement the statistical picture. Figure~\ref{fig:runtime-a-08} shows wall-clock time versus $a$ at $\alpha=0.8$ (the other $\alpha$ display the same pattern). Single-pass DBM and the SBM Poisson scorer are essentially flat near $0.55$\ s per replicate. Iterative DBM and Iterative SBM cost $2$-$3$\ s at small $a$ and decrease to $\approx 0.56$\ s as $a$ increases, reflecting fewer iterations as the problem eases; the average number of iterations for Iterative DBM drops from about $2.25$ at $\alpha=0.2$ to $1.62$ at $\alpha=0.8$, with Iterative SBM around $2.6$ across $\alpha$. Spectral is consistently fast ($0.15$-$0.20$\ s), and Data-only is essentially instantaneous.

\begin{figure}[t]
  \centering
  \begin{subfigure}[t]{.49\linewidth}
    \includegraphics[width=\linewidth]{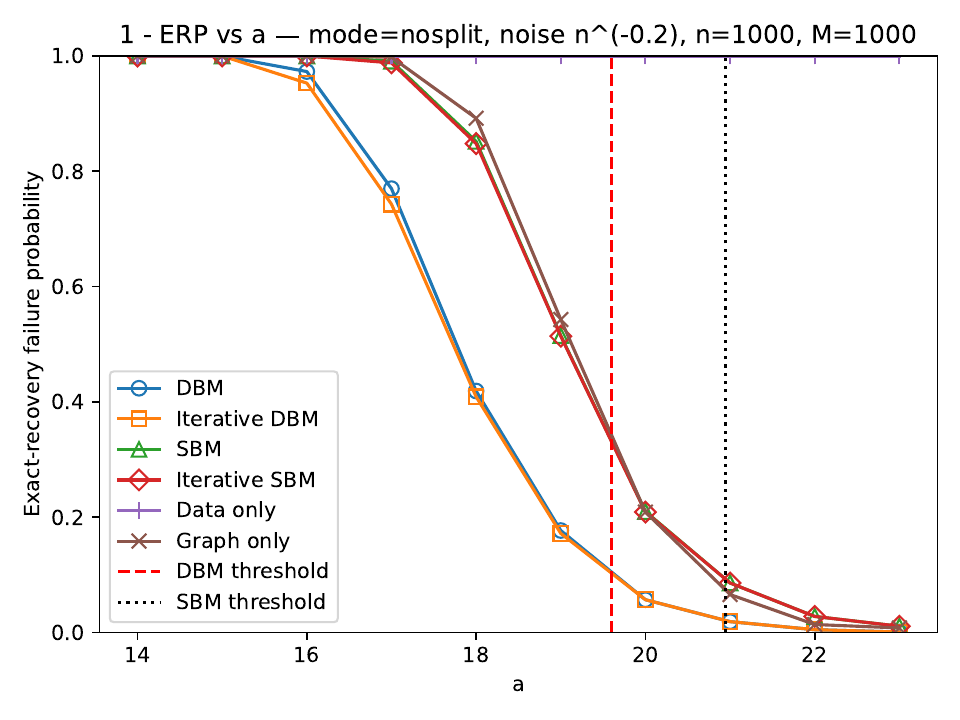}\caption{$\alpha=0.2$}
  \end{subfigure}\hfill
  \begin{subfigure}[t]{.49\linewidth}
    \includegraphics[width=\linewidth]{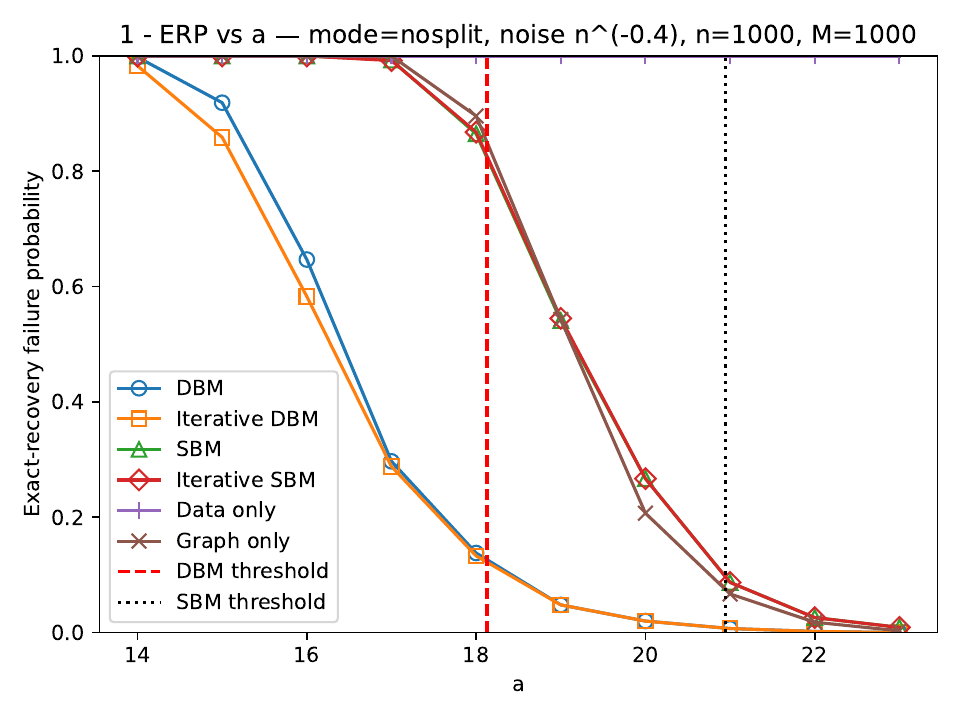}\caption{$\alpha=0.4$}
  \end{subfigure}\\[0.5em]
  \begin{subfigure}[t]{.49\linewidth}
    \includegraphics[width=\linewidth]{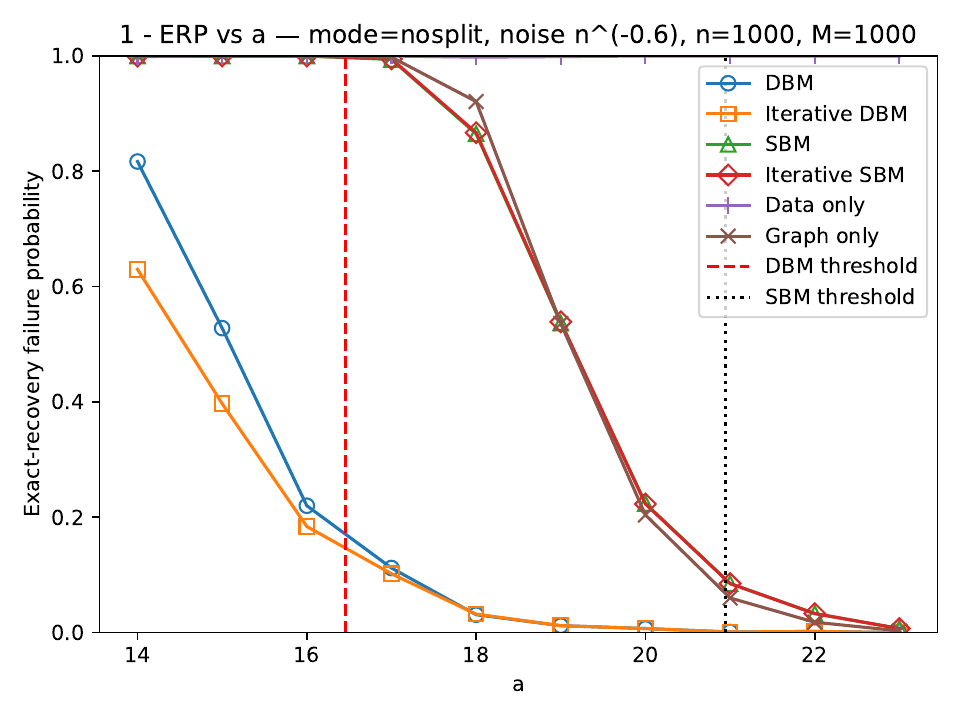}\caption{$\alpha=0.6$}
  \end{subfigure}\hfill
  \begin{subfigure}[t]{.49\linewidth}
    \includegraphics[width=\linewidth]{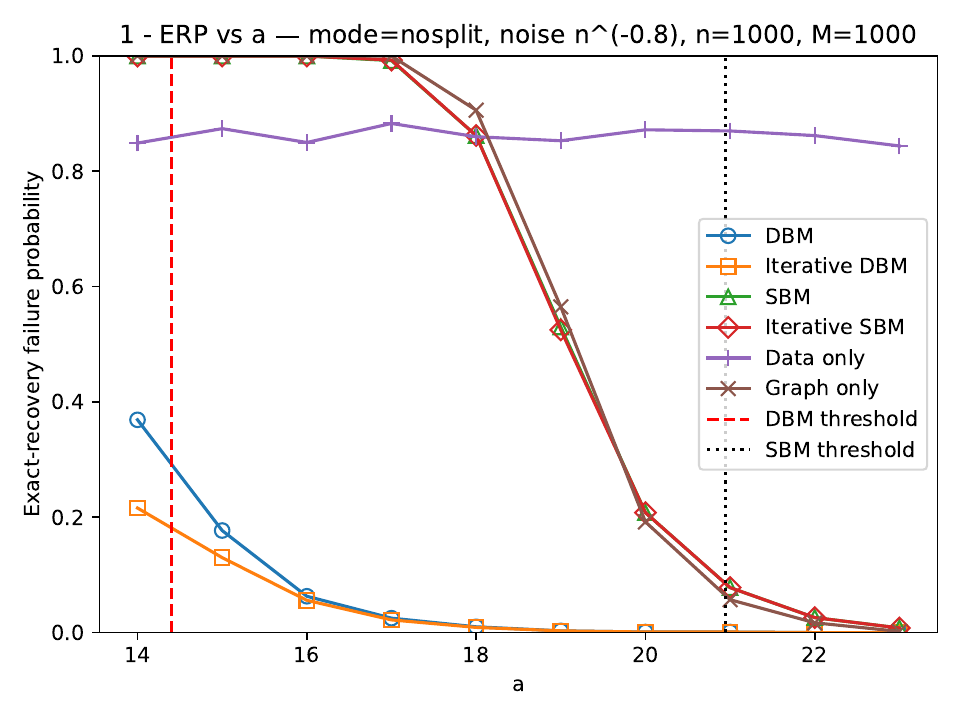}\caption{$\alpha=0.8$}
  \end{subfigure}
  \caption{$1-\mathrm{ERP}$ versus $a$ at fixed $\alpha$ ($n=1000$, $b=10$, $M=1000$). Vertical guidelines mark $a^\star_{\rm DBM}(b,\alpha)$ and $a^\star_{\rm SBM}(b)$.}
  \label{fig:erp-a-all}
\end{figure}

\begin{figure}[t]
  \centering
  \begin{subfigure}[t]{.49\linewidth}
    \includegraphics[width=\linewidth]{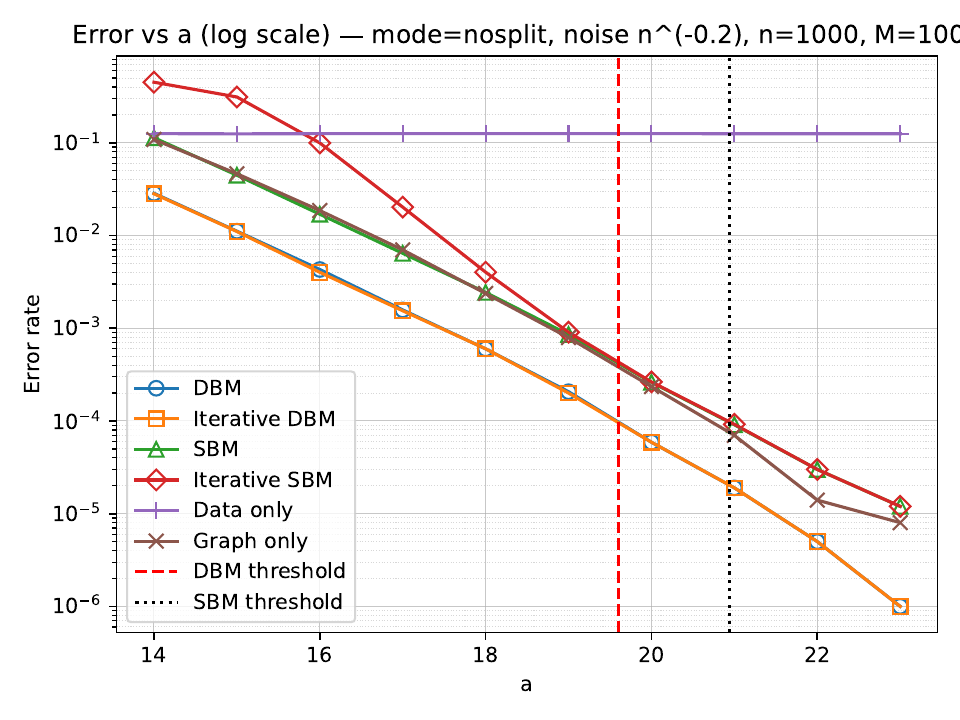}\caption{$\alpha=0.2$}
  \end{subfigure}\hfill
  \begin{subfigure}[t]{.49\linewidth}
    \includegraphics[width=\linewidth]{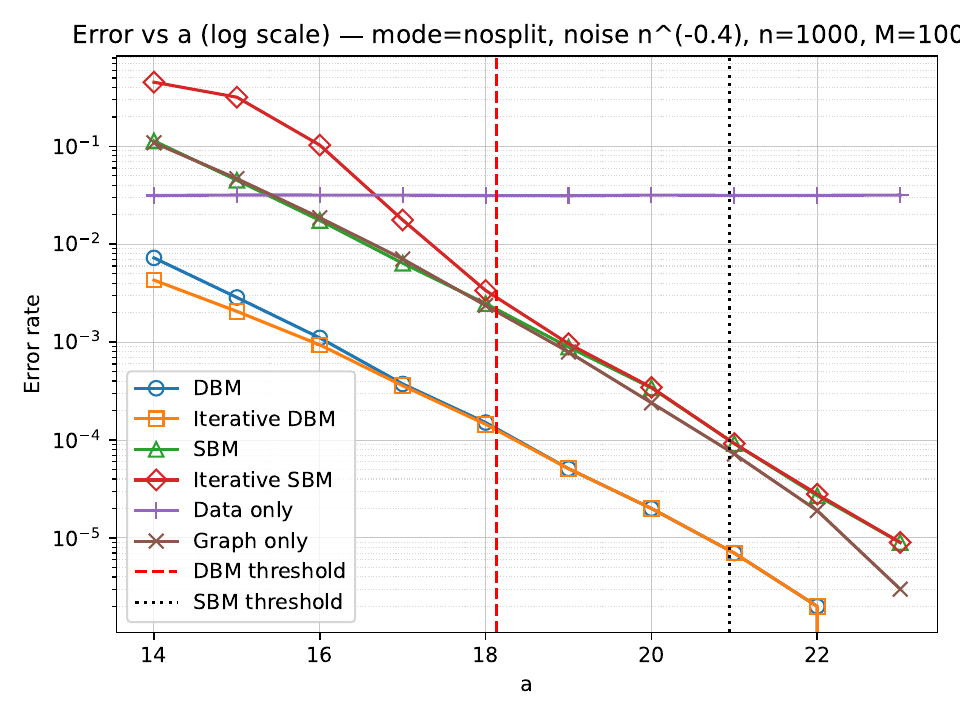}\caption{$\alpha=0.4$}
  \end{subfigure}\\[0.5em]
  \begin{subfigure}[t]{.49\linewidth}
    \includegraphics[width=\linewidth]{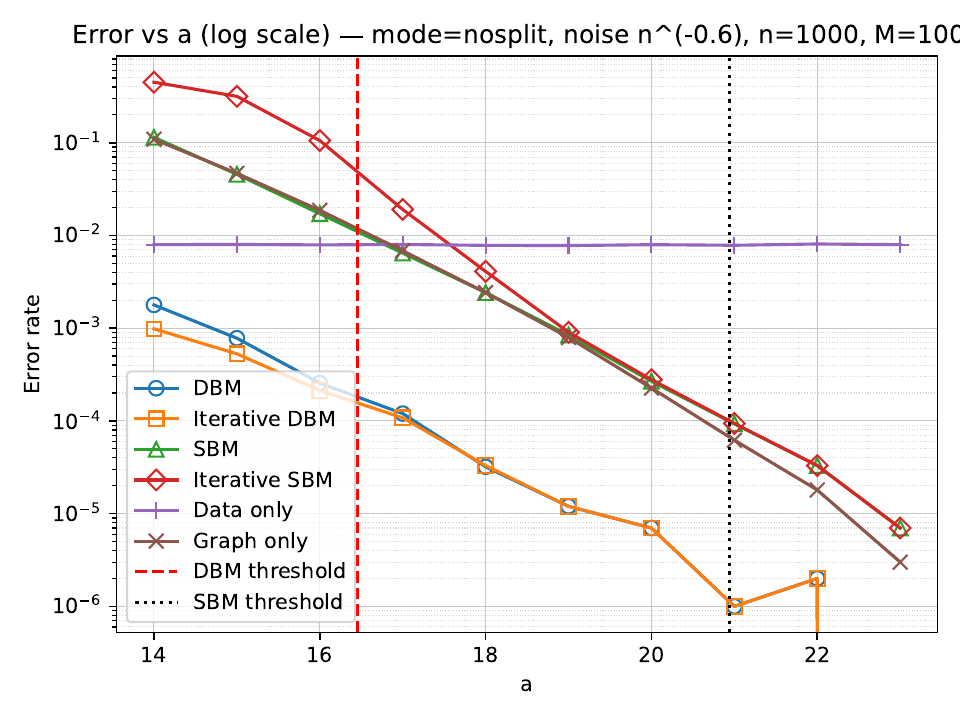}\caption{$\alpha=0.6$}
  \end{subfigure}\hfill
  \begin{subfigure}[t]{.49\linewidth}
    \includegraphics[width=\linewidth]{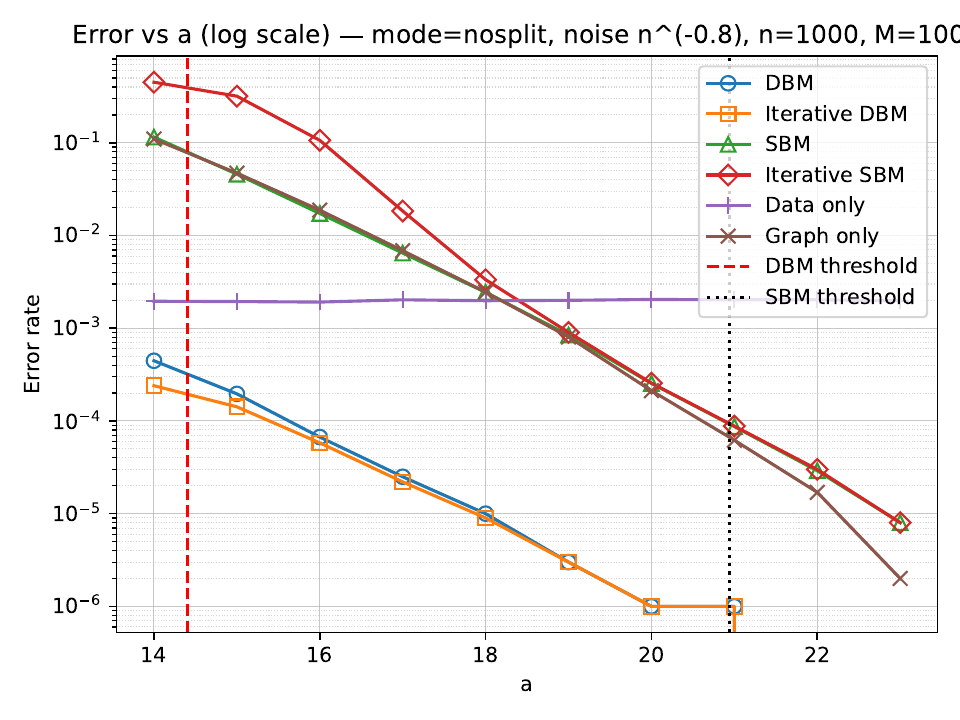}\caption{$\alpha=0.8$}
  \end{subfigure}
  \caption{Mean misclassification error (log scale) versus $a$ at fixed $\alpha$ ($n=1000$, $b=10$, $M=1000$).}
  \label{fig:logerr-a-all}
\end{figure}

\begin{figure}[t]
  \centering
  \includegraphics[width=\linewidth]{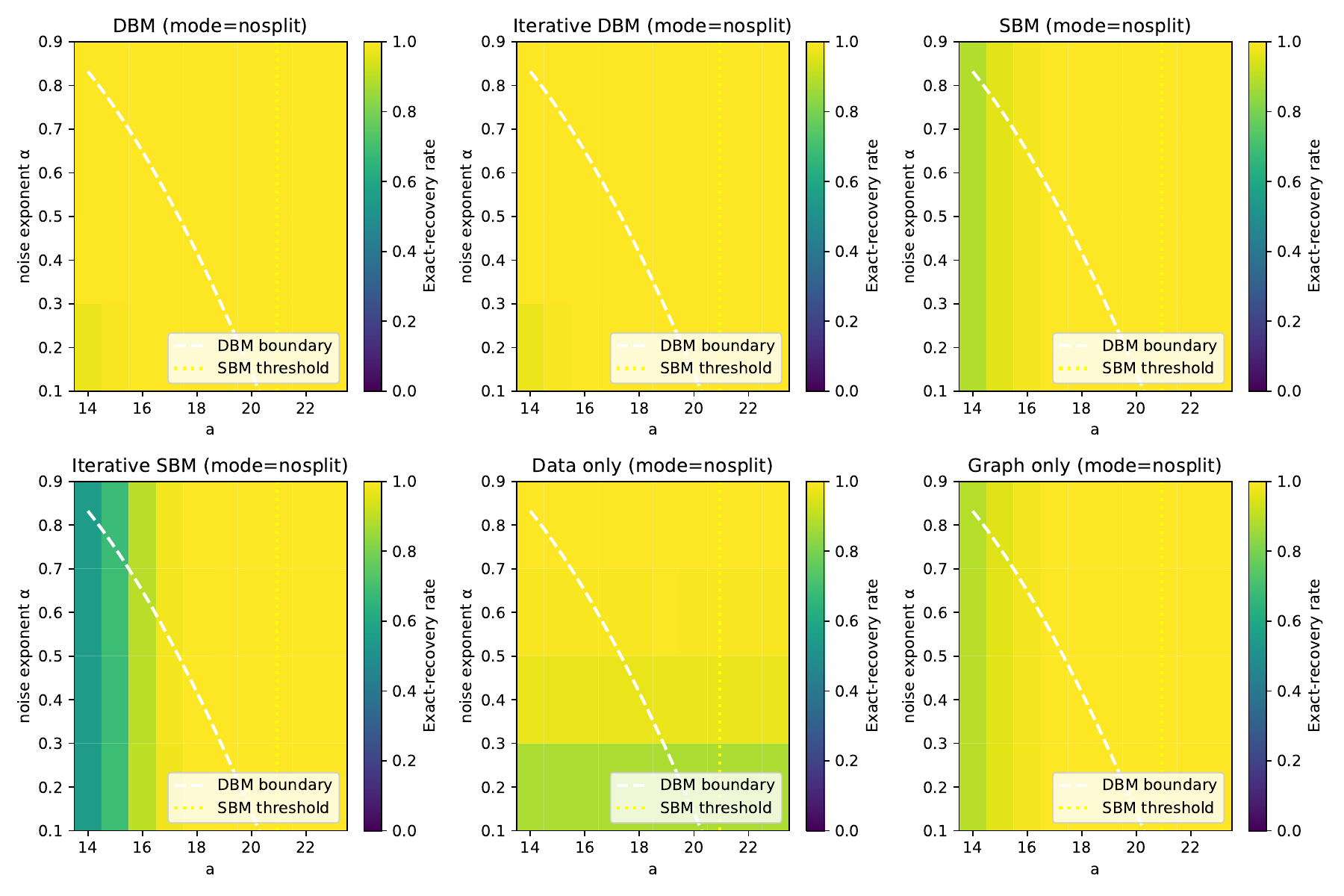}
  \caption{Heatmaps of mean accuracy across the $(a,\alpha)$ grid. The DBM threshold curve $a^\star_{\rm DBM}(b,\alpha)=(\sqrt b+\sqrt{2(1-\alpha)})^2$ and the SBM threshold at $a^\star_{\rm SBM}(b)=(\sqrt b+\sqrt2)^2$ are overlaid.}
  \label{fig:heatmaps-accuracy}
\end{figure}

\begin{figure}[t]
  \centering
  \includegraphics[width=\linewidth]{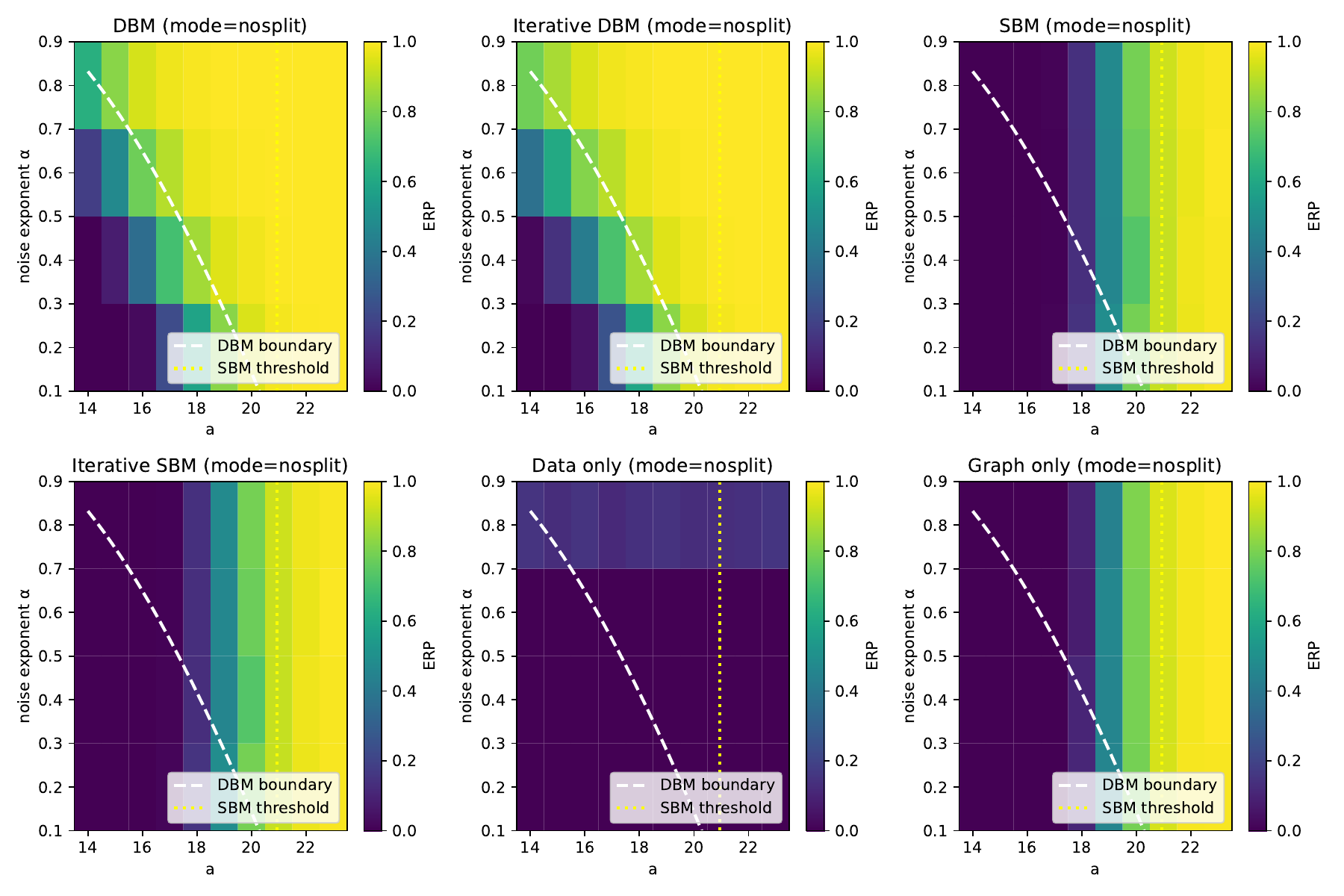}
  \caption{Heatmaps of ERP across the $(a,\alpha)$ grid, with the same threshold overlays.}
  \label{fig:heatmaps-erp}
\end{figure}

\begin{figure}[t]
  \centering
  \includegraphics[width=.5\linewidth]{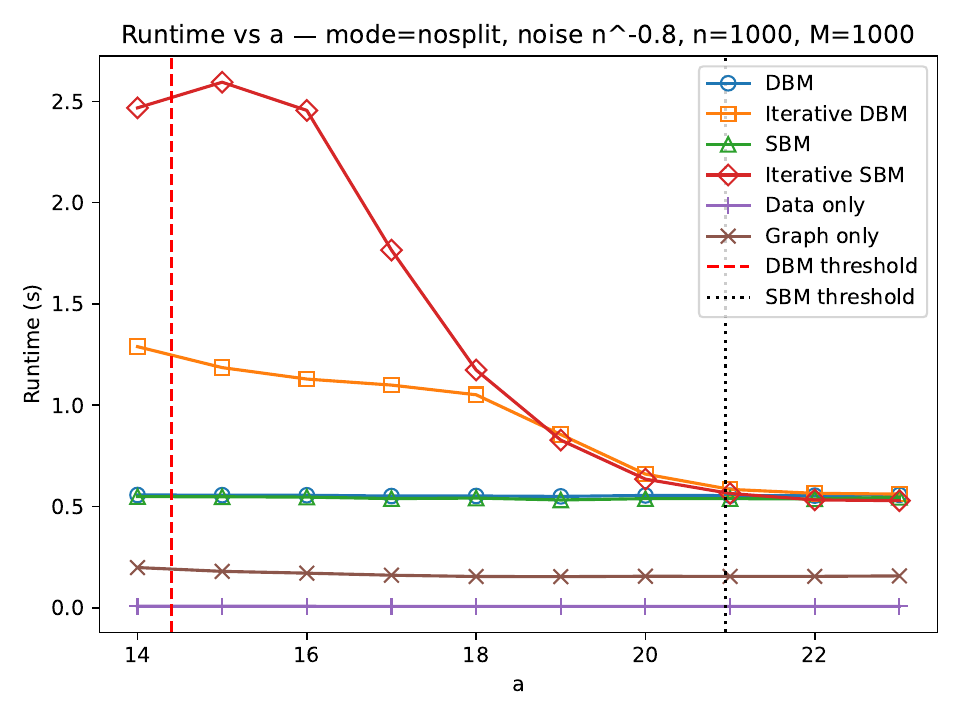}
  \caption{Runtime versus $a$ at $\alpha=0.8$ ($n=1000$, $b=10$, $M=1000$). Iterative methods slow down at small $a$ and approach single-pass runtimes as $a$ increases; Spectral and Data-only are fastest. The other $\alpha$ shows the same qualitative pattern.}
  \label{fig:runtime-a-08}
\end{figure}

\begin{table}[t]
\centering
\small
\begin{tabular}{lcccc}
\toprule
Method & $\alpha=0.2$ & $\alpha=0.4$ & $\alpha=0.6$ & $\alpha=0.8$ \\
\midrule
DBM & 21 & 19 & 18 & 17 \\
Iterative DBM & 21 & 19 & 18 & 17 \\
SBM & 22 & 22 & 22 & 22 \\
Iterative SBM & 22 & 22 & 22 & 22 \\
Spectral & 22 & 22 & 22 & 22 \\
Data-only & -- & -- & -- & -- \\
\bottomrule
\end{tabular}
\caption{Smallest integer $a$ at which ERP $\ge 95\%$ for each method as a function of $\alpha$ ($n=1000$, $b=10$, $M=1000$). `--' indicates the level is not reached within $a\in\{14,\dots,23\}$.}
\label{tab:erp95}
\end{table}

\begin{table}[t]
\centering
\small
\begin{tabular}{lcccc}
\toprule
Method & $\alpha=0.2$ & $\alpha=0.4$ & $\alpha=0.6$ & $\alpha=0.8$ \\
\midrule
DBM & 22 & 21 & 20 & 18 \\
Iterative DBM & 22 & 21 & 20 & 18 \\
SBM & -- & 23 & 23 & 23 \\
Iterative SBM & -- & 23 & 23 & 23 \\
Spectral & 23 & 23 & 23 & 23 \\
Data-only & -- & -- & -- & -- \\
\bottomrule
\end{tabular}
\caption{Smallest integer $a$ at which ERP $\ge 99\%$ for each method as a function of $\alpha$ ($n=1000$, $b=10$, $M=1000$). `--' indicates the level is not reached within $a\in\{14,\dots,23\}$.}
\label{tab:erp99}
\end{table}

\subsection{Experiment 2: Finite-size scaling in $n$}

We next examine how quickly finite problems inherit the asymptotic picture as $n$ grows. To isolate the effect, we set $(b,\alpha)=(10,0.3)$ and choose
\[
a \ =\  1.10\  a^\star_{\rm DBM}(b,\alpha)
\ =\  1.10\big(\sqrt{10}+\sqrt{2(1-0.3)}\big)^2
\ \approx\  20.77,
\]
which is supercritical for DBM and strictly below the SBM threshold $a^\star_{\rm SBM}(10)\approx 20.94$. We vary $n\in\{10,10^2,10^3,10^4\}$ and for each $n$ run $M=1000$ independent trials of DBM, Iterative DBM, SBM, and Iterative SBM. Figure~\ref{fig:scaling-erp-error}a shows the failure probability $1-\mathrm{ERP}$ versus $n$; Figure~\ref{fig:scaling-erp-error}b shows mean error versus $n$ on log-log axes; a numerical summary appears in Table~\ref{tab:scaling-summary}.

The trends are the finite-sample signatures of the thresholds. Because $a$ is chosen $10\%$ above the DBM threshold and below the SBM threshold, $1-\mathrm{ERP}$ for DBM decays with $n$, while $1-\mathrm{ERP}$ for SBM increases. By $n=10^4$, DBM attains $\mathrm{ERP}\approx 0.992$ whereas SBM declines to $\mathrm{ERP}\approx 0.892$. Meanwhile, typical errors shrink rapidly: the DBM mean error drops by orders of magnitude from $n=10$ to $n=10^2$ and is indistinguishable from zero for $n\ge 10^3$, closely paralleling a $1/n$ benchmark; the SBM average error also decays despite being subcritical for exact recovery, underscoring the difference between vanishing mean error and the event of zero mistakes. Iteration (Algorithm~\ref{alg:iterative2}) has little effect on mean error once the spectral start is good, but marginally improves ERP at larger $n$ by flipping residual errors.

Runtime scales near-linearly with $n$, rising from milliseconds at $n=10$ to about eight seconds at $n=10^4$ per replicate. The iterative variants are a constant factor slower than single-pass at small sizes, but the gap closes with $n$ because Algorithm~\ref{alg:iterative2} typically stabilizes in a single iteration: the average iteration counts fall from about $2.2$ at $n=10$ to $1.0$ at $n=10^4$. In large problems, the statistical benefit of iteration thus comes at negligible additional cost.

\begin{figure}[t]
  \centering
  \begin{subfigure}[t]{.49\linewidth}
    \includegraphics[width=\linewidth]{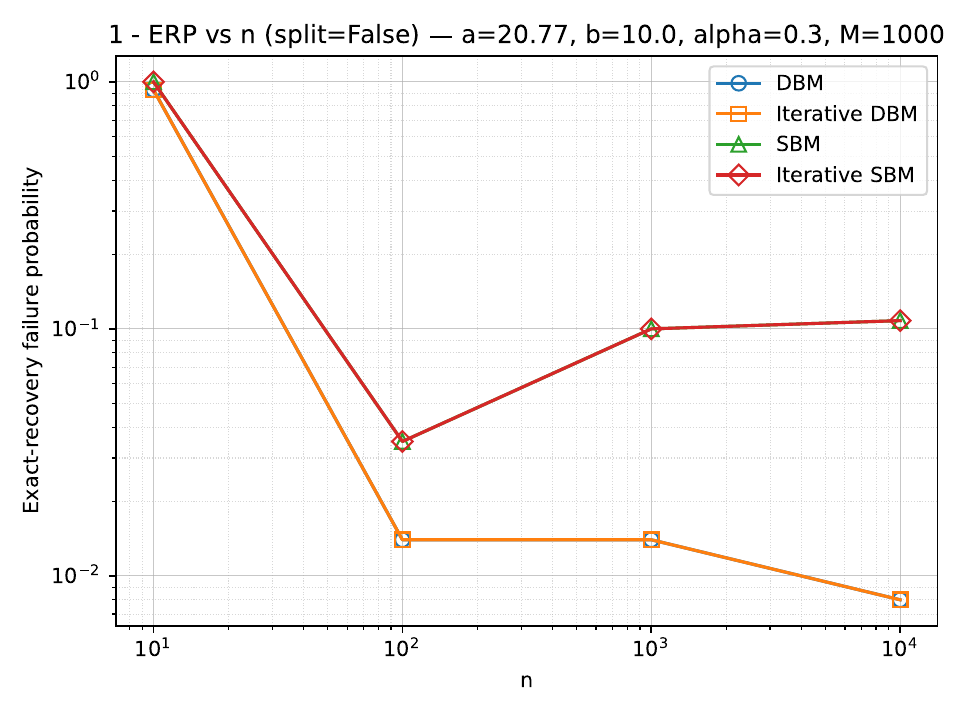}
    \caption{$1-\mathrm{ERP}$ versus $n$ at $(a,b,\alpha)=(20.77,10,0.3)$.}
  \end{subfigure}\hfill
  \begin{subfigure}[t]{.49\linewidth}
    \includegraphics[width=\linewidth]{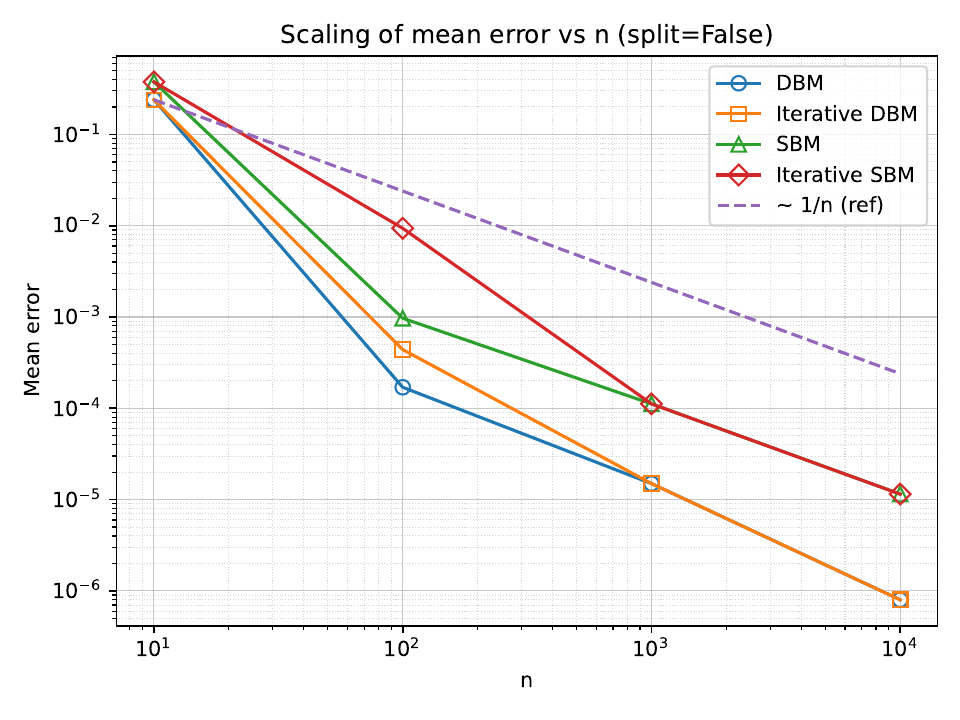}
    \caption{Mean error versus $n$ (log-log) at the same parameters; the thin line marks slope $-1$.}
  \end{subfigure}
  \caption{Finite-size scaling. DBM is supercritical, and its failure probability decays with $n$, while SBM is subcritical and its failure probability increases. Typical errors decay rapidly for both, but only DBM exhibits growing exact-recovery probability.}
  \label{fig:scaling-erp-error}
\end{figure}

\begin{table}[t]
\centering
\small
\begin{tabular}{rcccccccc}
\toprule
$n$ & err$_{\rm DBM}$ & err$_{\rm SBM}$ & ERP$_{\rm DBM}$ & ERP$_{\rm SBM}$ & time$_{\rm DBM}$ & time$_{\rm SBM}$ & iters$_{\rm iDBM}$ & iters$_{\rm iSBM}$ \\
\midrule
$10$   & $0.240$ & $0.376$ & $0.071$ & $0.000$ & $0.004$ & $0.004$ & $2.20$ & $2.42$ \\
$10^2$ & $4\times 10^{-4}$ & $9\times 10^{-3}$ & $0.986$ & $0.965$ & $0.046$ & $0.046$ & $1.03$ & $1.10$ \\
$10^3$ & $0$ & $10^{-4}$ & $0.986$ & $0.900$ & $0.532$ & $0.521$ & $1.08$ & $1.10$ \\
$10^4$ & $0$ & $0$ & $0.992$ & $0.892$ & $7.24$ & $7.83$ & $1.00$ & $1.00$ \\
\bottomrule
\end{tabular}
\caption{Scaling at $(a,b,\alpha)=(20.77,10,0.3)$, averaged over $M=1000$ replicates per $n$. DBM lies above its threshold and SBM below it. Entries report mean error, exact-recovery probability, runtime (seconds per replicate), and average iteration counts for the iterative variants.}
\label{tab:scaling-summary}
\end{table}

\section{Conclusion}\label{sec:conclusion}

In this work, we studied 
{exact recovery in the \emph{Data Block Model} (DBM), an extension of the stochastic block model that couples graph connectivity with node-associated side information.} Building on the Chernoff--Hellinger divergence framework { for SBMs in the logarithmic-degree regime}, we introduced the Chernoff--TV divergence and showed that it precisely characterizes the phase transition for exact recovery in the DBM under logarithmic-degree scaling of edges. We established matching achievability and converse results and provided an efficient { polynomial-time (in fact, near-linear-time) } algorithm that attains the information-theoretic threshold. {Simulations in the symmetric two-community setting corroborate the predicted phase transition and illustrate the benefits of incorporating node attributes.}

Our analysis clarifies when and how side information fundamentally improves recoverability { beyond what is possible from the graph alone.}.  
These results unify and refine several previously studied { side-information} models, including noisy labels, partially revealed labels, and node features, under a common divergence-based characterization. In a restricted {node-attributed SBM} setting, our results also revisit a recent sufficient condition for exact recovery and show that it is not necessary as stated, providing a corrected {sharp} threshold based on the Chernoff--TV divergence.

\section*{Acknowledgments}
Amir R. Asadi is supported by Leverhulme Trust grant ECF-2023-189 and
Isaac Newton Trust grant 23.08(b).

\clearpage
\bibliographystyle{plainnat}
\bibliography{SBM-side}

\end{document}